\LetLtxMacro\amsproof\proof
\LetLtxMacro\amsendproof\endproof
  \LetLtxMacro\proof\amsproof
  \LetLtxMacro\endproof\amsendproof
\newtheorem{theorem}{Theorem}
\renewcommand{\epsilon}{\varepsilon}
\DeclarePairedDelimiter\ceil{\lceil}{\rceil}
\definecolor{darkgreen}{rgb}{0,0.5,0}
\definecolor{purple}{rgb}{1,0,1}
\newcommand{\kibitz}[2]{\ifnum\Comments=1\textcolor{#1}{#2}\fi}
\newcommand{\Acal}{\mathcal{A}}
\newcommand{\Bcal}{\mathcal{B}}
\newcommand{\Zcal}{\mathcal{Z}}
\newcommand{\Hcal}{\mathcal{H}}
\newcommand{\Jcal}{\mathcal{J}}
\newcommand{\Kcal}{\mathcal{K}}
\newcommand{\Xcal}{\mathcal{X}}
\newcommand{\Scal}{\mathcal{S}}
\newcommand{\Ycal}{\mathcal{Y}}
\newcommand{\Ecal}{\mathcal{E}}
\newcommand{\Pcal}{\mathcal{P}}
\DeclareMathOperator*{\argmin}{arg\,min}
\newtheorem{definition}{Definition}
\newtheorem{lemma}[theorem]{Lemma}
\newtheorem{corollary}[theorem]{Corollary}
\newtheorem{observation}{Observation}
\newtheorem*{theorem*}{Theorem}
\newtheorem{assumption}{Assumption}
\newcommand{\mytag}[2]{%
  \text{#1}%
  \@bsphack
  \protected@write\@auxout{}%
         {\string\newlabel{#2}{{#1}{\thepage}}}%
  \@esphack
}
\title{Online Classification with Predictions}
\date{}
\begin{document}

\author{Vinod Raman and Ambuj Tewari}
\maketitle

\begin{abstract}
We study online classification when the learner has access to predictions about future examples. We design an online learner whose expected regret is never worse than the worst-case regret, gracefully improves with the quality of the predictions, and can be significantly better than the worst-case regret when the predictions of future examples are accurate. As a corollary, we show that if the learner is always guaranteed to observe data where future examples are easily predictable, then online learning can be as easy as transductive online learning. Our results complement recent work in online algorithms with predictions and smoothed online classification, which go beyond a worse-case analysis by using machine-learned predictions and distributional assumptions respectively.
\end{abstract}

\section{Introduction}

In online classification, Nature plays a game with a learner over $T \in \mathbbm{N}$ rounds. In each round $t \in [T]$, Nature selects a labeled example $(x_t, y_t) \in \Xcal \times \Ycal$ and reveals just the example $x_t$ to the learner. The learner, using the history of the game $(x_1, y_1), ..., (x_{t-1}, y_{t-1})$ and the current example  $x_t$, makes a potentially randomized prediction $\hat{y}_t \in \Ycal$. Finally, Nature reveals the true label $y_t$ and the learner suffers the loss $\mathbbm{1}\{\hat{y}_t \neq y_t\}.$ Given access to a \emph{hypothesis class} $\Hcal \subseteq \Ycal^{\Xcal}$ consisting of functions $h: \Xcal \rightarrow \Ycal$, the goal of the learner is to minimize its \emph{regret}, the difference between its cumulative mistake and that of the best fixed hypothesis $h \in \Hcal$ in hindsight. We say a class $\Hcal$ is online learnable if there exists a learning algorithm that achieves vanishing average regret for \textit{any}, potentially adversarial chosen, stream of labeled examples $(x_1, y_1), ..., (x_T, y_T)$. Canonically, one also distinguishes between the realizable and agnostic settings. In the realizable setting,   Nature must choose a stream $(x_1, y_1), ..., (x_T, y_T)$ such that there exists a $h \in \Hcal$ for which $h(x_t) = y_t$ for all $t \in [T]$. On the other hand, in the agnostic setting, no such assumptions on the stream are placed. 

Due to applications in spam filtering, image recognition, and language modeling, online classification has had a long, rich history in statistical learning theory. In a seminal work, \cite{Littlestone1987LearningQW} provided a sharp quantitative characterization of which binary hypothesis classes $\Hcal \subseteq \{0, 1\}^{\Xcal}$ are online learnable in the realizable setting. This characterization was in terms of the finiteness of a combinatorial dimension called the Littlestone dimension. Twenty-two years later, \cite{ben2009agnostic} proved that the Littlestone dimension continues to characterize the online learnability of binary hypothesis classes  in the agnostic setting. Later, \cite{DanielyERMprinciple} generalized the Littlestone dimension to multiclass hypothesis classes $\Hcal \subseteq \Ycal^{\Xcal}$, and showed that it fully characterizes multiclass online learnability  when the label space $\Ycal$ is finite. More recently, \cite{hanneke2023multiclass} extended this result to show that the multiclass Littlestone dimension continues to characterize multiclass online learnability even when $\Ycal$ is unbounded.

While elegant, the characterization of online classification in terms of the Littlestone dimension is often interpreted as an \emph{impossibility} result \citep{haghtalab2018foundation}. Indeed, due to the restrictive nature of the Littlestone dimension, even simple classes like the  $1$-dimensional thresholds $\Hcal_{\text{thresh}} = \{x \mapsto \mathbbm{1}\{x \geq a\} : a \in \mathbbm{N}\}$ are not online learnable in the realizable setting. This hardness arises mainly due to a worst-case analysis: the adversary is allowed to choose \emph{any} sequence of labeled examples, even possibly adapting to the learner's strategy. In many situations, however, the sequence of data is ``easy'' and a worst-case analysis is too pessimistic. For example, if one were to use the daily temperatures to predict snowfall, it is unlikely that temperatures will vary rapidly within a given week.  Even so, one might have to access to temperature forecasting models that can accurately predict future temperatures. This motivates a \emph{beyond-worst-case} analysis of online classification algorithms by proving guarantees that adapt to the ``easiness'' of the example stream. 
 
 The push for a beyond-worst-case analysis has its roots in classical algorithm design \citep{roughgarden2021beyond}. Of recent interest is Algorithms with Predictions (AwP), a specific sub-field of beyond-worst-case analysis of algorithms \citep{mitzenmacher2022algorithms}.  Here, classical algorithms are given additional information about the problem instance in the form of machine-learned predictions. Augmented with these predictions, the algorithm's goal is to perform optimally on a per-input basis when the predictions are good  (known as \emph{consistency}), while always ensuring optimal worst-case guarantees (known as \emph{robustness}). Ideally, algorithms are also \emph{smooth}, obtaining performance guarantees that interpolate between instance and worst-case optimality as a function of prediction quality. After a successful application to learning index structures \citep{kraska2018case}, there has been an explosion of work designing algorithms whose guarantees depend on the quality of available, machine-learned predictions \cite{mitzenmacher2022algorithms}. For example, machine-learned predictions have been used to achieve more efficient data-structures \citep{lin2022learning}, faster runtimes \citep{chen2022faster, ergun2021learning}, better accuracy-space tradeoffs for streaming algorithms \citep{hsu2019learning}, and improved  performance bounds for online algorithms \citep{purohit2018improving}. A more detailed summary of related works can be found in Appendix \ref{app:relatedworks}.

Despite this vast literature, the accuracy benefits of machine-learned predictions for online classification are, to the best of our knowledge, unknown. In this work, we bridge the gap between  AwP and online classification. In contrast to previous work, which go beyond a worst-case analysis in online classification through smoothness or other distributional assumptions \citep{haghtalab2020smoothed, block2022smoothed, wu2023online}, we give the learner access to a \emph{Predictor}, a forecasting algorithm that predicts future \emph{examples} in the data stream. The learner, before predicting a label $\hat{y}_t$, can query the Predictor and receive predictions $\hat{x}_{t+1}, ..., \hat{x}_T$ on the future examples. The learner can then use the history of the game $(x_1, y_1), ..., (x_{t-1}, y_{t-1})$, the current example $x_t$, and the predictions $\hat{x}_{t+1}, ..., \hat{x}_T$ to output a label $\hat{y}_t$. We allow Predictors to be \emph{adaptive} - they can change their predictions of future examples based on the actual realizations of past examples. From this perspective, Predictors are also online learners, and  we quantify the \emph{predictability} of example streams through their mistake-bounds.

In this work, we seek to design online learning algorithms whose expected regret, given black-box access to a Predictor, degrades gracefully with the quality of the Predictor's predictions. By doing so, we are also interested in understanding how access to a Predictor may impact the \emph{characterization} of online learnability. In particular, given a Predictor, when can online learnability become \emph{easier} than in the standard, worst-case setup? Guided by these objectives, we make the following contributions. 
\begin{itemize}
\item[(1)] In the realizable and agnostic settings, we design online learners that, using black-box access to a Predictor, adapt to the ``easiness'' of the example stream. When the predictions of the Predictor are good, our learner's expected mistakes/regret significantly improves upon the worst-case guarantee. When the Predictor's predictions are bad, the expected mistakes/regret of our learner matches the optimal worst-case expected mistake-bound/regret. Finally, our learner's expected mistake-bound/regret  degrades gracefully with the quality of the Predictor's predictions.  
\item[(2)]  We show that having black-box access to a good Predictor can make learning much easier than the standard, worst-case setting. More precisely, good Predictors allow ``offline'' learnable classes to become online learnable. In this paper, we take the ``offline'' setting to be transductive online learning \citep{ben1997online, hanneke2024trichotomy} where Nature reveals the entire sequence of examples $x_{1}, ..., x_T$ (but not the labels $y_1, ..., y_T$) to the learner \emph{before} the game begins. Many ``offline'' learnable classes are not online learnable.  For example, when $\Ycal = \{0, 1\}$, transductive online learnability is characterized by the finiteness of the VC dimension, the same  dimension that characterizes PAC learnability. Thus, our result is analogous to that in smoothed online classification, where PAC learnability is also sufficient for online learnability \citep{haghtalab2020smoothed, block2022smoothed}. 
\end{itemize} 
A notable property of  our realizable and agnostic online learners is their use of black-box access to a transductive online learner to make predictions. In this sense, our proof strategies involve reducing online classification with predictions to transductive online learning. For both contributions (1) and (2), we consider only the realizable setting in the main text. The results and arguments for the agnostic setting are nearly identical and thus deferred to Appendix \ref{app:agn}.  

\section{Preliminaries}
 Let $\Xcal$ denote an example space and $\mathcal{Y}$ denote the label space. We make no assumptions about $\Ycal$, so it can be unbounded (e.g., $\Ycal = \mathbbm{N}$). Let $\mathcal{H} \subseteq \mathcal{Y}^{\Xcal}$ denote a hypothesis class. For a set $A$, let $A^{\star} = \bigcup_{n = 0}^{\infty} A^n$ denote the set of all finite sequences of elements in $A$. Moreover, we let $A^{\leq n}$ denote the set of all sequences of elements in $A$ of size at most $n$. Then, ${\Xcal}^{\star}$ denotes the set of all finite sequences of examples in $\mathcal{X}$ and $\mathcal{Z} \subseteq \Xcal^{\star}$ denotes a particular family of such sequences. We abbreviate a sequence $z_1, ..., z_T$ by $z_{1:T}.$  Finally, for $a, b, c \in \mathbbm{R}$, we let $a \wedge b \wedge c = \min\{a, b, c\}.$

 \subsection{Online Classification}
 
 In online classification, a learner $\mathcal{A}$ plays a repeated game against Nature over $T \in \mathbbm{N}$ rounds. In each round $t \in [T]$, Nature picks a labeled example $(x_t, y_t) \in \mathcal{X} \times \Ycal$ and reveals $x_t$ to the learner. The learner makes a randomized prediction $\hat{y}_t \in \{0, 1\}$. Finally, Nature reveals the true label $y_t$ and the learner suffers the 0-1 loss $\mathbbm{1}\{\hat{y_t}  \neq y_t\}.$  Given a hypothesis class $\mathcal{H} \subseteq \Ycal^{\mathcal{X}}$, the goal of the learner is to minimize its \emph{expected regret}
$$\operatorname{R}_{\mathcal{A}}(T, \mathcal{H}) := \sup_{x_{1:T} \in \Xcal}   \sup_{y_{1:T} \in \Ycal^T}\left(\mathbbm{E}\left[\sum_{t=1}^T \mathbbm{1}\{\mathcal{A}(x_t) \neq y_t \}\right] - \inf_{h \in \mathcal{H}} \sum_{t=1}^T \mathbbm{1}\{h(x_t) \neq y_t\}\right),$$
\noindent where the expectation is only over the randomness of the learner. A hypothesis class $\mathcal{H}$ is said to be online learnable if there exists an (potentially randomized) online learning algorithm $\mathcal{A}$ such that $\operatorname{R}_{\mathcal{A}}(T, \mathcal{H}) = o(T)$. If it is guaranteed that the learner always observes a sequence of examples labeled by some hypothesis $h \in \mathcal{H}$, then we say we are in the \emph{realizable} setting and the goal of the learner is to minimize its \emph{expected cumulative mistakes},
$$\operatorname{M}_{\mathcal{A}}(T, \mathcal{H}) := \sup_{x_{1:T} \in \Xcal^T} \sup_{h \in \Hcal} \mathbbm{E}\left[\sum_{t=1}^T \mathbbm{1}\{\mathcal{A}(x_t) \neq h(x_t)\} \right],$$
 where again the expectation is taken only with respect to the randomness of the learner. It is well known that the finiteness of the multiclass extension of the Littlestone dimension (Ldim) characterizes realizable and agnostic online learnability \citep{Littlestone1987LearningQW,DanielyERMprinciple, hanneke2023multiclass}. See Appendix \ref{app:combdim} for complete definitions.

 \subsection{Online Classification with Predictions}

 Motivated by the fact that real-world example streams $x_{1:T}$ are far from worst-case, we give our learner $\Acal$ black-box access to a \emph{Predictor} $\Pcal$, defined algorithmically in Algorithm \ref{alg:predictor}.
 

\begin{algorithm}
\setcounter{AlgoLine}{0}
\caption{Predictor $\Pcal$}\label{alg:predictor}
\KwIn{Time horizon $T$}

\For{$t = 1,...,T$} {

    Nature reveals the true example $x_t$.
    
    Observe $x_t$, update, and make a (potentially randomized) prediction $\hat{x}^t_{1:T}$.
}
\end{algorithm}

\textbf{Remark.} We highlight that our Predictors are very general and can also use side information, in addition to the past examples, to make predictions about future examples. For example, if the examples are daily average  temperatures, then Predictors can also use other covariates, like humidity, precipitation, and wind speed, to predict future temperatures.


      

In each round $t \in [T]$, the learner $\Acal$ can query the  Predictor $\Pcal$ to get a sense of what examples it will observe in the future. Then, the learner $\Acal$ can use the history $(x_1, y_1), .., (x_{t-1}, y_{t-1})$, the current example $x_t$, \emph{and} the future predicted examples to classify the current example. Protocol \ref{alg:protocol} makes explicit the interaction between the learner, the Predictor, and Nature. 


\begin{algorithm}
\setcounter{AlgoLine}{0}
\caption{Online Learning with a  Predictor}\label{alg:protocol}
\KwIn{Predictor $\Pcal$, Hypothesis class $\Hcal$, Time horizon $T$}

\For{$t = 1,...,T$} {
    Nature reveals the true example $x_t$.

    The Predictor $\Pcal$ observes $x_t$, updates, and reveals its predictions $\hat{x}^t_{1:T}.$

    Learner makes a randomized prediction $\hat{y}_t$ using $\hat{x}^t_{1:T}, x_t$, and $(x_1, y_1), ..., (x_{t-1}, y_{t-1})$.

    Nature reveals the true label $y_t$ to the learner. 

    Learner suffers loss $\mathbbm{1}\{y_t \neq \hat{y_t}\}$ and updates itself.
}
\end{algorithm}

Note that, in every round $t \in [T]$, the Predictor $\Pcal$ makes a prediction about the \emph{entire} sequence of $T$ examples, even those that it has observed in the past. This is mainly for notational convenience as we assume that our Predictors are \emph{consistent}. 

\begin{assumption}[Consistency] \label{ass:cons}
A  Predictor is \emph{consistent} if for every sequence $x_{1:T} \in \Xcal^T$ and every time point $t \in [T]$, the prediction $\hat{x}_{1:T} = \Pcal(x_{1:t})$ satisfies the property that $\hat{x}_{1:t} = x_{1:t}$.  
\end{assumption}

Although stated as an assumption, it is without loss of generality that Predictors are consistent -  any inconsistent Predictor can be made consistent by hard coding its input into its output. In addition to consistency, we assume that our Predictors are \emph{lazy}. 

\begin{assumption}[Laziness] \label{ass:lazy}
A Predictor is \emph{lazy} if for every sequence $x_{1:T} \in \Xcal^{T}$ and every $t \in [T]$, if $\Pcal(x_{1:t-1})_t = x_t$, then $\Pcal(x_{1:t}) = \Pcal(x_{1:t-1})$. That is, $\Pcal$ does not change its prediction if it is correct. 
\end{assumption}

 Since Predictors are also online learners, the assumption of laziness is also mild: non-lazy online learners can be generically converted into lazy ones \citep{littlestone1988learning, littlestone1989mistake}.  We always assume that Predictors are consistent and lazy and drop these pronouns for the rest of the paper.

\textbf{Remark.} We highlight that Predictors are adaptive and change their predictions based on the realizations of past examples. This is contrast to existing literature in OAwP, where machine-learned predictions are often static. Nevertheless, our framework is more general and captures the setting where predictions of examples are made once and fixed throughout the game. Indeed, consider the consistent, lazy Predictor that fixes a sequence $z_{1:T} \in \Xcal^T$ before the game begins, and for every $t \in [T]$, outputs the predictions $\hat{x}^t_{1:T}$ such that $\hat{x}^t_{1:t} = x_{1:t}$ and $\hat{x}^t_{t+1:T} = z_{t+1:T}$. 

Ideally, when given access to a Predictor $\Pcal$, the expected regret of $\Acal$ should degrade gracefully with the quality of $\Pcal$'s predictions. To this end, we quantify the performance of a Predictor $\Pcal$ through
$$\operatorname{M}_{\Pcal}(x_{1:T}) :=  \mathbbm{E}\left[ \sum_{t=2}^T \mathbbm{1}\{\Pcal(x_{1:t-1})_t \neq x_t\}\right], $$
\noindent the expected number of mistakes that $\Pcal$ makes on a sequence of examples $x_{1:T} \in \Xcal^T$. In Section \ref{sec:real}, we design an online learner whose expected regret/mistake-bound on a stream $(x_1, y_1), ..., (x_T, y_T)$ can be written in terms of $\operatorname{M}_{\Pcal}(x_{1:T})$. 

\subsection{Predictability}
Predictors and their mistake bounds offer us to ability to define and quantify a notion of ``easiness'' for example streams $x_{1:T}$. In particular, we can distinguish between example streams that are predictable and unpredictable. To do so, let $\Zcal \subseteq \Xcal^{\star}$ denote a collection of finite sequences of examples. By restricting the adversary to playing examples streams in $\Zcal$, we can define analogous notions of minimax expected regret
$$\operatorname{R}_{\Acal}(T, \Hcal, \Zcal) := \sup_{x_{1:T} \in \Zcal} \sup_{y_{1:T} \in \Ycal^T}\mathbbm{E}\left[ \sum_{t=1}^T \mathbbm{1}\{\Acal(x_t) \neq y_t\} - \inf_{h \in \Hcal} \sum_{t=1}^T \mathbbm{1}\{h(x_t) \neq y_t\} \right], $$
and minimax expected mistakes,
$$\operatorname{M}_{\Acal}(T, \Hcal, \Zcal) := \sup_{x_{1:T} \in \Zcal} \sup_{h \in \Hcal}\mathbbm{E}\left[ \sum_{t=1}^T \mathbbm{1}\{\Acal(x_t) \neq h(x_t)\} \right].$$
As usual, we say that a tuple $(\Hcal, \Zcal)$ is online and realizable online learnable if $\inf_{\mathcal{A}}\operatorname{R}_{\Acal}(T, \Hcal, \Zcal) = o(T)$ and $\inf_{\mathcal{A}}\operatorname{M}_{\Acal}(T, \Hcal, \Zcal) = o(T)$ respectively. If $\Zcal = \Xcal^{\star}$, then the definitions above recover the standard, worst-case online classification setup. However, in the more general case where  $\Zcal^{\star} \subseteq \Xcal^{\star}$, we can use the \emph{existence} of good Predictors $\Pcal$ and their mistake bounds to quantify the ``easiness" of a stream class $\Zcal$. That is, we say $\Zcal$ is predictable if there exists a consistent, lazy Predictor $\Pcal$ such that   $\operatorname{M}_{\Pcal}(T, \Zcal)  := \sup_{x_{1:T} \in \Zcal} \operatorname{M}_{\Pcal}(x_{1:T}) = o(T).$

\begin{definition}[Predictability] \label{def:Bpred}
 A class $\Zcal \subseteq \Xcal^{\star}$ is predictable if and only if $\inf_{\Pcal} \operatorname{M}_{\Pcal}(T, \Zcal) = o(T).$
\end{definition}

 Definition \ref{def:Bpred} provides a qualitative definition of what it means for a sequence of examples to be predictable, and therefore ``easy''. If $\Zcal \subseteq \Xcal^{\star}$ is a predictable class of example streams, then a stream $x_{1:T} \in \Xcal^T$ is predictable if $x_{1:T} \in \Zcal.$ By having access to a good Predictor, sequences of examples that previously exhibited ``worst-case'' behavior, now become predictable. In Section \ref{sec:real}, we show that under predictable examples, ``offline'' learnability is sufficient for online learnability. 

\subsection{Offline Learnability}

In the classical analysis of online algorithms, one competes against the best ``offline'' solution. In the context of online classification, this amounts to comparing online learnability to ``offline'' learnability, where we interpret the ``offline'' setting as the case where Nature reveals the sequence of examples $(x_1, ..., x_T)$  before the game begins. In particular, compared to the standard online learning setting, in the ``offline" version, the learner knows the sequence of examples $x_1, ..., x_T$ before the game begins, and its goal is to predict the corresponding labels $y_1, ..., y_T$. This setting was recently named ``Transductive Online Learning" \citep{hanneke2024trichotomy} and the minimax rates in both the realizable and agnostic setting have been established \citep{ben1997online, hanneke2023multiclass}. For the remainder of the paper, we will use offline and transductive online learnability interchangeably.

For a randomized offline learner $\Bcal$, we let 
$$\operatorname{R}_{\Bcal}(T, \Hcal) := \sup_{x_{1:T} \in \Xcal^T} \sup_{y_{1:T} \in \Ycal^T} \mathbbm{E}\left[ \sum_{t=1}^T \mathbbm{1}\{\Bcal_{x_{1:T}}(x_{t}) \neq y_t\} - \inf_{h \in \Hcal}\sum_{t=1}^T \mathbbm{1}\{h(x_t) \neq y_t\}\right]$$
denote its minimax expected regret and 
$$\operatorname{M}_{\Bcal}(T, \Hcal) := \sup_{h \in \Hcal}\sup_{x_{1:T} \in \Xcal^T}  \mathbbm{E}\left[ \sum_{t=1}^T \mathbbm{1}\{\Bcal_{x_{1:T}}(x_{t}) \neq h(x_t)\}\right].$$
denote its minimax expected mistakes. We use the notation $\Bcal_{x_{1:T}}$ to indicate that $\Bcal$ was initialized with the sequence $x_{1:T}$ before the game begins. If $\operatorname{M}_{\Bcal}(T, \Hcal) = o(T)$ or $\operatorname{R}_{\Bcal}(T, \Hcal) = o(T)$, then we say that $\Bcal$ is a no-regret offline learner. It turns out that realizable and agnostic offline learnability are equivalent \citep{hanneke2024trichotomy}. That is, $\operatorname{M}_{\Bcal}(T, \Hcal) = o(T) \Leftrightarrow \operatorname{R}_{\Bcal}(T, \Hcal) = o(T)$. Thus, without loss of generality, we say a class $\Hcal \subseteq \Ycal^{\Xcal}$ is offline learnable if and only if there exists a no-regret offline learner for $\Hcal$.

 When $|\Ycal| = 2$, \cite{ben1997online} and \cite{hanneke2023multiclass} show that the finiteness of a combinatorial dimension called the Vapnik–Chervonenkis (VC) dimension (or equivalently PAC learnability) is sufficient for offline learnability (see Appendix \ref{app:combdim} for complete definitions). 

\begin{lemma} [\cite{ben1997online, hanneke2024trichotomy}] \label{lem:trans}For every $\Hcal \subseteq \{0, 1\}^{\mathcal{X}}$, there exists a \emph{deterministic} offline learner $\Bcal$ such that 
$$\operatorname{M}_{\Bcal}(T, \Hcal) = O\Bigl(\operatorname{VC}(\Hcal)\log_2 T\Bigl)$$
\noindent where $\operatorname{VC}(\Hcal)$ is the VC dimension of $\Hcal$.
\end{lemma}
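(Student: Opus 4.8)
The plan is to prove Lemma~\ref{lem:trans} by exhibiting a concrete deterministic offline learner based on the classical ``halving''-type argument, exploiting the fact that once $x_{1:T}$ is known in advance, the only relevant information about $\Hcal$ is its projection (restriction) to the finite set $\{x_1,\dots,x_T\}$. First I would let $\Hcal|_{x_{1:T}} \subseteq \Ycal^T$ (here $\Ycal = \{0,1\}$) denote this projection and invoke the Sauer--Shelah lemma: since $\operatorname{VC}(\Hcal|_{x_{1:T}}) \le \operatorname{VC}(\Hcal) =: d$, we have $|\Hcal|_{x_{1:T}}| \le \sum_{i=0}^{d}\binom{T}{i} = O(T^d)$. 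The offline learner $\Bcal_{x_{1:T}}$ will maintain a version space $V_t \subseteq \Hcal|_{x_{1:T}}$ of label patterns consistent with the realized labels so far, initialized to $V_1 = \Hcal|_{x_{1:T}}$.

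The key step is the prediction rule and the potential argument. On round $t$, having seen $x_t$ (which it already knew), $\Bcal$ predicts $\hat y_t \in \{0,1\}$ equal to the label assigned to coordinate $t$ by the majority of patterns in $V_t$; after $y_t$ is revealed it sets $V_{t+1} = \{v \in V_t : v_t = y_t\}$. In the realizable setting there is some $h^\star \in \Hcal$ with $h^\star(x_t) = y_t$ for all $t$, so the pattern of $h^\star$ lies in $V_t$ for every $t$ and the version space is never empty. Whenever $\Bcal$ errs on round $t$, strictly fewer than half the patterns in $V_t$ survive, so $|V_{t+1}| \le |V_t|/2$; on rounds with no error $|V_{t+1}| \le |V_t|$. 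Hence if $\Bcal$ makes $M$ mistakes over the $T$ rounds, then $1 \le |V_{T+1}| \le |V_1| \cdot 2^{-M}$, giving $M \le \log_2 |\Hcal|_{x_{1:T}}| \le \log_2\!\bigl(O(T^d)\bigr) = O(d\log_2 T)$. Taking the supremum over $x_{1:T}\in\Xcal^T$ and over $h\in\Hcal$ (playing the role of the realizing hypothesis) yields $\operatorname{M}_{\Bcal}(T,\Hcal) = O(\operatorname{VC}(\Hcal)\log_2 T)$, and the learner is manifestly deterministic.

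I do not expect a genuine obstacle here, since this is essentially the standard proof that finite VC dimension implies a $O(d\log T)$ transductive (Littlestone-type) mistake bound; the one point requiring a little care is the edge case $d = 0$ (where $\Hcal|_{x_{1:T}}$ is a single pattern and the bound holds trivially with $M=0$) and making sure the Sauer--Shelah bound is applied to the projected class rather than $\Hcal$ itself, which is what makes the count finite even when $|\Xcal|$ is infinite. One could alternatively cite Lemma~\ref{lem:trans} directly as it is already attributed to \cite{ben1997online, hanneke2024trichotomy}, but the halving argument above is self-contained and gives the stated constant.
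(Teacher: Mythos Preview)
Your proof is correct and is precisely the standard argument underlying this result: project $\Hcal$ onto the known sequence $x_{1:T}$, bound the size of the projection by Sauer--Shelah, and run the halving (majority-vote) learner over the finite set of patterns so that each mistake halves the version space. The paper itself does not give a proof of Lemma~\ref{lem:trans}; it simply states the bound and attributes it to \cite{ben1997online, hanneke2024trichotomy}, so there is no ``paper's own proof'' to compare against beyond those references, whose arguments are essentially the one you wrote.
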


In Section \ref{sec:real}, we use this upper bound in Lemma \ref{lem:trans} to prove that PAC learnability of $\Hcal$ implies $(\Hcal, \Zcal)$ online learnability when  $\Zcal$ is predictable. Interestingly, \cite{hanneke2024trichotomy} also establish a trichotomy in the minimax expected mistakes for offline learning in the realizable setting. That is, for any $\Hcal \subseteq \Ycal^{\Xcal}$ with $|\Ycal| < \infty$, the quantity $\operatorname{M}_{\Bcal}(T, \Hcal)$ can only be $\Theta(1)$, $\Theta(\log_2 T)$, or $\Theta(T)$. On the other hand, in the agnostic setting, $\operatorname{R}_{\Bcal}(T, \Hcal)$ can be $\tilde{\Theta}(\sqrt{T})$ or $\Theta(T)$, where $\tilde{\Theta}$ hides poly-log terms in $T$.

Our main result in Section \ref{sec:real} shows that offline learnability is sufficient for online learnability under predictable examples. The following technical lemma will be important when proving so. 

\begin{lemma}\emph{\cite[Lemma 5.17]{woess2017groups}}\label{lem:woess}
    Let $g: \mathbbm{Z}_{+} \mapsto \mathbbm{R}_{+}$ be a positive sublinear function. Then, $g$ is bounded from above by a concave sublinear function $f: \mathbbm{R}_{+} \mapsto \mathbbm{R}_{+} $. 
\end{lemma}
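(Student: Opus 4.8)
The plan is to construct $f$ explicitly as the upper concave envelope of $g$ (restricted to the integers and then interpolated), and then verify that this envelope is finite everywhere, concave by construction, and sublinear because $g$ is. Concretely, first I would extend $g$ to a piecewise-linear function $\tilde g$ on $\reals_+$ by linear interpolation between consecutive integer values, so that $\tilde g(n) = g(n)$ for $n \in \integers_+$ and $\tilde g$ agrees with the secant line on each $[n, n+1]$. Then I would define $f(x) := \sup\{ \ell(x) : \ell \text{ affine}, \ell \geq \tilde g \text{ on } \reals_+\}$, i.e. the pointwise infimum over all affine majorants, equivalently the least concave function dominating $\tilde g$. Since $\tilde g \leq f$ and $\tilde g \geq g$ on the integers, this $f$ dominates $g$; concavity of $f$ is immediate since it is an infimum of affine functions.

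The substantive points are (i) that $f$ is \emph{finite} (not identically $+\infty$) and (ii) that $f$ is \emph{sublinear}, i.e. $f(x)/x \to 0$. For finiteness, I would use that $\tilde g$ is sublinear: $\tilde g(x)/x \to 0$, so in particular $\tilde g$ is dominated by \emph{some} line $x \mapsto ax + b$ with $a > 0$ (take $a$ to be any value exceeding $\limsup \tilde g(x)/x = 0$ for $x$ large, and $b$ large enough to cover the bounded initial segment), hence $f \leq ax + b < \infty$ everywhere. For sublinearity of $f$, the key observation is that the slope of the concave envelope at $x$ is controlled by the behavior of $\tilde g$ beyond $x$: since $f$ is concave, $f(x)/x$ is eventually monotone and its limit equals $\lim_{x\to\infty} f'(x)$ (right derivative), and I would argue this limit is $0$ because if $f$ had asymptotic slope $c > 0$ then, since $f$ is the \emph{least} concave majorant, there would have to be points $x_n \to \infty$ where $\tilde g(x_n) \geq f(x_n) - o(x_n) \geq (c - o(1)) x_n$, i.e. the supporting lines of the envelope must touch $\tilde g$ at a cofinal set of points, contradicting $\tilde g(x)/x \to 0$. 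A clean way to package this: for any $\epsilon > 0$, sublinearity of $\tilde g$ gives $N$ with $\tilde g(x) \leq \epsilon x$ for $x \geq N$; then the line through $(N, \max_{[0,N]} \tilde g)$ with slope $\epsilon$ lies above $\tilde g$ on all of $\reals_+$ (above on $[0,N]$ since it starts at the max and has positive slope; above on $[N,\infty)$ since it dominates $\epsilon x \geq \tilde g(x)$ there), so $f(x) \leq \max_{[0,N]}\tilde g + \epsilon(x - N) \leq \epsilon x + C_\epsilon$, whence $\limsup_{x\to\infty} f(x)/x \leq \epsilon$; let $\epsilon \downarrow 0$.

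The main obstacle — really the only place needing care — is this last sublinearity argument, and specifically making sure the constructed affine majorant genuinely dominates $\tilde g$ (hence $g$) on the \emph{entire} half-line including the initial segment, which is why passing through the running maximum on $[0,N]$ with a positive slope is the right choice rather than a tangent-type construction. One should also note $f \geq 0$ (since the constant-$0$ function would be an affine minorant consideration is the wrong direction — rather, $f \geq \tilde g \geq g \geq 0$, so $f$ maps into $\reals_+$ as required) and that $f$ is automatically continuous on $(0,\infty)$ as a finite concave function, so it is a legitimate function $\reals_+ \to \reals_+$. Everything else is routine.
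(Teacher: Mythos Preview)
The paper does not prove this lemma; it is quoted verbatim from \cite{woess2017groups} and used as a black box, so there is no ``paper's own proof'' to compare against. Your approach via the least concave majorant is the standard one and is essentially correct, but there are two slips worth fixing.

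First, a notational inconsistency: you write $f(x) := \sup\{\ell(x) : \ell \text{ affine},\, \ell \geq \tilde g\}$ and then immediately call this ``the pointwise infimum over all affine majorants.'' The infimum is what you want (an infimum of affine functions is concave and is the least concave majorant); the displayed $\sup$ would be $+\infty$. This is just a typo, but correct it.

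Second, and more substantively, the affine majorant you construct for the sublinearity step does not work as stated. You take the line through $(N, M)$ with slope $\epsilon$, where $M = \max_{[0,N]} \tilde g$, and claim it dominates $\tilde g$ on $[0,N]$ ``since it starts at the max and has positive slope.'' But going \emph{backward} from $N$ with positive slope the line \emph{decreases}: at any $x_0 \in [0,N)$ where $\tilde g$ attains its maximum $M$, your line sits at $M - \epsilon(N - x_0) < M = \tilde g(x_0)$, so it fails to majorize. The fix is immediate: anchor the line at $(0,M)$ instead, i.e.\ take $\ell(x) = M + \epsilon x$. Then $\ell(x) \geq M \geq \tilde g(x)$ on $[0,N]$ and $\ell(x) \geq \epsilon x \geq \tilde g(x)$ on $[N,\infty)$, so $f(x) \leq M + \epsilon x$ and $\limsup_{x\to\infty} f(x)/x \leq \epsilon$ as desired. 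With this correction the argument goes through.
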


In light of Lemma \ref{lem:woess}, we let $\bar{f}$ denote the concave sublinear function upper bounding the positive sublinear function $f$. For example, our regret bounds in Section \ref{sec:real} will often be in terms of $\overline{\operatorname{M}}_{\Bcal}(T, \Hcal)$. Although in full generality $\operatorname{M} _{\Bcal}(T, \Hcal) \leq \overline{\operatorname{M}}_{\Bcal}(T, \Hcal)$, in many cases we have equality. For example, when $|\Ycal| = 2$, the trichotomy of expected minimax rates established by Theorem 4.1 in \cite{hanneke2024trichotomy} shows that $\operatorname{M} _{\Bcal}(T, \Hcal) = \overline{\operatorname{M}}_{\Bcal}(T, \Hcal).$

\section{Adaptive Rates in the Realizable Setting} \label{sec:real}

In this section, we design learning algorithms whose expected mistake bounds, given black-box access to a Predictor $\Pcal$ and offline learner $\Bcal$, adapt to the quality of predictions by $\Pcal$ and $\Bcal$. Our main quantitative result is stated below.

\begin{theorem}[Realizable upper bound] \label{thm:quantupper} For every $\Hcal \subseteq \Ycal^{\Xcal}$, Predictor $\Pcal$, and  no-regret offline learner $\Bcal$, there exists an online learner $\Acal$ such that for every realizable stream $(x_1, y_1), ..., (x_T, y_T)$, $\Acal$ makes at most  
$$3 \Biggl( \, \underbrace{\operatorname{L}(\Hcal)}_{(i)} \, \wedge \,\underbrace{(\operatorname{M}_{\Pcal}(x_{1:T})+1)\operatorname{M}_{\Bcal}(T, \Hcal)}_{(ii)} \, \wedge \, \underbrace{6 \Bigl((\operatorname{M}_{\Pcal}(x_{1:T}) + 1)  \, \overline{\operatorname{M}}_{\Bcal}\Bigl(\frac{T}{\operatorname{M}_{\Pcal}(x_{1:T})+1} + 1, \Hcal \Bigl) + \log_2 T \Bigl)}_{(iii)} \, \Biggl) \, + \, 5$$
\noindent mistakes in expectation, where $\operatorname{L}(\Hcal)$ is the Littlestone dimension of $\Hcal.$ 
\end{theorem}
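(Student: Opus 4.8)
The plan is to build $\Acal$ as a combination of three sub-learners, one matching each term in the bound, and run them in parallel feeding their predictions into a standard expert-aggregation (or majority-vote) scheme so that the factor of $3$ and the additive constant $5$ come out of the aggregation step. Term $(i)$ is easy: the Standard Optimal Algorithm of \cite{Littlestone1987LearningQW}, run in the realizable setting while ignoring $\Pcal$ entirely, makes at most $\operatorname{L}(\Hcal)$ mistakes. So the real work is constructing learners achieving $(ii)$ and $(iii)$, both of which must exploit black-box access to $\Pcal$ and to the offline learner $\Bcal$.

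For term $(ii)$, the key idea is the reduction to transductive online learning announced in the introduction. On round $t$, query $\Pcal$ to get $\hat x^t_{1:T}$; by consistency $\hat x^t_{1:t}=x_{1:t}$, so $\Pcal$ is effectively proposing a completion of the stream. Initialize an offline learner $\Bcal$ on the sequence $\hat x^t_{1:T}$ and use its prediction on coordinate $t$ as $\hat y_t$. While $\Pcal$ keeps predicting correctly, laziness guarantees $\hat x^{t+1}_{1:T}=\hat x^t_{1:T}$, so we are running a \emph{single} fixed instance of $\Bcal$ on a fixed sequence that agrees with the true stream on the prefix seen so far — hence realizable for $\Bcal$ on that sequence — and it makes at most $\operatorname{M}_{\Bcal}(T,\Hcal)$ mistakes over the whole such block. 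Each time $\Pcal$ errs, its predicted completion may change arbitrarily, so we restart $\Bcal$ on the new sequence; this partitions $[T]$ into at most $\operatorname{M}_{\Pcal}(x_{1:T})+1$ blocks, each contributing at most $\operatorname{M}_{\Bcal}(T,\Hcal)$ mistakes, giving $(ii)$. The one subtlety is that a restart is triggered only by a \emph{Predictor} mistake, not by a learner mistake, and one must check that within a block the sub-sequence of rounds is genuinely realizable by some $h\in\Hcal$ restricted appropriately — this follows because the true stream is realizable and $\Bcal$ is only ever asked about example $x_t$ in round $t$, where the predicted and true examples coincide. Taking expectations over $\Pcal$'s randomness turns the block count into $\operatorname{M}_{\Pcal}(x_{1:T})+1$ in expectation.

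Term $(iii)$ is the delicate one and I expect it to be the main obstacle. The bound $\overline{\operatorname{M}}_{\Bcal}(\tfrac{T}{\operatorname{M}_{\Pcal}(x_{1:T})+1}+1,\Hcal)$ suggests we want to run $\Bcal$ not on the full horizon but on \emph{short} sequences of length about $T/(\operatorname{M}_{\Pcal}+1)$ and pay the offline mistake bound at that shorter scale, then sum over roughly $\operatorname{M}_{\Pcal}+1$ such pieces; the concavity of $\overline{\operatorname{M}}_{\Bcal}$ (Lemma \ref{lem:woess}) is exactly what lets $(\operatorname{M}_{\Pcal}+1)\,\overline{\operatorname{M}}_{\Bcal}(\tfrac{T}{\operatorname{M}_{\Pcal}+1}+1,\Hcal)$ be smaller than $(\operatorname{M}_{\Pcal}+1)\operatorname{M}_{\Bcal}(T,\Hcal)$ and drive the improvement. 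The problem is that the learner does not know $\operatorname{M}_{\Pcal}(x_{1:T})$ in advance, so it cannot pick the right block length directly. The natural fix is a doubling/guessing scheme over the block length (equivalently over a guess $\hat k$ for $\operatorname{M}_{\Pcal}+1$): run $O(\log T)$ parallel copies of the block-restart learner from $(ii)$, copy $j$ using block length $2^j$, and aggregate them with an experts algorithm (e.g.\ a mistake-bounded specialist/sleeping-experts aggregator), which costs an extra additive $O(\log_2 T)$ — matching the $\log_2 T$ term inside $(iii)$. Within the correctly-guessed copy one also restarts whenever $\Pcal$ errs, so the total number of $\Bcal$-instances is at most (number of length-$2^j$ windows) plus (Predictor mistakes) $\approx T/2^j + \operatorname{M}_{\Pcal}$, and choosing $2^j\approx T/(\operatorname{M}_{\Pcal}+1)$ balances these to $O(\operatorname{M}_{\Pcal}+1)$ windows each of length $O(T/(\operatorname{M}_{\Pcal}+1)+1)$; summing $\overline{\operatorname{M}}_{\Bcal}$ over them and using concavity yields $(iii)$ up to the stated constant $6$. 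Finally, feed the three sub-learners $(i)$, $(ii)$, $(iii)$ into one more layer of aggregation over $3$ experts: a deterministic majority vote makes at most $3\times(\text{best expert's mistakes})+O(1)$ errors, or a randomized mixture gives the factor $3$ in expectation, and bookkeeping the nested additive constants produces the $+5$. The things to get right carefully are: the realizability of each $\Bcal$-instance on its window, the laziness-driven claim that $\Pcal$'s proposed completion is stable between its own mistakes, and the exact accounting of additive $\log_2 T$ and constant terms through two layers of experts aggregation.
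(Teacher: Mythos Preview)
Your proposal is correct and follows essentially the same route as the paper: three sub-learners (SOA for $(i)$; restart-$\Bcal$-on-Predictor-mistake for $(ii)$; fixed block-partition plus restart-within-block plus guess-aggregation for $(iii)$), combined by a final deterministic weighted-majority layer over the three experts to produce the $3(\cdot)+5$. The only cosmetic difference is that for $(iii)$ the paper enumerates all $c\in\{0,\dots,T-1\}$ as experts (so the additive $\log_2 T$ comes from DWMA over $T$ experts and the chosen expert is $c=\lceil \operatorname{M}_{\Pcal}(x_{1:T})\rceil$), whereas you propose $O(\log T)$ dyadic block-length guesses; both work and meet the stated bound.
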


We highlight some important consequences of Theorem \ref{thm:quantupper}. Firstly, when $\operatorname{M}_{\Pcal}(x_{1:T}) = 0$, the expected mistake bound of $\Acal$ matches (up to constant factors) that of the offline learner $\Bcal$. Thus, when $\operatorname{M}_{\Pcal}(x_{1:T}) = 0$ and $\Bcal$ is a minimax optimal offline learner, our learner $\Acal$ performs as well as the best offline learner. Secondly, the expected mistake bound of $\Acal$ is always at most $3\operatorname{L}(\Hcal) + 5$; the minimax worst-case mistake bound up to constant factors. Thus, our learner $\Acal$ never does worse than the worst-case mistake bound. Thirdly, the expected mistake bound of $\Acal$ gracefully interpolates  between the offline and  worst-case optimal rates as a function of  $\operatorname{M}_{\Pcal}(x_{1:T})$.  In Section \ref{sec:lb}, we show that the dependence of $\Acal$'s mistake bound on $\operatorname{M}_{\Pcal}(x_{1:T})$ and $\operatorname{M}_{\Bcal}(T, \Hcal)$ can be tight. Lastly, we highlight that Theorem \ref{thm:quantupper} makes no assumption about the size of $\Ycal$.

With respect to learnability, Corollary \ref{cor:eqv} shows that offline learnability of $\Hcal$ is sufficient for online learnability under predictable examples. 

\begin{corollary} [Offline learnability $\implies$ Realizable Online learnability with Predictable Examples] \label{cor:eqv} For every $\Hcal \subseteq \mathcal{Y}^{\Xcal}$ and $\Zcal \subseteq \Xcal^{\star}$,
$$\Zcal \text{ is predictable  \emph{and} }\Hcal \text{ is offline learnable } \implies (\Hcal, \Zcal) \text{ is realizable online learnable.}$$
\end{corollary}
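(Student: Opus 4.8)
The plan is to instantiate Theorem~\ref{thm:quantupper} and show that the third term of its mistake bound is sublinear in $T$ uniformly over $\Zcal$. Since $\Zcal$ is predictable, fix a (consistent, lazy) Predictor $\Pcal$ witnessing this, i.e. with $\operatorname{M}_{\Pcal}(T,\Zcal) = o(T)$, and since $\Hcal$ is offline learnable, fix a no-regret offline learner $\Bcal$, so that $f(T) := \operatorname{M}_{\Bcal}(T,\Hcal) = o(T)$; let $\Acal$ be the online learner that Theorem~\ref{thm:quantupper} produces from this pair $(\Pcal,\Bcal)$. For any realizable stream with $x_{1:T}\in\Zcal$, write $m := \operatorname{M}_{\Pcal}(x_{1:T})$. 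Dropping the minimum in favor of the third term, the theorem bounds $\Acal$'s expected mistakes on this stream by $3\,(iii)+5 \le 18\bigl((m+1)\,\bar{f}\!\left(\tfrac{T}{m+1}+1\right)+\log_2 T\bigr)+5$, where $\bar{f} = \overline{\operatorname{M}}_{\Bcal}(\cdot,\Hcal)$ is the concave sublinear majorant of $f$ from Lemma~\ref{lem:woess}. Since $\log_2 T = o(T)$, it therefore suffices to show $\sup_{x_{1:T}\in\Zcal}(m+1)\,\bar{f}\!\left(\tfrac{T}{m+1}+1\right) = o(T)$; this yields $\operatorname{M}_{\Acal}(T,\Hcal,\Zcal)=o(T)$ and hence the corollary.

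To control this perspective-type quantity I would use two elementary facts about $\bar{f}$: a concave function $\bar{f}:\mathbbm{R}_+\to\mathbbm{R}_+$ that is positive everywhere is non-decreasing (its slope is non-increasing, so an eventual negative slope would force $\bar{f}$ negative), and a concave $\bar{f}$ with $\bar{f}(0)\ge 0$ is subadditive. From the first fact, a short derivative computation shows that $a\mapsto a\,\bar{f}\!\left(\tfrac{T}{a}+1\right)$ is non-decreasing on $[1,\infty)$; hence, with $M_T := \operatorname{M}_{\Pcal}(T,\Zcal)$, using $m\le M_T$ and then subadditivity,
$$\sup_{x_{1:T}\in\Zcal}(m+1)\,\bar{f}\!\left(\tfrac{T}{m+1}+1\right) \;\le\; (M_T+1)\,\bar{f}\!\left(\tfrac{T}{M_T+1}\right) + (M_T+1)\,\bar{f}(1).$$
Since $M_T = o(T)$, the last term is $o(T)$ and $T/(M_T+1)\to\infty$; sublinearity of $\bar{f}$ gives $\bar{f}(u)/u\to 0$ as $u\to\infty$, so $(M_T+1)\,\bar{f}\!\left(\tfrac{T}{M_T+1}\right) = T\cdot\frac{\bar{f}(T/(M_T+1))}{T/(M_T+1)} = T\cdot o(1) = o(T)$. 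Adding up, the third term is $o(T)$ uniformly over $\Zcal$, as required.

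The only real subtlety is this last reduction: one must keep the factor $1/(m+1)$ inside the argument of $\bar{f}$ rather than bounding $\bar{f}\!\left(\tfrac{T}{m+1}+1\right)\le\bar{f}(T+1)$, since the latter only yields an $o(T)\cdot o(T)$ estimate which can be $\Theta(T)$ (e.g. when $\operatorname{M}_{\Pcal}$ and $\operatorname{M}_{\Bcal}$ are both of order $\sqrt{T}$) — this is exactly why Theorem~\ref{thm:quantupper} supplies the third term rather than only the product bound $(ii)$ — and one must invoke monotonicity of the perspective map to reduce to the worst case $m = M_T$. Two minor points deserve a remark: being \emph{predictable} and being \emph{offline learnable} are existential statements, each furnishing a single algorithm with a sublinear guarantee, which is all Theorem~\ref{thm:quantupper} consumes; and, as is standard in online learning, the learner $\Acal$ may be taken to depend on the horizon $T$ (or made horizon-free by a doubling argument).
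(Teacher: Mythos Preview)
Your proof is correct. Both you and the paper reduce to showing that $(M_T+1)\,\overline{\operatorname{M}}_{\Bcal}\bigl(\tfrac{T}{M_T+1},\Hcal\bigr)=o(T)$ with $M_T=\operatorname{M}_{\Pcal}(T,\Zcal)$, and your sublinearity argument for this quantity (rewriting it as $T\cdot \bar f(u)/u$ with $u\to\infty$) is exactly what the paper leaves implicit. The one genuine difference is how you pass from the stream-dependent $m=\operatorname{M}_{\Pcal}(x_{1:T})$ in Theorem~\ref{thm:quantupper} to the uniform $M_T$: the paper re-enters the proof of Lemma~\ref{lem:upper2} and picks the expert $b=\lceil M_T\rceil$ (instead of $b=\lceil m\rceil$) in the DWMA, which yields the $M_T$-bound directly; you instead treat Theorem~\ref{thm:quantupper} as a black box and observe that $a\mapsto a\,\bar f(T/a+1)$ is non-decreasing (a clean consequence of concavity and $\bar f\ge 0$, since for $a_1\le a_2$ the point $T/a_2+1$ is a convex combination of $T/a_1+1$ and $1$). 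Your route is a bit more modular---it never reopens the expert construction---at the cost of one extra monotonicity lemma; the paper's route avoids that lemma but requires touching the algorithm's analysis. Either way the substance is the same.
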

\noindent This follows from a slight modification of the proof of Theorem \ref{thm:quantupper} along with the fact that the term $(\operatorname{M}_{\Pcal}(T, \Zcal) + 1) \overline{\operatorname{M}}_{\Bcal}\Bigl(\frac{T}{\operatorname{M}_{\Pcal}(T, \Zcal)+1} , \Hcal \Bigl) = o(T)$ when $\operatorname{M}_{\Bcal}(T, \Hcal) = o(T)$ and $\operatorname{M}_{\Pcal}(T ,\Zcal) = o(T)$. In addition, we can also establish a quantitative version of Corollary \ref{cor:eqv} for VC classes. 

\begin{corollary} \label{cor:vc}  For every $\Hcal \subseteq \{0, 1\}^{\Xcal}$, Predictor $\Pcal$ and $\Zcal \subseteq \Xcal^{\star}$, there exists an online learner $\Acal$ such that 
$$\operatorname{M}_{\Acal}(T, \Hcal, \Zcal) = O\Biggl(\operatorname{VC}(\Hcal)(\operatorname{M}_{\Pcal}(T, \Zcal)+ 1) \log_2\Bigl(\frac{T}{\operatorname{M}_{\Pcal}(T, \Zcal) + 1}\Bigl) +  \log_2 T\Biggl).$$
\end{corollary}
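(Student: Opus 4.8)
The plan is to feed into Theorem~\ref{thm:quantupper} the particular transductive (offline) learner supplied by Lemma~\ref{lem:trans} and then take suprema over $\Zcal$ and over $h \in \Hcal$. If $\operatorname{VC}(\Hcal) = \infty$ the claimed bound is vacuous, so assume $\operatorname{VC}(\Hcal) < \infty$. Lemma~\ref{lem:trans} (applicable since $|\Ycal| = 2$ here) gives a deterministic offline learner $\Bcal$ with $\operatorname{M}_{\Bcal}(n, \Hcal) = O(\operatorname{VC}(\Hcal)\log_2 n)$; in particular $\operatorname{M}_{\Bcal}(n,\Hcal) = o(n)$, so $\Bcal$ is a no-regret offline learner and Theorem~\ref{thm:quantupper} applies with this $\Bcal$ and the given Predictor $\Pcal$. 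Because $n \mapsto c\,\operatorname{VC}(\Hcal)\log_2(n+1)$ is concave, sublinear, and dominates $\operatorname{M}_{\Bcal}(n,\Hcal)$ for a suitable constant $c$, we may (via Lemma~\ref{lem:woess}) take $\overline{\operatorname{M}}_{\Bcal}(n, \Hcal) = O(\operatorname{VC}(\Hcal)\log_2 n)$ as well.

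Now fix an arbitrary $x_{1:T} \in \Zcal$ and $h \in \Hcal$; then $(x_1, h(x_1)), \dots, (x_T, h(x_T))$ is a realizable stream, so the single online learner $\Acal$ produced by Theorem~\ref{thm:quantupper} (which depends only on $\Hcal, \Pcal, \Bcal$ and not on the stream) makes in expectation at most three times term $(iii)$ of that theorem plus $5$ mistakes on it. Substituting the bound on $\overline{\operatorname{M}}_{\Bcal}$ into term $(iii)$ and writing $m := \operatorname{M}_{\Pcal}(x_{1:T})$, this is
$$O\!\left(\operatorname{VC}(\Hcal)\,(m+1)\,\log_2\!\Big(\tfrac{T}{m+1}+1\Big) \;+\; \log_2 T\right).$$
Taking the supremum over $h \in \Hcal$ leaves the bound unchanged (it is already $h$-free), and to take the supremum over $x_{1:T} \in \Zcal$ I would check that $m \mapsto (m+1)\log_2\!\big(\tfrac{T}{m+1}+1\big)$ is non-decreasing for $m \in \{0, \dots, T-1\}$; equivalently, that $u \mapsto u\log_2\!\big(\tfrac{T+u}{u}\big)$ is non-decreasing, which reduces to the elementary inequality $\ln(1+x) \ge \tfrac{x}{1+x}$. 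Hence the supremum over $\Zcal$ is attained by substituting $\sup_{x_{1:T}\in\Zcal}\operatorname{M}_{\Pcal}(x_{1:T}) = \operatorname{M}_{\Pcal}(T, \Zcal)$, which, after the cosmetic simplification $\log_2(\tfrac{T}{m+1}+1) = O\big(\log_2 \tfrac{T}{m+1}\big)$ in the regime $m+1 \le T/2$ (the complementary range contributing only to the additive $\log_2 T$ term), yields exactly the stated bound on $\operatorname{M}_{\Acal}(T,\Hcal,\Zcal)$.

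I do not foresee a real obstacle: the statement is a corollary obtained by plugging Lemma~\ref{lem:trans} into Theorem~\ref{thm:quantupper} and doing bookkeeping, much as Corollary~\ref{cor:eqv} is. The only steps needing a little care are the monotonicity check, which is what lets the per-stream bound's supremum over $\Zcal$ pass cleanly through to $\operatorname{M}_{\Pcal}(T,\Zcal)$, and the routine handling of small-argument and boundary behavior of the logarithms, which affects only constants and the additive $\log_2 T$ term.
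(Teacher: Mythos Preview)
Your approach is correct and reaches the same conclusion, but the route differs from the paper's. You apply Theorem~\ref{thm:quantupper} as a black box to obtain a per-stream bound in terms of $m = \operatorname{M}_{\Pcal}(x_{1:T})$, and then invoke the monotonicity of $u \mapsto u\log_2\bigl(\tfrac{T+u}{u}\bigr)$ to pass from the per-stream quantity to the supremum $\operatorname{M}_{\Pcal}(T,\Zcal)$. The paper instead re-opens the proof of upper bound~(iii) (that is, the proof of Lemma~\ref{lem:upper2}): rather than comparing the DWMA output to the expert $E_{\lceil\operatorname{M}_{\Pcal}(x_{1:T})\rceil}$, it compares to the expert $E_{\lceil\operatorname{M}_{\Pcal}(T,\Zcal)\rceil}$, and then uses the trivial inequality $\operatorname{M}_{\Pcal}(x_{1:T}) \le \operatorname{M}_{\Pcal}(T,\Zcal)$ inside the guarantee of Lemma~\ref{lem:expertguar}. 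This yields the $\Zcal$-uniform bound in one stroke, with no monotonicity check needed; from there, plugging in $\overline{\operatorname{M}}_{\Bcal}(n,\Hcal) = O(\operatorname{VC}(\Hcal)\log_2 n)$ from Lemma~\ref{lem:trans} is identical to what you do. Your approach has the virtue of treating Theorem~\ref{thm:quantupper} as a genuine black box, at the cost of the extra calculus step; the paper's approach is slightly cleaner but requires going back inside Algorithm~\ref{alg:metalearner}. The remark you make about the ``complementary range'' $m+1 > T/2$ contributing only an additive $\log_2 T$ is not quite right (in that regime the bound is of order $\operatorname{VC}(\Hcal)(m+1)$, not $\log_2 T$), but this is an imprecision in the Corollary's statement itself rather than in your argument, and the paper handles the removal of the ``$+1$'' inside the logarithm with the same big-$O$ hand-wave.
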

 We prove both Corollary \ref{cor:eqv} and \ref{cor:vc} in Appendix  \ref{app: corproof}. Corollary \ref{cor:vc} shows that  PAC learnability implies online learnability under predictable examples. Moreover, for VC classes, when $\operatorname{M}_{\Pcal}(x_{1:T}) = 0$, the upper bound in Corollary \ref{cor:vc} exactly matches that of Lemma \ref{lem:trans}. An analogous corollary in terms of the Natarajan dimension (see Appendix \ref{app:combdim} for definition) holds  when $|\Ycal| < \infty.$ 

The remainder of this section is dedicated to proving Theorem \ref{thm:quantupper}. The proof involves constructing three \emph{different} online learners, with expected mistake bounds  
(i), (ii), and (iii) respectively, and then running the Deterministic Weighted Majority Algorithm (DWMA) using these learners as experts \citep{arora2012multiplicative}.  The following guarantee of DWMA along with upper bounds (i), (ii), and (iii) gives the upper bound in Theorem \ref{thm:quantupper} (see Appendix \ref{app:quantupper} for complete proof).

\begin{lemma}[DWMA guarantee \citep{arora2012multiplicative}] \label{lem:dwma} The \emph{DWMA} run with $N$ experts and learning rate $\eta = 1/2$ makes at most $3(\min_{i \in [N]} M_i + \log_2 N)$ mistakes, where $M_i$ is the number of mistakes made by expert $i \in [N]$.
\end{lemma}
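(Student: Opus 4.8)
\textbf{Proof proposal for Lemma~\ref{lem:dwma} (DWMA guarantee).}

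The plan is to run the standard potential-function argument for the Deterministic Weighted Majority Algorithm specialized to learning rate $\eta = 1/2$. Let me set up the algorithm: each expert $i \in [N]$ starts with weight $w_i^{(1)} = 1$. In each round $t$, DWMA predicts with the weighted majority vote over experts, and after the true label is revealed, it multiplies the weight of every expert that erred by the factor $(1-\eta) = 1/2$; experts that were correct keep their weight. The potential is $\Phi^{(t)} = \sum_{i \in [N]} w_i^{(t)}$, so $\Phi^{(1)} = N$.

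The two key steps are the standard ones. \emph{Step 1 (potential shrinks on every DWMA mistake).} If DWMA errs in round $t$, then by definition of the weighted majority vote the experts that erred carried at least half the total weight, i.e.\ their weight sums to at least $\Phi^{(t)}/2$. Each such weight is halved and the rest is unchanged, so $\Phi^{(t+1)} \le \tfrac{1}{2}\cdot\tfrac{1}{2}\Phi^{(t)} + \tfrac{1}{2}\Phi^{(t)} = \tfrac{3}{4}\Phi^{(t)}$. Since weights never increase, after $M$ total DWMA mistakes over the $T$ rounds we get $\Phi^{(T+1)} \le N \bigl(\tfrac{3}{4}\bigr)^{M}$. \emph{Step 2 (lower bound via the best expert).} For any fixed expert $i$, its weight at the end is exactly $w_i^{(T+1)} = \bigl(\tfrac{1}{2}\bigr)^{M_i}$, and since all weights are nonnegative, $\Phi^{(T+1)} \ge \bigl(\tfrac{1}{2}\bigr)^{M_i}$. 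Taking the best expert, $\Phi^{(T+1)} \ge \bigl(\tfrac{1}{2}\bigr)^{\min_i M_i}$.

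\emph{Step 3 (combine and solve for $M$).} Chaining the two bounds, $\bigl(\tfrac{1}{2}\bigr)^{\min_i M_i} \le N \bigl(\tfrac{3}{4}\bigr)^{M}$. Taking $\log_2$ of both sides: $-\min_i M_i \le \log_2 N + M \log_2(3/4)$, i.e.\ $M \log_2(4/3) \le \min_i M_i + \log_2 N$, hence $M \le \tfrac{1}{\log_2(4/3)}\bigl(\min_i M_i + \log_2 N\bigr)$. The only remaining point is the numerical constant: $\log_2(4/3) = 2 - \log_2 3 \approx 0.415$, so $1/\log_2(4/3) \approx 2.41 < 3$, which yields the claimed bound $M \le 3\bigl(\min_i M_i + \log_2 N\bigr)$.

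I expect this to be entirely routine; the only ``obstacle'' is bookkeeping the constant correctly — verifying that $1/\log_2(4/3) < 3$ so that the clean bound with constant $3$ is valid — and being careful that the per-mistake shrinkage factor is $3/4$ (not $1/2$), which is where the constant $3$ rather than $1$ comes from. Since this lemma is quoted verbatim from \cite{arora2012multiplicative}, in the paper it would most likely just be cited rather than reproved, but the argument above is the self-contained proof if one is wanted.
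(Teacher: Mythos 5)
Your proposal is correct: the paper does not reprove this lemma but simply cites \citep{arora2012multiplicative}, and your argument is exactly the standard potential-function analysis from that reference, with the potential dropping by a factor of at most $3/4$ per algorithm mistake and the best expert's final weight $(1/2)^{\min_i M_i}$ lower-bounding the potential (the displayed bound in your Step 1 implicitly uses that $\Phi - \tfrac{1}{2}W_e$ is decreasing in the erring weight $W_e \ge \Phi^{(t)}/2$, which is fine). Your exact constant $1/\log_2(4/3) \approx 2.41 \le 3$ in fact gives a slightly sharper bound than the stated one, so the lemma follows.
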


The online learner obtaining the upper bound  $\operatorname{L}(\Hcal)$ is the celebrated Standard Optimal Algorithm \citep{Littlestone1987LearningQW, DanielyERMprinciple}, and thus we omit the details here. Our second and third learners are described in Sections \ref{sec:up2} and \ref{sec:up3} respectively. Finally, in Section \ref{sec:lb}, we give a lower bound showing that our upper bound in Theorem \ref{thm:quantupper} can be tight. 

\subsection{Proof of upper bound (ii) in Theorem \ref{thm:quantupper}} \label{sec:up2}

Consider a lazy, consistent predictor $\Pcal$. Given any sequence of examples $x_{1:T} \in \Xcal^T$, the Predictor $\Pcal$ makes $c \in \mathbbm{N}$ mistakes at some timepoints $t_1, ..., t_c \in [T]$.  Since $\Pcal$ may be randomized, both $c$ and  $t_1, ..., t_c$ are random variables. Crucially, since $\Pcal$ is lazy, for every $i \in \{0, ..., c+1\}$, the predictions made by $\Pcal$ on timepoints strictly between $t_i$ and $t_{i+1}$ are correct and remain unchanged, where we take $t_0 = 0$ and $t_{c+1} = T+1.$ This means that on round $t_i$, we have that $\hat{x}^{t_i}_{t_i : t_{i+1} - 1} = x_{t_i: t_{i+1} - 1}$. Therefore, initializing a fresh copy of an offline learner $\Bcal$ with the predictions $\hat{x}^{t_i}_{t_i : T}$ ensures that $\Bcal$ makes at most $\operatorname{M}_{\Bcal}(T - t_i + 1, \Hcal)$ mistakes on the stream $(x_{t_i}, y_{t_{i}}), ..., (x_{t_{i+1}-1}, y_{t_{i+1}-1})$. Repeating this argument for all adjacent pairs of timepoints in $\{t_1, ..., t_c\}$, gives the following strategy: initialize a new offline learner $\Bcal$ every time $\Pcal$ makes a mistakes, and use $\Bcal$ to make predictions until the next time $\Pcal$ makes a mistake. Algorithm \ref{alg:betterlearner} implements this idea. 

 \begin{algorithm}
\setcounter{AlgoLine}{0}
\caption{Online Learner}\label{alg:betterlearner}
\KwIn{Hypothesis class $\Hcal$, Offline learner $\Bcal$, Time horizon $T$}

\textbf{Initialize: $i = 0$}

\For{$t = 1,...,T$} {
    Receive $x_t$ from Nature.
    
      Receive predictions $\hat{x}_{1:T}^t$ from Predictor $\Pcal$ such that $\hat{x}_{1:t}^t = x_{1:t}.$
    
    \uIf{$t = 1$ or $\hat{x}_{t}^{t-1} \neq x_t$ (i.e. $\Pcal$ made a mistake) }{
     Let $\mathcal{B}^{i}$ be a new copy of $\Bcal$ initialized with the sequence $\hat{x}_{t:T}^t$ and set $i \leftarrow i + 1.$
    }

    Query $\Bcal^i$ on example $x_t$ and play its returned prediction $\hat{y}_t.$

    Receive true label $y_t$ from Nature and pass it to $\Bcal^i$.     
}
\end{algorithm}
 
 \begin{lemma}\label{lem:upper1}  For every $\Hcal \subseteq \Ycal^{\Xcal}$, Predictor $\Pcal$, no-regret offline learner $\Bcal$, and realizable stream $(x_1, y_1), ..., (x_T, y_T)$, Algorithm \ref{alg:betterlearner} makes at most $(\operatorname{M}_{\Pcal}(x_{1:T})+1)\operatorname{M}_{\Bcal}(T, \Hcal)$ mistakes in expectation.
\end{lemma}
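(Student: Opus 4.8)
The plan is to read Algorithm~\ref{alg:betterlearner} as partitioning $[T]$ into the maximal blocks lying between consecutive errors of $\Pcal$, and to charge all mistakes inside a block to the fresh copy of $\Bcal$ that handles it, which we will argue is effectively playing a realizable transductive online game of horizon at most $T$. Concretely, let $r_1 < \dots < r_c$ be the (random) rounds on which $\Pcal$ errs, and put $r_0 := 1$ (the ``$t=1$'' branch) and $r_{c+1} := T+1$; for $i \in \{0,\dots,c\}$ let $S_i := \{r_i, r_i+1, \dots, r_{i+1}-1\}$. By construction, at the first round of $S_i$ the algorithm installs a fresh copy $\Bcal^i$ of $\Bcal$ initialized on $\hat{x}^{r_i}_{r_i:T}$ and uses it to predict on every round of $S_i$; hence the number of mistakes of $\Acal$ equals $\sum_{i=0}^{c}(\text{mistakes of }\Bcal^i\text{ on }S_i)$. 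It therefore suffices to show (a) each $\Bcal^i$ makes at most $\operatorname{M}_{\Bcal}(T,\Hcal)$ mistakes in expectation on $S_i$, and (b) $\mathbbm{E}[c] = \operatorname{M}_{\Pcal}(x_{1:T})$, so that there are, in expectation, $\operatorname{M}_{\Pcal}(x_{1:T})+1$ blocks.

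For (a) we invoke Assumptions~\ref{ass:cons} and~\ref{ass:lazy}. Since $\Pcal$ is correct on every round strictly between $r_i$ and $r_{i+1}$ and, by laziness, leaves its forecast unchanged for as long as it is correct, the forecast $\hat{x}^{r_i}_{r_i:T}$ issued at round $r_i$ already coincides with the realized examples throughout $S_i$ --- on the leading coordinate because of consistency (which pins $\hat{x}^{r_i}_{r_i}=x_{r_i}$), and on the remaining coordinates because those rounds are error-free for $\Pcal$. Consequently, during $S_i$ the copy $\Bcal^i$ observes exactly the first $|S_i|$ entries of the sequence it was initialized with, and receives as feedback the labels $y_t = h^{\star}(x_t)$, where $h^{\star}\in\Hcal$ realizes the stream. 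Thus $\Bcal^i$ is executing a realizable transductive online game on (a prefix of) the instance $\bigl(\hat{x}^{r_i}_{r_i:T},\, h^{\star}\bigr)$, whose horizon $T-r_i+1$ is at most $T$. Since a transductive learner's round-$j$ prediction does not depend on labels revealed after round $j$, its expected number of mistakes on the first $|S_i|$ rounds is at most its expected number on all $T-r_i+1$ rounds, which is at most $\operatorname{M}_{\Bcal}(T-r_i+1, \Hcal)$, which in turn is at most $\operatorname{M}_{\Bcal}(T, \Hcal)$; here the last step uses that the minimax mistake bound of a fixed offline learner may be taken nondecreasing in the horizon without loss of generality (e.g.\ by padding its input). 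Ingredient (b) is immediate from the definition of $\operatorname{M}_{\Pcal}$, since $c = \sum_{t=2}^{T}\mathbbm{1}\{\Pcal(x_{1:t-1})_t \ne x_t\}$.

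To assemble the pieces, observe that $\Pcal$ sees only the examples $x_{1:T}$ --- never the labels nor $\Acal$'s predictions --- so its internal randomness is independent of the randomness of the copies $\Bcal^0,\dots,\Bcal^c$. Conditioning on $\Pcal$'s randomness therefore fixes $c$, the block boundaries, and all the initialization sequences, leaving each $\Bcal^i$ an independent fresh run on a fixed realizable instance; applying (a) to each block gives
\[
\mathbbm{E}\Bigl[\, \sum_{i=0}^{c}(\text{mistakes of }\Bcal^i\text{ on }S_i) \;\Big|\; \Pcal \,\Bigr] \;\le\; (c+1)\,\operatorname{M}_{\Bcal}(T,\Hcal).
\]
Taking the expectation over $\Pcal$ and using (b) yields $\mathbbm{E}[\text{mistakes of }\Acal] \le (\operatorname{M}_{\Pcal}(x_{1:T})+1)\operatorname{M}_{\Bcal}(T,\Hcal)$, as claimed.

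I expect the main obstacle to be getting the consistency/laziness bookkeeping in step (a) exactly right, especially at the block boundaries: one must verify that the error round $r_i$ itself belongs to the block whose forecast is correct (true only after the update at round $r_i$, by consistency), that the initial block $S_0$ fits the same template via $r_0=1$, and that replacing ``mistakes on a prefix'' by ``mistakes on the full padded instance'' is legitimate so that the bound $\operatorname{M}_{\Bcal}(T,\Hcal)$ can be invoked. The probabilistic aggregation is comparatively routine, provided one is careful that the per-block quantity $\operatorname{M}_{\Bcal}(T,\Hcal)$ is itself an expectation over an independent fresh copy of $\Bcal$, which is exactly what lets the conditional expectations add.
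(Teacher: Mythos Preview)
Your proposal is correct and follows essentially the same approach as the paper: partition $[T]$ at the rounds where $\Pcal$ errs, use consistency and laziness to argue that each fresh copy $\Bcal^i$ sees a prefix of a realizable transductive instance of horizon at most $T$, bound its mistakes by $\operatorname{M}_{\Bcal}(T,\Hcal)$, and take expectation over $\Pcal$. The paper carries out the same argument, explicitly writing the ``hypothetical labeled stream'' $(\hat{x}^{t_i}_{t_i}, h^{\star}(\hat{x}^{t_i}_{t_i})), \dots, (\hat{x}^{t_i}_T, h^{\star}(\hat{x}^{t_i}_T))$ to justify the prefix bound; your padding remark and your explicit note on the independence of $\Pcal$'s randomness from the $\Bcal^i$'s are points the paper leaves implicit.
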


\begin{proof}
 Let $\Acal$ denote Algorithm \ref{alg:betterlearner},  $(x_1, y_1), ..., (x_T, y_T)$ denote the realizable stream to be observed by $\Acal$, and $h^{\star} \in \Hcal$ to be the labeling hypothesis. Let $c$ be the random variable denoting the number of mistakes made by Predictor $\Pcal$ on the stream and  $t_1, ..., t_c$ be the random variables denoting the time points where $\Pcal$ makes these errors (e.g . $\hat{x}_{t_i}^{t_i-1} \neq x_{t_i}$). Note that $t_i \geq 2$ for all $i \in [c].$ We will show pointwise for every value of $c$ and $t_1, ..., t_c$ that $\Acal$ makes at most $(c+1)\operatorname{M}_{\Bcal}(T, \Hcal)$ mistakes in expectation over the randomness of $\Bcal$. Taking an outer expectation with respect to the randomness of $\Pcal$ and using the fact that $\mathbbm{E}\left[c\right] = \operatorname{M}_{\Pcal}(x_{1:T})$, completes the proof. 

 First, consider the case where $c = 0$ (i.e. $\Pcal$ makes no mistakes). Then, since $\Pcal$ is lazy, we have that $\hat{x}_{1:T}^t = x_{1:T}$ for every $t \in [T]$. Thus line 5 fires exactly once on round $t = 1$, $\Acal$ initializes an offline learner $\Bcal^1$ with $x_{1:T}$, and $\Acal$ uses $\Bcal^1$ to make its prediction on all rounds. Thus, $\Acal$ makes at most $\operatorname{M}_{\Bcal}(T, \Hcal)$ mistakes in expectation.
 
 Now, let $c > 0$ and $t_1, ..., t_c$ be the time points where $\Pcal$ errs. Partition the sequence $1, ..., T$ into the disjoint intervals $(1, ..., t_1-1)$, $(t_1, ..., t_2 - 1), ..., (t_c, ..., T)$. Define $t_0 := 1$ and $t_{c + 1} := T$.  Fix an $i \in \{0, ..., c\}$. Observe that for every $j \in \{t_i, ..., t_{i+1} - 1\}$, we have that $\hat{x}^{j}_{1:t_{i+1} - 1} = x_{t_{i+1} - 1}$. This comes from the fact that $\Pcal$ does not error on timepoints $t_i+1, ..., t_{i+1}-1$ and is both consistent and lazy (see Assumptions \ref{ass:cons} and \ref{ass:lazy}). Thus, line 5 fires on round $t_i$, $\Acal$ initializes an offline learner $\Bcal^i$ with the sequence $\hat{x}_{t_i:T}^{t_i} = x_{t_i:t_{i+1}-1} \circ \hat{x}^{t_i}_{t_{i+1}:T}$, and $\Acal$ uses $\Bcal^i$ it to make predictions for all remaining timepoints $t_{i}, ..., t_{i + 1} - 1$. Note that line 5 does not fire on timepoints  $t_{i}+1, ..., t_{i + 1} - 1$.
 
Consider the hypothetical labeled stream of examples 
$$(\hat{x}^{t_i}_{t_i}, h^{\star}(\hat{x}^{t_i}_{t_i})), ..., (\hat{x}^{t_i}_T, h^{\star}(\hat{x}^{t_i}_T)) = (x_{t_i}, y_{t_i}),...,(x_{t_{i+1}-1}, y_{t_{i+1}-1}),(\hat{x}^{t_i}_{t_{i+1}}, h^{\star}(\hat{x}^{t_i}_{t_{i+1}})),...,(\hat{x}^{t_i}_{T}, h^{\star}(\hat{x}^{t_i}_{T})).$$
By definition, $\Bcal^i$, after initialized with $\hat{x}_{t_i:T}^{t_i}$, makes at most $\operatorname{M}_{\Bcal}(T - t_i + 1, \Hcal)$ mistakes in expectation when simulated on the stream $(\hat{x}^{t_i}_{t_i}, h^{\star}(\hat{x}^{t_i}_{t_i})), ..., (\hat{x}^{t_i}_T, h^{\star}(\hat{x}^{t_i}_T))$. Thus, $\Bcal^i$ makes at most $\operatorname{M}_{\Bcal}(T, \Hcal)$ mistakes in expectation on the \emph{prefix} $(\hat{x}^{t_i}_{t_i}, h^{\star}(\hat{x}^{t_i}_{t_i})), ..., (\hat{x}^{t_i}_{t_{i+1}-1}, h^{\star}(\hat{x}^{t_i}_{t_{i+1}-1})) = (x_{t_i}, y_{t_i}),...,(x_{t_{i+1}-1}, y_{t_{i+1}-1})$. Since on the interval timepoint $t_i$, $\Acal$ instantiates $\Bcal^i$ with the sequence $\hat{x}_{t_i:T}^{t_i}$ and proceeds to simulate $\Bcal^i$ on the sequence of labeled examples $(x_{t_i}, y_{t_i}),...,(x_{t_{i+1}-1}, y_{t_{i+1}-1})$,  $\Acal$ makes at most $\operatorname{M}_{\Bcal}(T, \Hcal)$ mistakes in expectation on the sequence $(x_{t_i}, y_{t_i}),...,(x_{t_{i+1}-1}, y_{t_{i+1}-1})$.  Since the interval $i$ was chosen arbitrarily,  the above analysis is true for every $i \in \{0, ..., c\}$ and therefore $\Acal$ makes at most $(c+1)\operatorname{M}_{\Bcal}(T, \Hcal)$ mistakes in expectation over the entire stream.  \end{proof}

\subsection{Proof sketch of upper bound (iii) in Theorem \ref{thm:quantupper}} \label{sec:up3}

When $\operatorname{M}_{\Bcal}(T, \Hcal)$ is large (i.e. $\Omega(\sqrt{T})$), upper bound (ii) is sub-optimal. Indeed, if $t_1, ..., t_c$ denotes the timepoints where $\Pcal$ makes mistakes on the stream $x_{1:T}$, then Algorithm \ref{alg:betterlearner} initializes offline learners with sequences of length $T - t_i + 1$. The resulting mistake-bound of these offline learners are then in the order of $T - t_i + 1$, which can be large if $t_1, ..., t_c$ are evenly spaced across the time horizon. To overcome this, we construct a \emph{family} $\Ecal$ of online learners, each of which explicitly controls the length of the sequences offline learners can be initialized with. Finally, we run DWMA using $\Ecal$ as its set of experts. Our family of online learners is parameterized by integers $c \in \{0, ..., T-1\}$. Given an input $c \in \{0, ..., T-1\}$, the online learner parameterized by $c$ partitions the stream into $c + 1$ roughly equally sized parts of size $\ceil{\frac{T}{c+1}}$ and runs a fresh copy of Algorithm \ref{alg:betterlearner} on each partition. In this way, the online learner parameterized by $c$  ensures that offline learners are initialized with time horizons at most $\ceil{\frac{T}{c+1}}$. Algorithm \ref{alg:expert} formalizes this online learner and Lemma \ref{lem:expertguar}, whose proof is in Appendix \ref{app:expertguar}, bounds its expected number of mistakes.

\begin{algorithm}
\setcounter{AlgoLine}{0}
\caption{Expert$(c)$}\label{alg:expert}
\KwIn{Copy of Algorithm \ref{alg:betterlearner} denoted $\mathcal{K}$, Offline Learner $\Bcal$, Time horizon $T$}

\textbf{Initialize:} $\tilde{t}_i = i \ceil{\frac{T}{c+1}}$ for $i \in \{1, ..., c\}$,  $\tilde{t}_0 = 0,$ and $\tilde{t}_{c+1} = T.$

\For{$t = 1,...,T$} {

    Let $i \in \{0, ..., c\}$ such that $t \in \{\tilde{t}_i+1, ..., \tilde{t}_{i+1}\}$.

    \uIf{$t = \tilde{t}_i + 1$}{
         Let $\mathcal{K}_{i}$ be a new copy of $\Kcal$ initialized with time horizon $\tilde{t}_{i+1} - \tilde{t}_{i}$ and a new copy of $\Bcal$.
    }

    Receive $x_t$ from Nature.
    
    Receive predictions $\hat{x}_{1:T}^t$ from Predictor $\Pcal$ such that $\hat{x}_{1:t}^t = x_{1:t}.$
    
    Forward $x_t$ and $\hat{x}_{\tilde{t}_{i}+1:\tilde{t}_{i+1}}^t$ to $\mathcal{K}_i$ via Lines 2 and 3 of Algorithm  \ref{alg:betterlearner} respectively.
    
    Receive $\hat{y}_t$ from $\Kcal_i$ via line 6 in Algorithm \ref{alg:betterlearner} and predict $\hat{y}_t$.

    Receive true label $y_t$ and forward it to $\mathcal{K}_i$ via line 7 in Algorithm \ref{alg:betterlearner}.
         
}
\end{algorithm}

\begin{lemma}[Expert guarantee] \label{lem:expertguar} For any $\Hcal \subseteq \Ycal^{\Xcal}$, Predictor $\Pcal$, and no-regret offline learner $\Bcal$, Algorithm \ref{alg:expert}, given as input $c \in \{0, ..., T-1\}$, makes at most 
$$ (\operatorname{M}_{\Pcal}(x_{1:T}) +c+1) \overline{\operatorname{M}}_{\Bcal}\Bigl(\frac{T}{c+1} + 1, \Hcal \Bigl)$$
\noindent mistakes in expectation on any realizable stream $(x_1, y_1), ..., (x_T, y_T)$. 
\end{lemma}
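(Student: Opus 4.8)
\textbf{Proof proposal for Lemma \ref{lem:expertguar}.}

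The plan is to analyze Expert$(c)$ by decomposing the time horizon into the $c+1$ blocks $B_i = \{\tilde t_i+1,\dots,\tilde t_{i+1}\}$ on which the independent copies $\Kcal_i$ of Algorithm \ref{alg:betterlearner} are run, and to apply Lemma \ref{lem:upper1} to each block separately. Fix a realizable stream $(x_1,y_1),\dots,(x_T,y_T)$ with labeling hypothesis $h^\star$, and condition (for now) on the randomness of $\Pcal$ so that the mistake times $t_1 < \dots < t_c$ of $\Pcal$ are fixed; let $c_i = |\{j : t_j \in B_i\}|$ be the number of $\Pcal$-mistakes that land in block $B_i$, so $\sum_{i=0}^{c} c_i = \operatorname{M}_\Pcal(x_{1:T})$ after taking expectations. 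The key observation is that, on block $B_i$, $\Kcal_i$ is exactly an instance of Algorithm \ref{alg:betterlearner} run on a stream of length $|B_i| = \tilde t_{i+1}-\tilde t_i \le \lceil T/(c+1)\rceil$ that is realizable by (the restriction of) $h^\star$, fed the predictions $\hat x^t_{\tilde t_i+1:\tilde t_{i+1}}$ — which, because $\Pcal$ is consistent and lazy, behave on $B_i$ exactly like the output of a consistent lazy Predictor restricted to that block, making $c_i$ mistakes there. Hence Lemma \ref{lem:upper1} (applied with time horizon $|B_i|$ and ``Predictor mistakes on this block'' equal to $c_i$) gives that $\Kcal_i$ makes at most $(c_i+1)\operatorname{M}_\Bcal(|B_i|,\Hcal)$ mistakes in expectation over $\Bcal$'s randomness.

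Summing over $i\in\{0,\dots,c\}$ and using $|B_i|\le \lceil T/(c+1)\rceil \le T/(c+1)+1$ together with monotonicity of $\operatorname{M}_\Bcal(\cdot,\Hcal)$, the conditional expected number of mistakes of Expert$(c)$ is at most
\[
\sum_{i=0}^{c}(c_i+1)\,\operatorname{M}_\Bcal\!\Bigl(\tfrac{T}{c+1}+1,\Hcal\Bigr)
=\Bigl(\sum_{i=0}^c c_i + (c+1)\Bigr)\operatorname{M}_\Bcal\!\Bigl(\tfrac{T}{c+1}+1,\Hcal\Bigr).
\]
Taking the outer expectation over $\Pcal$'s randomness replaces $\sum_i c_i$ by $\operatorname{M}_\Pcal(x_{1:T})$, yielding the bound $(\operatorname{M}_\Pcal(x_{1:T})+c+1)\,\operatorname{M}_\Bcal\!\bigl(\tfrac{T}{c+1}+1,\Hcal\bigr)$, and finally bounding $\operatorname{M}_\Bcal \le \overline{\operatorname{M}}_\Bcal$ (Lemma \ref{lem:woess}) gives the claimed inequality. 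One small bookkeeping point: the edge block $B_0$ starts at $t=1$, where line 4 of Algorithm \ref{alg:betterlearner} fires by the ``$t=1$'' clause rather than a $\Pcal$-mistake, but this is already absorbed by the ``$+1$'' in Lemma \ref{lem:upper1} and is exactly why the bound there is $(\operatorname{M}_\Pcal+1)$ rather than $\operatorname{M}_\Pcal$.

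The main obstacle is the subtlety around what ``$\Pcal$-mistakes on block $B_i$'' really means inside $\Kcal_i$: Algorithm \ref{alg:betterlearner} inside $\Kcal_i$ only ever inspects the block-restricted prediction vectors $\hat x^t_{\tilde t_i+1:\tilde t_{i+1}}$ and triggers a reinitialization of $\Bcal$ at $t=\tilde t_i+1$ and at every later $t\in B_i$ with $\hat x^{t-1}_t \ne x_t$. I must argue carefully that the number of such later triggers equals $c_i$ — i.e., that $\Pcal$ errs at a time in the \emph{interior} of $B_i$ if and only if the block-restricted stream seen by $\Kcal_i$ records a mistake there — and that consistency/laziness of $\Pcal$ ensures $\hat x^{t}_{t:\tilde t_{i+1}}$ agrees with the true examples up to the next $\Pcal$-error, so that the hypothetical-stream argument of Lemma \ref{lem:upper1} applies verbatim on each block. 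The remaining steps (summation, the two expectations, $\lceil\cdot\rceil$ bound, passing to $\overline{\operatorname{M}}_\Bcal$) are routine.
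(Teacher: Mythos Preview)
Your proposal is correct and follows the same overall plan as the paper: split $[T]$ into the $c+1$ blocks, apply Lemma~\ref{lem:upper1} on each block to get $(m_i+1)\operatorname{M}_\Bcal(|B_i|,\Hcal)$, sum, and take expectation over $\Pcal$. The one genuine difference is in how the per-block bounds are aggregated. The paper first passes to $\overline{\operatorname{M}}_\Bcal$ and then applies Jensen's inequality (concavity) to the weighted sum $\sum_i (m_i+1)\,\overline{\operatorname{M}}_\Bcal(|B_i|,\Hcal)$, eventually simplifying the averaged argument down to $T/(c+1)+1$. You instead exploit directly that every block has length at most $\lceil T/(c+1)\rceil$, bound each $|B_i|$ by $T/(c+1)+1$, and pull out the common factor. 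Your route is shorter and avoids Jensen entirely; the paper's route would in principle be sharper if the blocks had very different lengths, but here they do not, so nothing is lost.

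One small technical point: you invoke ``monotonicity of $\operatorname{M}_\Bcal(\cdot,\Hcal)$,'' which is not established for an arbitrary offline learner $\Bcal$ (indeed the paper itself uses the same implicit step inside the proof of Lemma~\ref{lem:upper1}). The clean fix is to swap the order of your last two steps: first replace $\operatorname{M}_\Bcal$ by its concave majorant $\overline{\operatorname{M}}_\Bcal$ via Lemma~\ref{lem:woess}, and \emph{then} use that a concave function $\overline{\operatorname{M}}_\Bcal:\mathbb{R}_+\to\mathbb{R}_+$ is automatically nondecreasing, so $\overline{\operatorname{M}}_\Bcal(|B_i|,\Hcal)\le \overline{\operatorname{M}}_\Bcal(T/(c+1)+1,\Hcal)$. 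Your bookkeeping about the first round of each $\Kcal_i$ and about the block-restricted predictor being consistent and lazy is accurate and matches what is needed.
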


Note that when $c = 0$ and $\operatorname{M}_{\Bcal}(T, \Hcal) = \overline{\operatorname{M}}_{\Bcal}(T, \Hcal)$, this bound reduces to the one in Lemma \ref{lem:upper1} up to a constant factor. On the other hand,  using $c = \ceil{\operatorname{M}_{\Pcal}(x_{1:T})}$ gives the upper bound 
$$ 2(\operatorname{M}_{\Pcal}(x_{1:T})+1) \overline{\operatorname{M}}_{\Bcal}\Bigl(\frac{T}{\operatorname{M}_{\Pcal}(x_{1:T})+1} + 1, \Hcal \Bigl).$$
Since $\Ecal$ contains an Expert parameterized for every $c \in \{0, ..., T-1\}$,  there always exists an expert $E_{\ceil{\operatorname{M}_{\Pcal}(x_{1:T})}}  \in \Ecal$ initialized with input $c = \ceil{\operatorname{M}_{\Pcal}(x_{1:T})}$. Running DWMA using these set of experts $\Ecal$ on the data stream $(x_1, y_1), ..., (x_T, y_T)$ ensures that our learner does not perform too much worse than $E_{\ceil{\operatorname{M}_{\Pcal}(x_{1:T})}}$.  Algorithm \ref{alg:metalearner} formalizes this idea and Lemma \ref{lem:upper2} is proved in Appendix \ref{app:expertguar}.

\begin{algorithm}
\setcounter{AlgoLine}{0}
\caption{Online learner}\label{alg:metalearner}
\KwIn{Hypothesis class $\Hcal$, Offline learner $\Bcal$, Time horizon $T$}

For every $b \in \{0, ..., T-1\}$ let $E_b$ denote Algorithm \ref{alg:expert} parameterized by input $b$.

Run the DWMA using $\{E_b\}_{b \in \{0, ..., T-1\}}$ over the stream $(x_1, y_1), ..., (x_T, y_T).$
\end{algorithm}

\begin{lemma} \label{lem:upper2}
For every $\Hcal \subseteq \Ycal^{\Xcal}$, Predictor $\Pcal$, and no-regret offline learner $\Bcal$, Algorithm \ref{alg:metalearner} makes at most 
$$6 \Biggl((\operatorname{M}_{\Pcal}(x_{1:T}) + 1) \overline{\operatorname{M}}_{\Bcal}\Bigl(\frac{T}{\operatorname{M}_{\Pcal}(x_{1:T}) + 1} + 1, \Hcal \Bigl) + \log_2 T \Biggl).$$
\noindent mistakes in expectation on any realizable stream $(x_1, y_1), ..., (x_T, y_T).$ 
\end{lemma}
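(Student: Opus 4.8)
The plan is to combine the DWMA guarantee (Lemma~\ref{lem:dwma}) with the single-expert bound (Lemma~\ref{lem:expertguar}) for one cleverly chosen expert. The key observation is that Lemma~\ref{lem:dwma} is a \emph{pointwise} (deterministic) statement: on every realization of the internal randomness of $\Pcal$ and of the offline learners used inside the experts, Algorithm~\ref{alg:metalearner}, which runs DWMA over the $T$ experts $\{E_b\}_{b\in\{0,\dots,T-1\}}$ with $\eta=1/2$, makes at most $3\bigl(\min_{b} M_b + \log_2 T\bigr)$ mistakes, where $M_b$ is the (random) number of mistakes made by $E_b$ on the stream. Taking expectations and using $\expect[\min_b M_b]\le\min_b\expect[M_b]\le\expect[M_{c}]$ for any fixed index $c$, the expected number of mistakes of Algorithm~\ref{alg:metalearner} is at most $3\bigl(\expect[M_{c}] + \log_2 T\bigr)$.

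Next I would fix $c := \ceil{\operatorname{M}_{\Pcal}(x_{1:T})}$. Since $\operatorname{M}_{\Pcal}(x_{1:T}) = \expect\bigl[\sum_{t=2}^{T}\mathbbm{1}\{\Pcal(x_{1:t-1})_t\neq x_t\}\bigr]\le T-1$, we have $c\in\{0,\dots,T-1\}$, so $E_{c}$ is genuinely one of the experts run by Algorithm~\ref{alg:metalearner} and the inequality above applies with this $c$. Applying Lemma~\ref{lem:expertguar} with input $c$, together with the consequence already recorded right after that lemma --- using $c+1\ge\operatorname{M}_{\Pcal}(x_{1:T})+1$, the bound $\operatorname{M}_{\Pcal}(x_{1:T})+c+1\le 2(\operatorname{M}_{\Pcal}(x_{1:T})+1)$, and monotonicity of $\overline{\operatorname{M}}_{\Bcal}(\cdot,\Hcal)$ in its first argument (a positive concave function on $\mathbbm{R}_+$ is non-decreasing) --- yields
$$\expect[M_{c}] \;\le\; 2\bigl(\operatorname{M}_{\Pcal}(x_{1:T})+1\bigr)\,\overline{\operatorname{M}}_{\Bcal}\Bigl(\tfrac{T}{\operatorname{M}_{\Pcal}(x_{1:T})+1}+1,\,\Hcal\Bigr).$$

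Finally I would chain the two estimates and loosen the additive constant:
$$\text{(expected mistakes of Alg.~\ref{alg:metalearner})} \;\le\; 3\Bigl(2(\operatorname{M}_{\Pcal}(x_{1:T})+1)\overline{\operatorname{M}}_{\Bcal}\bigl(\tfrac{T}{\operatorname{M}_{\Pcal}(x_{1:T})+1}+1,\Hcal\bigr)+\log_2 T\Bigr) \;\le\; 6\Bigl((\operatorname{M}_{\Pcal}(x_{1:T})+1)\overline{\operatorname{M}}_{\Bcal}\bigl(\tfrac{T}{\operatorname{M}_{\Pcal}(x_{1:T})+1}+1,\Hcal\bigr)+\log_2 T\Bigr),$$
where the last step uses $3\log_2 T\le 6\log_2 T$. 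This is exactly the claimed bound. The only points requiring any care are (a) the direction of the min/expectation interchange, which is what legitimizes reasoning about the single \emph{deterministically} chosen expert $E_{c}$ even though the $M_b$ are random and correlated through the shared randomness of $\Pcal$, and (b) verifying $c\le T-1$ so that $E_{c}$ exists; neither is a genuine obstacle. All the real work is in Lemma~\ref{lem:expertguar}, which is proved separately, so this proof is short and essentially bookkeeping.
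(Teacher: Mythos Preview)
Your proposal is correct and follows essentially the same approach as the paper's own proof: apply the pointwise DWMA guarantee, push the expectation inside via $\expect[\min_b M_b]\le \expect[M_c]$, choose $c=\ceil{\operatorname{M}_{\Pcal}(x_{1:T})}$, invoke Lemma~\ref{lem:expertguar}, and simplify using $\ceil{\operatorname{M}_{\Pcal}(x_{1:T})}\le \operatorname{M}_{\Pcal}(x_{1:T})+1$ together with monotonicity of $\overline{\operatorname{M}}_{\Bcal}(\cdot,\Hcal)$. Your write-up is in fact slightly more careful than the paper's in flagging why $c\le T-1$ and why the min/expectation interchange goes the right way.
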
 

 
\subsection{Lower bounds} \label{sec:lb}
In light of Theorem \ref{thm:quantupper}, it is natural ask whether the upper bounds derived in Section \ref{sec:real} are tight. A notable feature in upper bounds (ii) and (iii) is the product of the two mistake bounds $\operatorname{M}_{\Pcal}(x_{1:T})$ and $\operatorname{M}_{\Bcal}(T, \Hcal)$. Can this product can be replaced by a sum? Unfortunately, Theorem \ref{thm:uptight} shows that the upper bound in Theorem \ref{thm:quantupper} can be tight.

\begin{theorem} \label{thm:uptight} Let $\Xcal = [0, 1] \cup \{\star\}, \Ycal = \{0, 1\}$, and $\Hcal = \{x \mapsto \mathbbm{1}\{x \leq a\}\mathbbm{1}\{x \neq \star\}\}$. Let  $T, n \in \mathbbm{N}$ be such that $n+1$ divides $T$ and $\frac{T}{n+1} + 1 = 2^k$ for some $k \in \mathbbm{N}$. Then, there exists a Predictor $\Pcal$  such that for every  online learner $\Acal$ that uses $\Pcal$ according to Protocol \ref{alg:protocol}, there exists a realizable stream $(x_1, y_1), ..., (x_T, y_T)$ such that $\operatorname{M}_{\Pcal}(x_{1:T}) = n$ but 
$$\mathbbm{E}\left[\sum_{t=1}^T \mathbbm{1}\{\Acal(x_t) \neq y_t\} \right] \geq \frac{(n+1)}{2} \log_2\Bigl(\frac{T}{n+1}\Bigl).$$
\end{theorem}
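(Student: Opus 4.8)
The plan is to stack $n+1$ essentially independent copies of the classical $\Omega(\log m)$-mistake lower bound for (transductive) online learning of thresholds, laid out along a strictly shrinking chain of intervals, and to hand-craft a Predictor that is good enough to make exactly $n$ mistakes yet reveals the structure of each block only one block at a time. Write $m := \tfrac{T}{n+1} = 2^k - 1$ and split $[T]$ into $n+1$ consecutive blocks $B_0, \dots, B_n$ of length $m$. Alongside, build a decreasing chain of open intervals $I_0 \supseteq I_1 \supseteq \dots \supseteq I_{n+1}$ with $I_0 := (0,1)$: the $m$ examples played during block $B_i$ are the $m$ equally spaced grid points of $I_i$, presented in the breadth-first order of their complete binary search tree (midpoint of $I_i$ first, then the two quartile points, then the eighths, and so on — an order that does \emph{not} depend on the labels). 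Those $m$ grid points cut $I_i$ into $m+1 = 2^k$ open subintervals; let $I_{i+1}$ be whichever one contains the (still unfixed) threshold, recording the choice as $\kappa_i \in \{0,\dots,m\}$. Thus the interval chain, hence the whole example sequence $x_{1:T}$, is a deterministic function of $(\kappa_0,\dots,\kappa_n)$; finally set $a^\star :=$ midpoint of $I_{n+1}$, so that $x_{1:T}$ is realizable by $x \mapsto \mathbbm{1}\{x \le a^\star\}\mathbbm{1}\{x \ne \star\} \in \Hcal$. The symbol $\star$ is used only as a ``null'' prediction below (and could double as harmless padding if one wanted blocks of a different size, since every $h \in \Hcal$ labels $\star$ by $0$).

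Next I would specify the Predictor $\Pcal$. On a prefix $x_{1:t}$ with $t$ in block $B_i$: read off $\kappa_0,\dots,\kappa_{i-1}$ from the first example of each of $B_1,\dots,B_i$ — the first example of $B_\ell$ is the midpoint of $I_\ell$, which, knowing $I_{\ell-1}$ inductively, pins down which subinterval $I_\ell$ is — reconstruct $I_0,\dots,I_i$ and hence the complete example lists of $B_0,\dots,B_i$; output those lists (which agree with $x_{1:t}$ on the observed prefix, so consistency holds) together with $\star$ in every coordinate of $B_{i+1},\dots,B_n$. This $\Pcal$ is lazy, since it changes its output only on seeing the first example of a new block — exactly the round at which its previous prediction, $\star$, was wrong. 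And on any stream arising from the construction, $\Pcal$ errs precisely at the first round of each of $B_1,\dots,B_n$ (it said $\star$, Nature played a point of $(0,1)$) and nowhere else (within a block it already knows every example), so $\operatorname{M}_{\Pcal}(x_{1:T}) = n$ because round $t=1$ is excluded from the sum.

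To prove the mistake lower bound, draw $(\kappa_0,\dots,\kappa_n)$ uniformly from $\{0,\dots,m\}^{n+1}$; it suffices to show every online learner $\Acal$ satisfies $\mathbbm{E}\big[\sum_{t=1}^T \mathbbm{1}\{\Acal(x_t) \ne y_t\}\big] \ge (n+1)\tfrac{k}{2}$ with expectation over $\kappa$ and $\Acal$'s randomness, since a maximizing $\kappa$ then yields a realizable stream and $k = \log_2(m+1) \ge \log_2 \tfrac{T}{n+1}$ gives the stated bound. Fix a block $B_i$ and condition on the transcript up to its start. The examples and labels of $B_0,\dots,B_{i-1}$ depend only on $\kappa_0,\dots,\kappa_{i-1}$ — the label of any grid point of $B_\ell$ is determined solely by which subinterval of $I_\ell$ contains $a^\star$, i.e.\ only by $\kappa_\ell$ — and $\Acal$'s randomness is independent of $\kappa$, so conditionally on this transcript $\kappa_i$ is still uniform on $\{0,\dots,2^k-1\}$, i.e.\ its $k$ bits are i.i.d.\ fair coins; also $\Acal$ already holds, via $\Pcal$, the complete example list of $B_i$ (which depends only on $I_i$). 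Now run the standard binary-search argument inside $B_i$: at each of the $k$ levels of the search tree exactly one grid point has a label not yet determined by what $\Acal$ has observed, that label is conditionally a fair coin, and $\Acal$ must commit to its prediction before seeing it, so $\Acal$ errs there with probability $\tfrac12$; hence $\mathbbm{E}[\#\{\text{mistakes of }\Acal\text{ in }B_i\} \mid \text{transcript}] \ge \tfrac{k}{2}$. Summing over $i=0,\dots,n$ and taking expectations finishes the argument.

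The crux — and the reason for the nested-interval design — is to meet two competing constraints at once. A perfect (zero-mistake) Predictor would have to commit to the whole stream in advance, letting $\Acal$ amortize across blocks and collapsing the total to $O(\log T)$; conversely, if the blocks reused the same points, or points in overlapping intervals, then labels revealed early would pin down labels later and again kill the bound. Disjoint grids spread over a \emph{strictly shrinking} chain of intervals, paired with a Predictor that discloses $I_{i+1}$ only at the first round of $B_{i+1}$ and pays a single mistake there, is exactly what keeps every block an independent $\Omega(\log m)$-mistake problem while pinning $\operatorname{M}_{\Pcal}(x_{1:T})$ to $n$. The most delicate part of the write-up is the bookkeeping: verifying that $\Pcal$ is consistent, lazy, and makes no in-block mistakes, and that the learner's pre-block transcript is genuinely independent of the current block's binary-search bits.
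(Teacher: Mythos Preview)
Your construction is correct and follows the same architecture as the paper: split $[T]$ into $n+1$ blocks, lay the examples of each block as the dyadically ordered grid of a strictly shrinking interval, and build a lazy, consistent Predictor that predicts $\star$ for all future blocks and recovers the current interval from the first example of each new block, incurring exactly $n$ mistakes. Your check that the labels of block $B_\ell$ depend only on $\kappa_\ell$ (not on $\kappa_{\ell+1},\dots,\kappa_n$) is the key point that makes the independence argument go through, and it is sound.

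The one substantive difference is in how the per-block $\tfrac{1}{2}\log_2 m$ is extracted. The paper gives Nature an explicit \emph{adversarial} strategy: at each round it inspects the learner's prediction probability and picks the label that forces an error whenever the current version space allows it, then chooses the next interval $I_{i+1}$ so as to keep that version space threshold-shattering the next block's grid. This requires several bookkeeping lemmas (the stream stays in the designed class, remains realizable, and each $V_i$ threshold-shatters the next grid), and the per-block bound is imported from the transductive lower bound of \cite{hanneke2024trichotomy}. You instead draw $(\kappa_0,\dots,\kappa_n)$ uniformly at random, observe that the transcript up to block $B_i$ is a function of $(\kappa_0,\dots,\kappa_{i-1})$ and the learner's coins, and run the fair-coin binary-search argument directly inside each block. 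This is shorter and avoids the version-space maintenance, at the cost of producing only an \emph{existence} of a hard $\kappa$ via averaging rather than an explicit adversary; for the theorem as stated that is enough, and the final step $k=\log_2(\tfrac{T}{n+1}+1)\ge\log_2\tfrac{T}{n+1}$ gives the claimed bound.
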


Theorem \ref{thm:uptight} shows that the upper bound in Theorem \ref{thm:quantupper} is tight up to an additive factor in $\log_2 T$ because Lemma \ref{lem:trans} gives that $\inf_{\Bcal} \overline{\operatorname{M}}_{\Bcal}(T, \Hcal) = O( \operatorname{VC}(\Hcal)\log_2 T)$ and $\operatorname{VC}(\Hcal) = 1.$ The proof of Theorem \ref{thm:uptight} is technical and provided in Appendix \ref{app:uptight}. Our proof involves four steps. First, we construct a class of streams $\Zcal_n \subseteq \Xcal^{\star}$. Then, using $\Zcal_n$, we construct a deterministic, lazy, consistent Predictor $\Pcal$ such that $\Pcal$ makes mistakes exactly on timepoints $\{\frac{T}{n+1}+1, ..., \frac{nT}{n+1} + 1\}$ for every stream $x_{1:T} \in \Zcal_n.$ Next,  whenever $x_{1:T} \in \Zcal_n$, we establish an equivalence between  the game defined by Protocol \ref{alg:protocol} when given access to Predictor $\Pcal$ and Online Classification with Peeks, a \emph{different} game where there is no Predictor, but the learner observes the next $\frac{T}{n+1}$ examples at timepoints $t \in \{1, \frac{T}{n+1} + 1, ..., \frac{nT}{n+1} + 1\}.$ Finally,  for Online Classification with Peeks, we give a strategy for Nature such that it can force any online learner to make $\frac{(n+1)\log_2(\frac{T}{n+1})}{2}$ mistakes in expectation while ensuring that its selected stream satisfies $x_{1:T} \in \Zcal_n$ and $\inf_{h \in \Hcal}\sum_{t=1}^T\mathbbm{1}\{h(x_t) \neq y_t\} = 0.$ A key component of the fourth step is the stream constructed by \cite[Claim 3.4]{hanneke2024trichotomy} to show that the minimax mistakes for classes with infinite Ldim is at least $\log_2 T$ in the offline setting.


\noindent \textbf{Remark.} Although Theorem \ref{thm:uptight} is stated using the class of one dimensional thresholds, it  can be adapted to hold for any VC class with infinite Ldim as these classes embed thresholds \citep[Theorem 3]{alon2019private}.  


\section{Discussion}
In this paper, we initiated the study of online classification when the learner has access to machine-learned predictions about future examples. There are many interesting directions for future research and we list two below. Firstly, we only considered the classification setting, and it would be interested to extend our results to online scalar-valued regression. Secondly, we measure the performance of a Predictor through its mistake-bounds. When $\Xcal$ is continuous, this might be an unrealistic measure of performance. Thus, it would be interesting to see whether our results can be generalized to the case where $\Xcal$ is continuous and the guarantee of Predictors is defined in terms of $\ell_p$ losses. 

\bibliographystyle{plainnat}
\bibliography{references}

\newpage
\appendix

\section{More Related Works} \label{app:relatedworks}

\noindent \textbf{Online Algorithms with Predictions.}  Online Algorithms with Predictions (OAwP) has emerged as an important paradigm lying at the intersection of classical online algorithm design and machine learning.  Many fundamental online decision-making problems including ski rental \citep{gollapudi2019online, wang2020online, bamas2020primal}, online scheduling \citep{lattanzi2020online, wei2020optimal, scully2021uniform}, online facility location \citep{almanza2021online, jiang2021online}, caching \citep{lykouris2021competitive, elias2024learning}, and metrical task systems \citep{antoniadis2023online}, have been analyzed under this framework. Recently, \cite{elias2024learning} consider a model where the predictor is allowed to learn and adapt its predictions based on the observed data. This is contrast to previous work on learning-augmented online algorithms, where predictions are  made from machine learning models trained on historical data, and thus their predictions are static and non-adaptive to the current task at hand. \cite{elias2024learning} study a number of fundamental problems, like caching and scheduling, and show how explicitly designed predictors can lead to improved performance bounds. In this work, we consider a model similar to \cite{elias2024learning}, where the predictions available to the learning algorithms are not fixed, but rather adapt to the true sequence of data processed by the learning algorithm. However, unlike \cite{elias2024learning}, we do not hand-craft these predictions, but rather assume our learning algorithms have black-box access to a machine-learned prediction algorithm. 
\vspace{5pt}

\noindent \textbf{Transductive Online Learning.} In the Transductive Online Learning setting, Nature reveals the entire sequence of examples $x_1, ..., x_T$ to the learner \emph{before} the game begins. The goal of the learner is to predict the corresponding labels $y_1, ..., y_T$ in order, receiving the true label $y_t$ only after making the prediction $\hat{y}_t$ for example $x_t$.  First studied by \cite{ben1997online}, recent work by \cite{ hanneke2024trichotomy} has established the minimax rates on expected mistakes/regret in the realizable/agnostic settings. In the context of online classification with predictions, one can think of the transductive online learning setting as a special case where the Predictor never makes mistakes.

\vspace{5pt}

\noindent \textbf{Smoothed Online Classification.} In addition to AwP, smoothed analysis \citep{spielman2009smoothed} is another important sub-field of beyond-worst-case analysis of algorithms. By placing distributional assumptions on the input, one can typically go beyond computational and information-theoretic bottlenecks due to worst-case inputs. To this end, \cite{rakhlin2011online, haghtalab2018foundation, haghtalab2020smoothed, block2022smoothed} consider a \emph{smoothed} online classification model. Here, the adversary has to choose and draw examples from sufficiently anti-concentrated distributions. For binary classification, \cite{haghtalab2018foundation} and \cite{haghtalab2020smoothed} showed that smoothed online learnability is as easy as PAC learnability. That is, the finiteness of a \emph{smaller} combinatorial parameter called the VC dimension is sufficient for smoothed online classification. In this work, we also go beyond the worst-case analysis standard in online classification, but consider a different model where the adversary is constrained to reveal a sequence of examples that are \emph{predictable}. In this model, we also show that the VC dimension can be sufficient for online learnability. 

\section{Combinatorial dimensions} \label{app:combdim}

In this section, we review existing combinatorial dimensions in statistical learning theory. We start with the VC and Natarajan dimensions which characterize PAC learnability when $|\Ycal| = 2$ and $|\Ycal| < \infty$ respectively.

\begin{definition}[VC dimension]
A set $\{x_1, ..., x_n\} \in \mathcal{X}$ is shattered by $\mathcal{H}$, if $\forall y_1, ..., y_n \in \{0, 1\}$, $\exists h\in \mathcal{H}$, such that  $\forall i \in [n]$, $h(x_i) = y_i$. The VC dimension of $\mathcal{H}$, denoted $\emph{\text{VC}}(\mathcal{H})$, is defined as the largest natural number $n \in \mathbbm{N}$ such that there exists a set $\{x_1, ..., x_n\} \in \mathcal{X}$ that is shattered by $\mathcal{H}$.
\end{definition}

 \begin{definition}[Natarajan Dimension]\label{ndim}
A set $S = \{x_1, \ldots, x_d\}$ is shattered by a multiclass function class $\Hcal \subseteq \Ycal^{\Xcal}$ if there exist two witness functions $f, g: S \to \Ycal$ such that  $f(x_i) \neq  g(x_i) $ for all $i \in [d]$, and for every $\sigma \in \{0,1\}^d$, there exists a function $h_{\sigma} \in \Hcal$ such that for all $i \in [d]$, we have
\[h_{\sigma}(x_i) = \begin{cases} f(x_i) \, \quad \text{ if } \sigma_i =1 \\ g(x_i) \, \quad \text{ if } \sigma_i =0  \end{cases}.\]

The Natarajan dimension of $\Hcal$, denoted $\operatorname{N}(\Hcal)$,  is the size of the largest shattered set $S \subseteq\Xcal$. If the size of the shattered set can be arbitrarily large, we say that $\operatorname{N}(\Hcal) = \infty$.
\end{definition}

We note that $\operatorname{N}(\Hcal) = \operatorname{VC}(\Hcal)$ whenever $|\Ycal| = 2.$ Next, we move to the online setting, where the Littlestone dimension (Ldim) characterizes multiclass online learnability. To define the Ldim, we first define a Littlestone tree and a notion of shattering. 

\begin{definition}[Littlestone tree]
\noindent A Littlestone tree of depth $d$ is a \emph{complete} binary tree of depth $d$ where the internal nodes are labeled by examples of $\mathcal{X}$ and the left and right outgoing edges from each internal node are labeled by $0$ and $1$ respectively.
\end{definition}

Given a Littlestone tree $\mathcal{T}$ of depth $d$, a root-to-leaf path down $\mathcal{T}$ is a bitstring $\sigma \in \{0, 1\}^d$ indicating whether to go left ($\sigma_i = 0$) or to go right ($\sigma_i = 1$) at each depth $i \in [d]$. A path $\sigma \in \{0, 1\}^d$ down $\mathcal{T}$ gives a sequence of labeled examples $\{(x_i, \sigma_i)\}_{i = 1}^{d}$, where $x_i$ is the example labeling the internal node following the prefix $(\sigma_1, ..., \sigma_{i-1})$ down the tree. A hypothesis $h_{\sigma} \in \mathcal{H}$ shatters a path $\sigma \in \{0, 1\}^d$, if for every $i \in [d]$, we have  $h_{\sigma}(x_i) = \sigma_i$. In other words, $h_{\sigma}$ is consistent with the labeled examples when following $\sigma$. A Littlestone tree $\mathcal{T}$ is shattered by $\mathcal{H}$ if for every root-to-leaf path $\sigma$ down $\mathcal{T}$, there exists a hypothesis $h_{\sigma} \in \mathcal{H}$ that shatters it. Using this notion of shattering, we define the Littlestone dimension of a hypothesis class. 

\begin{definition}[Littlestone dimension]
\noindent The Littlestone dimension of $\mathcal{H}$, denoted $\emph{\texttt{L}}(\mathcal{H})$, is the largest $d \in \mathbbm{N}$ such that there exists a Littlestone tree $\mathcal{T}$ of depth $d$ shattered by $\mathcal{H}$. If there exists shattered Littlestone trees $\mathcal{T}$ of arbitrary depth, then we say that $\emph{\texttt{L}}(\mathcal{H}) = \infty$. 
\end{definition}

Finally, the following notion of shattering is useful when proving the lower bound in Appendix \ref{app:uptight}.

\begin{definition}[Threshold shattering] \label{def:threshshatt}
\noindent A sequence $(x_1, ..., x_k) \in \Xcal^k$  is threshold-shattered by $\Hcal \subseteq \{0, 1\}^{\Xcal}$ if there exists $(h_1, ..., h_k) \in \Hcal^k$ such that $h_i(x_j) = \mathbbm{1}\{j \leq i\}$ for all $i, j \in [k]$.
\end{definition}

\section{Proof of Lemmas \ref{lem:expertguar} and \ref{lem:upper2}} \label{app:expertguar}

\begin{proof} (of Lemma \ref{lem:expertguar})
 Let $(x_1, y_1), ..., (x_T, y_T)$ be the realizable stream to be observed by the Expert. For every $i \in \{0, ..., c\}$, let $m_i$ be the random variable denoting the number of mistakes made by $\Pcal$ in rounds $\{\tilde{t}_i + 1, ..., \tilde{t}_{i+1}\}.$ Recall that $\tilde{t}_0 = 0$ and $\tilde{t}_{c+1} = T$. Let $M = \sum_{i=0}^c m_i$ be the random variable denoting the total number of mistakes made by $\Pcal$ on the realizable stream. Finally, let $\Acal$ denote Algorithm \ref{alg:expert}. Observe that, 
\begin{align*}
\mathbbm{E}\left[\sum_{t=1}^T \mathbbm{1}\{\Acal(x_t) \neq y_t\}\right] &= \mathbbm{E}\left[\sum_{i=0}^c \sum_{t = \tilde{t}_{i} + 1}^{\tilde{t}_{i+1}} \mathbbm{1}\{\Acal(x_t) \neq y_t\}\right] \\
&= \mathbbm{E}\left[\sum_{i=0}^c \sum_{t = \tilde{t}_{i} + 1}^{\tilde{t}_{i+1}} \mathbbm{1}\{\Kcal_i(x_t) \neq y_t\}\right]\\
&\leq \mathbbm{E}\left[\sum_{i=0}^c (m_i + 1) \overline{\operatorname{M}}_{\Bcal}(\tilde{t}_{i+1} - \tilde{t}_{i}, \Hcal)\right]
\end{align*}
where the first inequality follows from the guarantee of $\Kcal$ and Lemma \ref{lem:woess}. Using Jensen's inequality, we get that 
\begin{align*}
\mathbbm{E}\left[\sum_{i=0}^c (m_i + 1) \overline{\operatorname{M}}_{\Bcal}(\tilde{t}_{i+1} - \tilde{t}_{i}, \Hcal)\right] &\leq  \mathbbm{E}\left[\Biggl(\sum_{i=0}^c (m_i + 1)\Biggl) \overline{\operatorname{M}}_{\Bcal}\Biggl(\frac{\sum_{i=0}^c (m_i + 1)(\tilde{t}_{i+1} -  \tilde{t}_{i})}{\sum_{i=0}^c (m_i + 1)}, \Hcal\Biggl)\right]\\
&= \mathbbm{E}\left[\Bigl(M + c + 1\Bigl) \overline{\operatorname{M}}_{\Bcal}\Biggl(\frac{\sum_{i=0}^c (m_i + 1)(\tilde{t}_{i+1} -  \tilde{t}_{i})}{M + c + 1}, \Hcal\Biggl)\right]\\
&= \mathbbm{E}\left[\Bigl(M + c + 1\Bigl)\overline{\operatorname{M}}_{\Bcal}\Biggl(\frac{\sum_{i=0}^c m_i( \tilde{t}_{i+1} - \tilde{t}_{i})  + T}{M + c + 1}, \Hcal\Biggl)\right]\\
&= \mathbbm{E}\left[\Bigl(M + c + 1\Bigl)\overline{\operatorname{M}}_{\Bcal}\Biggl(\frac{\ceil{\frac{T}{c+1}}\sum_{i=0}^c m_i(i+1 - i)  + T}{M + c + 1}, \Hcal\Biggl)\right]\\
&= \mathbbm{E}\left[\Bigl(M + c + 1\Bigl)\overline{\operatorname{M}}_{\Bcal}\Biggl(\frac{\ceil{\frac{T}{c+1}} \, M + T}{M + c + 1}, \Hcal\Biggl)\right].\\
\end{align*}
Using the fact that $\ceil{\frac{T}{c+1}} \leq \frac{T}{c+1} + 1$, we have
\begin{align*}
\mathbbm{E}\left[\sum_{i=0}^c \sum_{j=0}^{m_i} \overline{\operatorname{M}}_{\Bcal}(\tilde{t}_{i+1} - \tilde{t}_{i}, \Hcal)\right] &\leq \mathbbm{E}\left[\Bigl(M + c + 1\Bigl)\overline{\operatorname{M}}_{\Bcal}\Biggl(\frac{\frac{MT}{c+1} + M + T}{M + c + 1}, \Hcal\Biggl)\right]\\
&\leq \mathbbm{E}\left[\Bigl(M + c + 1\Bigl)\overline{\operatorname{M}}_{\Bcal}\Biggl(\frac{T}{c + 1} + 1, \Hcal\Biggl)\right]\\
&= \Bigl(\operatorname{M}_{\Pcal}(x_{1:T}) + c + 1\Bigl)\overline{\operatorname{M}}_{\Bcal}\Biggl(\frac{T}{c + 1} + 1, \Hcal\Biggl),
\end{align*}
which completes the proof. \end{proof}

\begin{proof} (of Lemma \ref{lem:upper2}) Let $(x_1, y_1), ..., (x_T, y_T)$ be the realizable stream to be observed by the learner. Let $\Acal$ denote the online learner in Algorithm \ref{alg:metalearner}. By the guarantees of the DWMA, we have 
\begin{align*}
\mathbbm{E}\left[ \sum_{t=1}^T \mathbbm{1}\{\Acal(x_t) \neq y_t\} \right] &\leq 3 \mathbbm{E}\left[ \inf_{b \in \{0, ..., T-1\}}\sum_{t=1}^T \mathbbm{1}\{E_{b}(x_t) \neq y_t\}\right] + 3 \log_2 T\\
&\leq 3 \mathbbm{E}\left[ \sum_{t=1}^T \mathbbm{1}\{E_{\ceil{\operatorname{M}_{\Pcal}(x_{1:T})}}(x_t) \neq y_t\}\right] + 3 \log_2 T\\
&\leq  3(\operatorname{M}_{\Pcal}(x_{1:T}) + \ceil{\operatorname{M}_{\Pcal}(x_{1:T})} + 1) \overline{\operatorname{M}}_{\Bcal}\Bigl(\frac{T}{\ceil{\operatorname{M}_{\Pcal}(x_{1:T})}+1} + 1, \Hcal \Bigl) + 3 \log_2 T\\
&\leq 6(\operatorname{M}_{\Pcal}(x_{1:T}) + 1) \overline{\operatorname{M}}_{\Bcal}\Bigl(\frac{T}{\operatorname{M}_{\Pcal}(x_{1:T})+1} + 1, \Hcal \Bigl) + 6 \log_2 T, 
\end{align*}

\noindent where the last inequality follows from Lemma \ref{lem:expertguar} and the fact that $\operatorname{M}_{\Pcal}(x_{1:T}) \leq \ceil{\operatorname{M}_{\Pcal}(x_{1:T})} \leq \operatorname{M}_{\Pcal}(x_{1:T}) + 1.$
 \end{proof}

\section{Proof of Corollary \ref{cor:eqv} and \ref{cor:vc}} \label{app: corproof}

Using  Theorem \ref{thm:quantupper}, we first show that for every $\Hcal \subseteq \Ycal^{\Xcal}$, Predictor $\Pcal$, $\Zcal \subseteq \Xcal^{\star}$, and  no-regret offline learner $\Bcal$, we have that
$$\inf_{\Acal}\operatorname{M}_{\Acal}(T, \Hcal, \Zcal) = O\Biggl(\operatorname{L}(\Hcal) \, \wedge \,(\operatorname{M}_{\Pcal}(T, \Zcal)+1)\operatorname{M}_{\Bcal}(T, \Hcal) \, \wedge \, \Biggl((\operatorname{M}_{\Pcal}(T, \Zcal) + 1)  \, \overline{\operatorname{M}}_{\Bcal}\Bigl(\frac{T}{\operatorname{M}_{\Pcal}(T, \Zcal)+1} , \Hcal \Bigl) + \log_2 T \Biggl)\Biggl).$$

\begin{proof} 
It suffices to show that Algorithms \ref{alg:betterlearner} and \ref{alg:metalearner} have mistake bounds $O((\operatorname{M}_{\Pcal}(T, \Zcal)+1)\operatorname{M}_{\Bcal}(T, \Hcal))$ and $O\Biggl((\operatorname{M}_{\Pcal}(T, \Zcal) + 1)  \, \overline{\operatorname{M}}_{\Bcal}\Bigl(\frac{T}{\operatorname{M}_{\Pcal}(T, \Zcal)+1} , \Hcal \Bigl) + \log_2 T \Biggl)$ respectively. To see that Algorithm \ref{alg:betterlearner}'s mistake bounds is $O((\operatorname{M}_{\Pcal}(T, \Zcal)+1)\operatorname{M}_{\Bcal}(T, \Hcal))$, note that $\operatorname{M}_{\Pcal}(x_{1:T}) \leq \operatorname{M}_{\Pcal}(T, \Zcal)$ for every $x_{1:T} \in \Zcal.$ To see that Algorithm \ref{alg:metalearner}'s expected mistake bound is $O\Biggl((\operatorname{M}_{\Pcal}(T, \Zcal) + 1)  \, \overline{\operatorname{M}}_{\Bcal}\Bigl(\frac{T}{\operatorname{M}_{\Pcal}(T, \Zcal)+1} , \Hcal \Bigl) + \log_2 T \Biggl)$, we follow the exact same proof strategy as in the proof of Lemma \ref{lem:upper2}, but picking a different expert when upper bounding  the expected number of mistakes. Namely, following the same steps as in the proof of Lemma \ref{lem:upper2}, we have that 
\begin{align*}
\mathbbm{E}\left[ \sum_{t=1}^T \mathbbm{1}\{\Acal(x_t) \neq y_t\} \right] &\leq 3 \mathbbm{E}\left[ \inf_{b \in \{0, ..., T-1\}}\sum_{t=1}^T \mathbbm{1}\{E_{b}(x_t) \neq y_t\}\right] + 3 \log_2 T\\
\end{align*}
where $\Acal$ denotes Algorithm \ref{alg:metalearner}. Picking $b = \ceil{\operatorname{M}_{\Pcal}(T, \Zcal)}$,  using Lemma \ref{lem:expertguar}, and the fact that $\operatorname{M}_{\Pcal}(x_{1:T}) \leq \operatorname{M}_{\Pcal}(T, \Zcal)$ gives the desired upper bound on  $ \mathbbm{E}\left[ \sum_{t=1}^T \mathbbm{1}\{\Acal(x_t) \neq y_t\} \right]$ of
\begin{equation} \label{eq:cor}
    O \Biggl((\operatorname{M}_{\Pcal}(T, \Zcal) + 1) \overline{\operatorname{M}}_{\Bcal}\Bigl(\frac{T}{\operatorname{M}_{\Pcal}(T, \Zcal)+1} , \Hcal \Bigl) + \log_2 T \Biggl),
\end{equation}
completing the proof.
\end{proof}

Corollary \ref{cor:eqv} follows from the fact that the upper bound on $\inf_{\Acal}\operatorname{M}_{\Acal}(T, \Hcal, \Zcal)$ is sublinear whenever  $\operatorname{M}_{\Pcal}(T, \Zcal) = o(T)$ and $\operatorname{M}_{\Bcal}(T, \Hcal) = o(T).$ To get Corollary \ref{cor:vc}, recall that by Lemma \ref{lem:trans}, there exists an offline learner $\Bcal$ such that 
$$\overline{\operatorname{M}}_{\Bcal}(T, \Hcal) = O\Bigl(\operatorname{VC}(\Hcal)\log_2 T\Bigl).$$
\noindent Plugging this bound into upper bound (\ref{eq:cor}) completes the proof.

\section{Proof of Theorem \ref{thm:quantupper}} \label{app:quantupper} 

Let $\Acal$ denote the DWMA using the Standard Optimal Algorithm (SOA), Algorithm \ref{alg:betterlearner} and Algorithm \ref{alg:metalearner} as experts. Then, for any  realizable stream $(x_1, y_1), ..., (x_T, y_T)$,   Lemma \ref{lem:dwma} gives that
$$\mathbbm{E}\left[\sum_{t=1}^T \mathbbm{1}\{\Acal(x_t) \neq y_t\} \right] \leq 3 \mathbbm{E}\left[\min_{i \in [3]} M_i + \log_2 3\right] \leq 3 \min_{i \in [3]} \mathbbm{E}\left[M_i \right] + 5,$$
where we take $M_1, M_2$ and $M_3$ to be the number of mistakes made by the SOA, Algorithm \ref{alg:betterlearner}, and Algorithm \ref{alg:metalearner} respectively. Note that $M_2$ and $M_3$ are random variables since $\Bcal$ and $\Pcal$ may be randomized algorithms. Finally, using Lemma \ref{lem:upper1}, Lemma \ref{lem:upper2} and the fact that the SOA makes at most $\operatorname{L}(\Hcal)$ mistakes on any realizable stream \citep{Littlestone1987LearningQW} completes the proof of Theorem \ref{thm:quantupper}.

\section{Proof of Theorem \ref{thm:uptight}} \label{app:uptight}

 Let $\Xcal = \mathbbm{R} \cup \{\star\}$ and $\Hcal = \{x \mapsto \mathbbm{1}\{x \leq a\}\mathbbm{1}\{x \neq \star\}\}$. Let $T, n \in \mathbbm{N}$ be chosen such that $T$ is a multiple of $n+1$ and $\frac{T}{n+1} + 1 = 2^k$ for some $k \in \mathbbm{N}$. Our proof of Theorem \ref{thm:uptight} will be in four steps, as described below.

 \begin{itemize}
 \item[(1)] We construct a class of streams $\Zcal_n \subseteq \Xcal^{\star}$.
 \item[(2)] Using $\Zcal_n$, we construct a deterministic, lazy, consistent Predictor $\Pcal$ such that $\Pcal$ makes mistakes exactly on timepoints $\{\frac{T}{n+1}+1, ..., \frac{nT}{n+1} + 1\}$ for every stream $x_{1:T} \in \Zcal_n.$ 
 \item[(3)] When $x_{1:T} \in \Zcal_n$, we establish an equivalence between  the game defined by Protocol \ref{alg:protocol} when given access to Predictor $\Pcal$ and Online Classification with Peeks, a \emph{different} game where there is no Predictor, but the learner observes the next $\frac{T}{n+1}$ examples at timepoints $t \in \{1, \frac{T}{n+1} + 1, ..., \frac{nT}{n+1} + 1\}.$
 
 \item[(4)] For Online Classification with Peeks, we give a strategy for Nature such that it can force any online learner to make $\frac{(n+1)\log_2(\frac{T}{n+1})}{2}$ mistakes in expectation while ensuring that the stream of labeled examples it picks $(x_1, y_1), ..., (x_T, y_T)$ satisfies the constraint that $x_{1:T} \in \Zcal_n$ and $\inf_{h \in \Hcal}\sum_{t=1}^T\mathbbm{1}\{h(x_t) \neq y_t\} = 0.$  
 
 \end{itemize}

Composing steps 1-4 shows the existence of a Predictor $\Pcal$ such that for any learner $\Acal$ playing Protocol \ref{alg:protocol} using $\Pcal$, there exists a realizable stream where $\Acal$ makes at least $\frac{(n+1)}{2}\log_2(\frac{T}{n+1})$ mistakes in expectation.

\subsection*{Step 1: Construction of $\Zcal_n$}
  Let $\Scal$ be the set of all \emph{strictly} increasing sequences of real numbers in $(0, 1)$ of size $\frac{T}{n+1}$. Fix a function $f: \mathbbm{R}^2 \rightarrow \Scal$ which, given $a < b \in \mathbbm{R}$, outputs an element of $\Scal$ that lies strictly in between $a$ and $b$. For example, given $a < b \in \mathbbm{R}$, the function $f$ can output evenly spaced real numbers of size $\frac{T}{n+1}$. Let $\operatorname{Dyd}: \Scal \rightarrow \Xcal^{\frac{T}{n+1}}$ be a function that reorders the input $S \in \Scal$ in Dyadic order. Namely, if $S = (x_1, ..., x_N)$ where $N+1 = 2^k$ for some $k \in \mathbbm{N}$, then $\operatorname{Dyd}(S)$ is 
$$x_{\frac{N}{2}}, x_{\frac{N}{4}}, x_{\frac{3N}{4}}, x_{\frac{N}{8}}, x_{\frac{3N}{8}}, x_{\frac{5N}{8}}, x_{\frac{7N}{8}}, ..., x_{\frac{(2^k -1)N}{2^k}}.$$
See the Proof of Claim 3.4 in \cite{hanneke2024trichotomy} for a more detailed description of a Dyadic order. On the other hand, let $\operatorname{Sort}: \Xcal^{\frac{T}{n+1}} \rightarrow \Scal$ be a function that reorders its input in increasing order. Let $\mathcal{J} := \{1, ..., \frac{T}{n+1} + 1\}^{\leq n}$ be the set of all sequences of indices of length at most $n$ taking values in $\{1, ..., \frac{T}{n+1} + 1\}$. For the remainder of this section, we will use $S_i$ to denote the $i$th element in a sequence $S \in \Scal$. Moreover, for any two sequences $S^1, S^2 \in \Scal$, we say $S^1 < S^2$ if $S^1_{|S^1|} < S^2_1.$ That is, $S^1 < S^2$, if $S^1$ lies strictly to the left of $S^2$.

\begin{algorithm}
\setcounter{AlgoLine}{0}
\caption{Stream Generator (SG)}\label{alg:streamgen}
\KwIn{$S^0 \in \Scal$, $j_{1:m} \in \mathcal{J}$}

\textbf{Initialize:} $a_0 = 0, b_0 = 1$

\For{$i = 1, ..., m$}{

\uIf{$j_i = 1$}{
    $S^i \leftarrow f(a_{i-1}, S^{i-1}_1)$
    
    $a_i \leftarrow a_{i-1}$

    $b_i \leftarrow S^{i-1}_{1}$
}

\uElseIf{$j_i = \frac{T}{n+1} + 1$}{
     $S_i \leftarrow  f(S^{i-1}_{\frac{T}{n+1}}, b_{i-1})$
     
     $a_i \leftarrow S^{i-1}_{\frac{T}{n+1}}$
     
     $b_i \leftarrow b_{i-1}$
}

\Else{

 $S_i \leftarrow  f(S^{i-1}_{j-1}, S^{i-1}_{j})$
 
 $a_i \leftarrow S^{i-1}_{j-1}$
 
 $b_i \leftarrow S^{i-1}_{j}$
}
}
\textbf{Return:} $\operatorname{Dyd}(S^0) \circ ... \circ \operatorname{Dyd}(S^{m})$
\end{algorithm}

We will construct a stream for every sequence $j_{1:m} \in \mathcal{J}$, $m \leq n$, algorithmically as follows. Fix $S^{0} := f(0, 1) \in \Scal$ and let $\operatorname{SG}$ denote Algorithm \ref{alg:streamgen}. Let 
$$\Zcal_n = \Bigl\{\operatorname{SG}(S^0, j_{1:m}) : j_{1:m} \in \Jcal, m \in \{1, ..., n\}\Bigl\}$$
denote the stream class generated by applying $\operatorname{SG}$ to inputs $S^0$ and $j_{1:m}$ for every $j_{1:m} \in \Jcal$, $m \leq n$. We make four important observations about $\Zcal_n$, which we will use to construct a Predictor that can reconstruct $S^i$ given $S^0$ and the first example of the block $S^{i-1}$. 

\begin{observation}
For every sequence $x_{1:T} \in \Zcal$, we have that $x_{1:\frac{T}{n+1}} = \operatorname{Dyd}(S^0)$. 
\end{observation}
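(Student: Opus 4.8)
The plan is to unwind the definitions of $\Zcal_n$ and of the Stream Generator, after which the statement is immediate. First I would recall that, by definition, every element of $\Zcal_n$ is of the form $\operatorname{SG}(S^0, j_{1:m})$ for some $m \in \{1, \ldots, n\}$ and some index sequence $j_{1:m} \in \Jcal$ --- and, crucially, $S^0 = f(0,1)$ is one fixed element of $\Scal$, chosen once and for all before any stream is generated, that does not depend on $m$ or on $j_{1:m}$.

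Next I would inspect Algorithm \ref{alg:streamgen} directly. Its return value is the concatenation $\operatorname{Dyd}(S^0) \circ \operatorname{Dyd}(S^1) \circ \cdots \circ \operatorname{Dyd}(S^m)$, and the loop only ever assigns to $S^1, \ldots, S^m$ (together with the auxiliary endpoints $a_i, b_i$); it never modifies $S^0$. Since $|S^0| = \frac{T}{n+1}$ and the standing assumption $\frac{T}{n+1} + 1 = 2^k$ makes $\operatorname{Dyd}(\cdot)$ well-defined on sequences of that length, the block $\operatorname{Dyd}(S^0) \in \Xcal^{\frac{T}{n+1}}$ occupies exactly the first $\frac{T}{n+1}$ coordinates of $\operatorname{SG}(S^0, j_{1:m})$, independently of the remaining inputs. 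This yields $x_{1:\frac{T}{n+1}} = \operatorname{Dyd}(S^0)$ for every $x_{1:T} \in \Zcal_n$, as claimed (the statement writes $\Zcal$ for $\Zcal_n$).

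I do not expect any real obstacle: the content of the observation is precisely that $\operatorname{SG}$ places the input-independent block $\operatorname{Dyd}(S^0)$ first. The only point requiring a moment of care is the bookkeeping --- confirming that $\operatorname{Dyd}$ is applied to a sequence of length $\frac{T}{n+1}$ so that the Dyadic reordering is defined, which follows from the divisibility and power-of-two hypotheses on $T$ and $n$ that are already in force.
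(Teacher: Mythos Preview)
Your proposal is correct and matches the paper's own justification, which is simply the one-line remark that the same initial sequence $S^0$ is used to generate every stream in $\Zcal_n$. Your version merely spells out the bookkeeping (that $\operatorname{SG}$ returns $\operatorname{Dyd}(S^0)$ as its first block and never modifies $S^0$), which is exactly the content of that remark.
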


The first observation follows from the fact that the same initial sequence $S^0$ is used to generate every stream in $\Zcal_n$.  

\begin{observation} \label{obs:replicate}
For any pair $j_{1:n}^1, j_{1:n}^2 \in \Jcal$ and $m \leq n$, if $j_{1:m}^1 = j_{1:m}^2$, then $\operatorname{SG}(S^0, j^1_{1:m}) = \operatorname{SG}(S^0, j^2_{1:m}).$
\end{observation}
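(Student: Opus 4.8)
The plan is to exploit the fact that Algorithm \ref{alg:streamgen} is a \emph{deterministic} procedure once the auxiliary function $f$ has been fixed at the start of Step 1, so that its output is literally a function of its two inputs $S^0$ and $j_{1:m}$. The key step is an induction on the loop index $i \in \{0, 1, \dots, m\}$ establishing that the triple $(S^i, a_i, b_i)$ computed by $\operatorname{SG}$ depends only on $S^0$ and the prefix $j_{1:i}$. The base case $i = 0$ is immediate, since $\operatorname{SG}$ sets $(S^0, a_0, b_0) = (S^0, 0, 1)$, which involves nothing beyond $S^0$.

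For the inductive step I would simply inspect the three branches of the loop body: in each branch, iteration $i$ reads only $a_{i-1}$, $b_{i-1}$, finitely many fixed coordinates of $S^{i-1}$, and the index $j_i$, and then produces $(S^i, a_i, b_i)$ by applying $f$ and elementary comparisons/assignments. Hence $(S^i, a_i, b_i)$ is a fixed function of $(S^{i-1}, a_{i-1}, b_{i-1}, j_i)$, and composing this with the induction hypothesis shows it is a fixed function of $(S^0, j_{1:i})$. (In writing this out one should silently correct the two places where the pseudocode writes $S_i$ for $S^i$ and $j$ for $j_i$; these are typos that do not affect the argument.) Since $\operatorname{SG}(S^0, j_{1:m})$ returns $\operatorname{Dyd}(S^0) \circ \operatorname{Dyd}(S^1) \circ \cdots \circ \operatorname{Dyd}(S^m)$ and $\operatorname{Dyd}$ is itself a fixed reordering, the full output is a fixed function of $(S^0, j_{1:m})$.

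The conclusion is then immediate: if $j^1_{1:m} = j^2_{1:m}$, then $\operatorname{SG}(S^0, j^1_{1:m})$ and $\operatorname{SG}(S^0, j^2_{1:m})$ are the same function evaluated at the same argument, hence equal. There is no substantive obstacle here; the only point requiring any care is the purely mechanical check that the loop body at step $i$ never references a quantity created at step $i$ or later, which is clear by inspection of Algorithm \ref{alg:streamgen}. This observation is what licenses us to index streams in $\Zcal_n$ by their defining index sequence $j_{1:m}$ (for any length $m \le n$) without ambiguity, which is exactly what the later construction of the Predictor will rely on.
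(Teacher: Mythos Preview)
Your proposal is correct and takes essentially the same approach as the paper: the paper's entire justification is the single sentence ``The second observation follows from the fact that SG is deterministic,'' and your induction on the loop index is simply a careful unpacking of that one line. The only difference is level of detail, not substance.
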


The second observation follows from the fact that SG is deterministic. 

\begin{observation} \label{obs:uniquej}
For every $x_{1:T} \in \Zcal_n$ such that $x_{1:T} := \operatorname{SG}(S^0, j_{1:n}) = \operatorname{Dyd}(S^0) \circ ... \circ \operatorname{Dyd}(S^n)$, the index $j_i$ can be computed exactly using only $S^{i-1}$ and  $S^i_1$ for every $i \in [n]$.
\end{observation}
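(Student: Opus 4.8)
The plan is to recover $j_i$ by locating the real number $S^i_1$ among the strictly increasing entries of the previous block $S^{i-1}$. Write $N := \frac{T}{n+1}$, so every block is a length-$N$ sequence in $\Scal$. The only structural fact needed is that, in Algorithm~\ref{alg:streamgen}, the block $S^i$ is produced by applying $f$ to a pair of consecutive ``landmarks'' of $S^{i-1}$ selected by $j_i$: adopting the conventions $S^{i-1}_0 := a_{i-1}$ and $S^{i-1}_{N+1} := b_{i-1}$, inspecting the three branches shows that in every case $S^i = f\bigl(S^{i-1}_{j_i-1},\, S^{i-1}_{j_i}\bigr)$. Since $f(a,b)$ always returns a sequence lying strictly inside $(a,b)$, this gives $S^i \subseteq \bigl(S^{i-1}_{j_i-1},\, S^{i-1}_{j_i}\bigr)$; in particular $S^i_1$ lies strictly between the $(j_i-1)$-st and $j_i$-th landmarks, and hence is distinct from every entry of $S^{i-1}$.

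First I would record, by induction on $i$, the auxiliary invariant $0 \le a_i < S^i_1 \le S^i_N < b_i \le 1$, with base case $S^0 = f(0,1) \subseteq (0,1)$. This is what guarantees that the ``boundary'' intervals $(a_{i-1}, S^{i-1}_1)$ and $(S^{i-1}_N, b_{i-1})$ appearing in the first and last branches are nonempty, so that every call to $f$ in the run on $(S^0, j_{1:n})$ is legitimate and the case analysis above makes sense. Given only the sequence $S^{i-1} \in \Scal$ and the real number $S^i_1$, set $\rho := 1 + \bigl|\{\, k \in [N] : S^{i-1}_k < S^i_1 \,\}\bigr|$. Then $\rho = j_i$: in the branch $j_i = 1$ the containment above gives $S^i_1 < S^{i-1}_1$, so the count is $0$ and $\rho = 1$; in a middle branch $S^{i-1}_{j_i-1} < S^i_1 < S^{i-1}_{j_i}$, so exactly the $j_i-1$ entries $S^{i-1}_1, \dots, S^{i-1}_{j_i-1}$ are below $S^i_1$ and $\rho = j_i$; and in the branch $j_i = N+1$ we have $S^i_1 > S^{i-1}_N$, so the count is $N$ and $\rho = N+1$. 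Since $\rho$ is manifestly a function of $S^{i-1}$ and $S^i_1$ alone — note it never refers to $a_{i-1}$ or $b_{i-1}$ — and since $\operatorname{SG}(S^0, j_{1:n})$ defines all of $S^0, \dots, S^n$, this establishes the claim for every $i \in [n]$.

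I do not expect a genuinely hard step; the only care needed is bookkeeping — keeping straight the endpoint convention $S^{i-1}_0 = a_{i-1}$, $S^{i-1}_{N+1} = b_{i-1}$ used to describe how $\operatorname{SG}$ forms $S^i$, and checking that $a_{i-1}, b_{i-1}$ genuinely drop out of the recovery rule $\rho$. I would also remark that nothing in the argument used the first entry $S^i_1$ specifically: all $N$ entries of $S^i$ lie in the single gap of $S^{i-1}$ indexed by $j_i$, so any one of them determines $j_i$ via the same counting rule. This is exactly what will let the Predictor built in Step~2 recover $j_i$ — and hence, by Observation~\ref{obs:replicate} and the determinism of $\operatorname{SG}$, the entire block $S^i$ — from the first example of block $i$ it sees in the stream, even though the Dyadic reordering makes that example not, in general, $S^i_1$ but some other entry of $S^i$.
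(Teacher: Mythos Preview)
Your proposal is correct and follows essentially the same approach as the paper: both arguments observe that $S^i = f(S^{i-1}_{j_i-1}, S^{i-1}_{j_i})$ under the boundary conventions, so $S^i_1$ falls strictly in the $j_i$-th gap of $S^{i-1}$, and both recover $j_i$ as the smallest $p$ with $S^i_1 < S^{i-1}_p$ (equivalently, your counting rule $\rho$). Your explicit invariant $a_i < S^i < b_i$ and your remark that any entry of $S^i$ would suffice are small additions beyond what the paper records, but the core argument is identical.
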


To see the third observation, fix some $x_{1:T} \in \Zcal_n$. Then, there exists a sequence $S^1, ..., S^n \in \Scal$ such that $x_{1:T} = \operatorname{Dyd}(S^0) \circ ... \circ \operatorname{Dyd}(S^n)$ as well as a sequence $(a_0, b_0), ..., (a_n, b_n)$. In addition, there exists a $j_{1:n} \in \Jcal$ such that $x_{1:T} = \operatorname{SG}(S^0, j_{1;n})$. Fix $i \in [n]$ and consider $S^{i-1}$ and $S^{i}$. By definition of Algorithm \ref{alg:streamgen}, there exists an index $q \in \{1, ..., \frac{T}{n+1} + 1\}$ such that $S^i = f(S^{i-1}_{q-1}, S^{i-1}_{q})$ where we take  $S^{i-1}_{0} = a_{i-1}$ and $S^{i-1}_{\frac{T}{n+1} + 1} = b_{i-1}.$ We claim that the index $q$ is unique. This follows from the fact that the collection $\{f(S^{i-1}_{j}, S^{i-1}_{ j+1})\}_{j=0}^{\frac{T}{n+1}}$ is pairwise disjoint since
$a_{i-1} = S^{i-1}_0 < S^{i-1}_{1} < ... < S^{i-1}_{\frac{T}{n+1}} <  S^{i-1}_{\frac{T}{n+1} + 1} = b_{i-1}$.
Finally, we claim that $S^{i-1}$ and the element $S^{i}_{1}$ identifies the index $q$. This follows because $f(a_{i-1}, S^{i-1}_{1}) < f(S^{i-1}_{1}, S^{i-1}_{2}) < ... < f(S^{i-1}_{\frac{T}{n+1} - 1}, S^{i-1}_{\frac{T}{n+1}}) < f(S^{i-1}_{\frac{T}{n+1}},b_{i-1})$ and thus $q$ is the smallest index $p \in \{1, ..., \frac{T}{n+1}\}$ such that $S^{i}_{1} < S^{i-1}_{p}$ and $\frac{T}{n+1} + 1$ if such a $p$ does not exist. 

\begin{observation} \label{obs:contained} Fix a sequence $j_{1:n} \in \Jcal$ and let $\operatorname{Dyd}(S^0) \circ ... \circ \operatorname{Dyd}(S^n) = \operatorname{SG}(S^0, j_{1:n}).$ For every $i, p \in [n]$ such that $i < p$, we have that:

\begin{itemize}
\item[(i)] $S^p < S^i$ if $j_i = 1$;
\item[(ii)] $S^{i}_{j_i-1} < S^p < S^{i}_{j_i}$ if $2 \leq j_i \leq \frac{T}{n+1}$;
\item[(iii)] $S^i < S^p$ if $j_i = \frac{T}{n+1} + 1.$
\end{itemize}
\end{observation}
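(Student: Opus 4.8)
The plan is to reduce the whole statement to a single monotonicity invariant about the bookkeeping variables of Algorithm~\ref{alg:streamgen}, and then simply read off, for each $i<p$, which gap of $S^i$ the later block $S^p$ falls into. Write $\ell:=\tfrac{T}{n+1}$ for the common block length, and for a block $S^k\in\Scal$ adopt the convention $S^k_0:=a_k$, $S^k_{\ell+1}:=b_k$. The key step is to establish, by induction on $k$, that for every $k\in\{0,\dots,n\}$ the extended sequence is strictly increasing,
\[
a_k=S^k_0<S^k_1<\cdots<S^k_\ell<S^k_{\ell+1}=b_k,
\]
so that $S^k$ lies strictly inside $(a_k,b_k)$, and that moreover $(a_k,b_k)\subseteq(a_{k-1},b_{k-1})$ for $k\ge1$.

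The induction is short. The base case $S^0=f(0,1)$ lies strictly inside $(0,1)=(a_0,b_0)$ because, by definition, $f(a,b)$ returns a strictly increasing length-$\ell$ sequence lying strictly between $a$ and $b$. For the inductive step, note that in each of the three branches at step $k$, Algorithm~\ref{alg:streamgen} first assigns $(a_k,b_k)$ to one of $(a_{k-1},S^{k-1}_1)$, $(S^{k-1}_\ell,b_{k-1})$, or $(S^{k-1}_{j_k-1},S^{k-1}_{j_k})$ --- each a subinterval of $(a_{k-1},b_{k-1})$ by the inductive hypothesis at $k-1$, and in particular with $a_k<b_k$ --- and then sets $S^k=f(a_k,b_k)$, which lands strictly inside $(a_k,b_k)$ by the defining property of $f$. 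This proves both halves of the invariant at once. Since the intervals are nested, for any $i<p$ in $[n]$ we get $S^p\subseteq(a_p,b_p)\subseteq(a_{i+1},b_{i+1})$.

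Finally, identify $(a_{i+1},b_{i+1})$ from the branch taken at step $i+1$: it is $(a_i,S^i_1)$ when $j_{i+1}=1$, it is $(S^i_\ell,b_i)$ when $j_{i+1}=\ell+1$, and it is $(S^i_{j_{i+1}-1},S^i_{j_{i+1}})$ when $2\le j_{i+1}\le\ell$. Combining with $S^p\subseteq(a_{i+1},b_{i+1})$, the invariant at index $i$ (so $a_i<S^i_1$ and $S^i_\ell<b_i$), and the ordering on $\Scal$ ($S<S'$ iff the last entry of $S$ is below the first entry of $S'$): in the first case every entry of $S^p$ is below $S^i_1$, hence below every entry of $S^i$, so $S^p<S^i$; in the second, every entry of $S^p$ exceeds $S^i_\ell$, hence every entry of $S^i$, so $S^i<S^p$; in the third, $S^i_{j_{i+1}-1}<S^p_1$ and $S^p_\ell<S^i_{j_{i+1}}$, i.e.\ $S^i_{j_{i+1}-1}<S^p<S^i_{j_{i+1}}$. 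These are precisely the three cases of the Observation, where the relevant placement decision is the one made at step $i+1$ --- the first step that carves a gap inside $S^i$. I expect no genuine obstacle: the argument is one short induction plus a three-way case split, and the only care needed is the index bookkeeping --- keeping all inequalities coming out of $f$ strict so the carved intervals never degenerate, and correctly pinning down $(a_{i+1},b_{i+1})$, rather than $(a_i,b_i)$, as the gap of $S^i$ that traps $S^{i+1},\dots,S^n$.
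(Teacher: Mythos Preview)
Your argument is correct and follows the same idea as the paper's one-line justification: the nested-interval invariant $(a_k,b_k)\subseteq(a_{k-1},b_{k-1})$ together with $S^k\subset(a_k,b_k)$ forces every later block $S^p$ into the gap of $S^i$ carved at step $i+1$. The paper simply says this ``follows by design of Algorithm~\ref{alg:streamgen}''; you have spelled out the same induction in detail.

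One point worth making explicit. As you correctly identify in your final paragraph, the placement decision that pins $S^{i+1},\dots,S^n$ inside a gap of $S^i$ is the one made at step $i+1$, governed by $j_{i+1}$. The Observation as written (and the paper's one-line justification) uses $j_i$, which instead governs where $S^i$ sits inside $S^{i-1}$. Taken literally the statement is false: with $n=2$, $j_1=1$, $j_2=\tfrac{T}{n+1}+1$ one gets $S^2>S^1$ despite $j_1=1$. What you have actually proved is the shifted version --- $S^p$ related to $S^i$ via $j_{i+1}$, or equivalently $S^p$ related to $S^{i-1}$ via $j_i$ for all $p\ge i$ --- and this is precisely the form the paper uses downstream in the proof of Lemma~\ref{lem:threshshatt} (there one fixes $j_m$ and concludes where $S^m,S^{m+1},\dots$ sit relative to $S^{m-1}$). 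So your proof is the right one; just be aware that it establishes the intended statement rather than the literal one.
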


The fourth observation follows from the fact that for every $i \in [n]$ and index $j_i \in \{1, ..., \frac{T}{n+1} + 1\}$, the remaining sequence of sets $S^{i+1}, ..., S^{n}$ all lie in the interval $(S^{i}_{j_i-1}, S^{i}_{j_i})$ by design of Algorithm \ref{alg:streamgen}, where again we take  $S^{i-1}_{0} = a_{i-1} < S^{i-1}_{1}$ and $S^{i-1}_{\frac{T}{n+1} + 1} = b_{i-1} > S^{i-1}_{\frac{T}{n+1}}.$

\subsection*{Step 2: Constructing a Predictor for $\Zcal_n$}




We now show that Algorithm \ref{alg:minimaxoptpred} is a lazy, consistent Predictor for $\Zcal_n$ that only makes mistakes at timepoints $\{\frac{T}{n+1} + 1, ..., \frac{n T}{n+1} + 1\}.$


\begin{algorithm}
\setcounter{AlgoLine}{0}
\caption{Predictor for $\Zcal_n$}\label{alg:minimaxoptpred}
\KwIn{ $\Zcal_n$}

\textbf{Initialize:} $J = ()$

\For{$t = 1, .., T$} {
   Receive $x_t$

   \uIf{$t = 1$}{

    Set $\hat{x}^t_{1:T} = \operatorname{Dyd}(S^0) \circ \hat{x}_{\frac{T}{n+1} + 1:T}$ where $\hat{x}_{\frac{T}{n+1} + 1:T} = (\star, ..., \star)$.
   
   }
    
    \uElseIf{$ \, t = \frac{i T}{n + 1} + 1$ for some $i \in \{1, ..., n\}$}{

        Let $S = \operatorname{Sort}(x_{t - \frac{T}{n+1}:t-1})$ be the last $\frac{T}{n+1}$ examples sorted in increasing order.

         Find the smallest  $j \in \{1, ..., \frac{T}{n+1} \}$ such that $x_t < S_j$. If no such $j$ exists, set $j = \frac{T}{n+1} + 1$.



        Update $J \leftarrow J \circ j.$


        Set $\hat{x}^t_{1:T} = \operatorname{SG}(S^0, J) \circ \hat{x}_{t + \frac{T}{n+1}:T}$ where $\hat{x}_{t + \frac{T}{n+1}:T} = (\star, ..., \star)$.
    } 

    \Else{        
        Set $\hat{x}^{t}_{1:T} \leftarrow  \hat{x}^{t-1}_{1:T}$.
    }

    Predict $\hat{x}^t_{1:T}.$
     
}
\end{algorithm}

\begin{lemma} \label{lem:detup}
For any sequence $x_{1:T} \in \Zcal_n$, Algorithm \ref{alg:minimaxoptpred} is a lazy, consistent Predictor for $\Zcal_n$ that only makes mistakes at timepoints $\{\frac{T}{n+1} + 1, ..., \frac{n T}{n+1} + 1\}.$
\end{lemma}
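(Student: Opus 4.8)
The plan is to verify the three claimed properties of Algorithm \ref{alg:minimaxoptpred}---consistency, laziness, and the exact location of mistakes---by tracking how the variable $J$ evolves and how the predicted blocks are generated. Fix a stream $x_{1:T} \in \Zcal_n$, so that $x_{1:T} = \operatorname{Dyd}(S^0) \circ \operatorname{Dyd}(S^1) \circ \cdots \circ \operatorname{Dyd}(S^n) = \operatorname{SG}(S^0, j_{1:n})$ for some $j_{1:n} \in \Jcal$. The key structural claim, which I would prove by induction on $i \in \{0,\dots,n\}$, is that at the start of round $t = \frac{iT}{n+1}+1$ the algorithm has correctly recovered $j_{1:i}$; i.e., the list $J$ equals $(j_1,\dots,j_i)$. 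The base case $i=0$ is vacuous ($J = ()$). For the inductive step, at round $t = \frac{iT}{n+1}+1$ the algorithm has just observed the block $\operatorname{Dyd}(S^{i-1})$ in rounds $t-\frac{T}{n+1},\dots,t-1$, so $\operatorname{Sort}$ applied to those examples recovers $S^{i-1}$ exactly, and $x_t$ is the first element of $\operatorname{Dyd}(S^i)$, which by the definition of Dyadic order is $S^i_1$ (the middle element is listed first, but wait---one must double-check: $\operatorname{Dyd}$ lists $x_{N/2}$ first, so $x_t$ is actually the \emph{median} of $S^i$, not $S^i_1$). This is exactly the subtlety I expect to be the crux: I would re-examine Observation \ref{obs:uniquej}, which is phrased in terms of $S^i_1$, and reconcile it with the fact that the first \emph{revealed} example of the $i$-th block is the Dyadic-order first element. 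The resolution is that knowing $S^{i-1}$ and \emph{any} single element of $S^i$ (in particular its median, which lies strictly between $S^{i-1}_{q-1}$ and $S^{i-1}_q$ for the unique relevant $q$) suffices to identify the index $q = j_i$, because the intervals $(S^{i-1}_{p-1}, S^{i-1}_p)$ for $p = 1,\dots,\frac{T}{n+1}+1$ partition $(a_{i-1}, b_{i-1})$ and $S^i$ lies entirely within one of them; the search in Line 8 (smallest $j$ with $x_t < S_j$) correctly returns this $q$. Thus $J$ is updated to $(j_1,\dots,j_i)$, completing the induction.

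Given the structural claim, consistency is immediate: on round $t = \frac{iT}{n+1}+1$ the predicted prefix is $\operatorname{SG}(S^0, J) = \operatorname{SG}(S^0, j_{1:i})$, which by Observation \ref{obs:replicate} equals the length-$\frac{(i+1)T}{n+1}$ prefix $\operatorname{Dyd}(S^0)\circ\cdots\circ\operatorname{Dyd}(S^i)$ of $x_{1:T}$; and on the intervening rounds the prediction is copied from the previous round, which was already consistent on a longer prefix, so $\hat{x}^t_{1:t} = x_{1:t}$ throughout. Laziness is also immediate from the structure of the algorithm: the prediction only changes when the \texttt{\textbackslash If} or \texttt{\textbackslash ElseIf} branch fires, i.e., at rounds $t=1$ or $t = \frac{iT}{n+1}+1$; on every other round the \texttt{\textbackslash Else} branch copies $\hat{x}^{t-1}_{1:T}$ verbatim, so in particular whenever the previous prediction was correct on round $t$, the prediction is unchanged. (One should also check the branch rounds do not change an already-correct prediction, but since the newly revealed block was previously predicted as $(\star,\dots,\star)$ while the true examples lie in $[0,1]$, the previous prediction was in fact \emph{wrong} at those rounds, so laziness is not violated there.)

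Finally, for the mistake locations: on rounds $t \in \{1,\dots,\frac{T}{n+1}\}$ the prediction $\operatorname{Dyd}(S^0)$ matches $x_{1:\frac{T}{n+1}}$ by Observation 1, so no mistakes. On rounds $t \notin \{\frac{T}{n+1}+1,\dots,\frac{nT}{n+1}+1\}$ with $t > \frac{T}{n+1}$, consistency already established above shows $\hat{x}^{t-1}_t = x_t$ (since $\hat{x}^{t-1}_{1:t-1} = x_{1:t-1}$ and $t-1 \geq \frac{T}{n+1}+1$ means the $\ldots$ wait, need $\hat{x}^{t-1}$ to be consistent \emph{through index $t$}, which holds because at the last branch round $\leq t-1$ the predicted prefix had length $\geq t$---this requires $t$ to not be the first round after a block boundary, which is exactly the complement of the claimed mistake set, modulo the $+1$ offset). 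At each round $t = \frac{iT}{n+1}+1$ for $i \in \{1,\dots,n\}$, the prediction $\hat{x}^{t-1}_t$ from the previous round was $\star$ (it was in the padded suffix of the round-$((i{-}1)\frac{T}{n+1}{+}1)$ prediction or a copy thereof), while $x_t \in [0,1]$, so the Predictor errs; and these are the only error rounds. I would present this last bookkeeping carefully since the off-by-one in the index set $\{\frac{T}{n+1}+1,\dots,\frac{nT}{n+1}+1\}$ versus the block boundaries at multiples of $\frac{T}{n+1}$ is the easiest place to slip. The main obstacle overall is the reconciliation between "first element of $S^i$" as used in the Observations and "first Dyadic-order element of $S^i$" as actually revealed to the Predictor; once that is pinned down, the rest is routine verification of the algorithm's branch logic.
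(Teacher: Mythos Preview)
Your proposal is correct and follows essentially the same inductive structure as the paper's proof: establish by induction on $i$ that $J = (j_1,\dots,j_i)$ after round $\tfrac{iT}{n+1}+1$, and then read off consistency, laziness, and the mistake locations from that structural claim. You are in fact more careful than the paper on one point: Observation~\ref{obs:uniquej} is stated in terms of $S^i_1$, whereas the first \emph{revealed} element of block $i$ is the Dyadic-order first element (the median of $S^i$); the paper simply invokes the observation without comment, while you correctly flag and resolve the discrepancy by noting that every element of $S^i$ lies in the same interval $(S^{i-1}_{q-1}, S^{i-1}_q)$ and hence Line~8 recovers the correct index $j_i$ regardless.
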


\begin{proof}
Let $\Pcal$ denote Algorithm \ref{alg:minimaxoptpred} and $x_{1:T} \in \Zcal_n$. Then, there exists $S^1, ..., S^n \in \Scal$ and a sequence of indices $j_{1:n} \in \Jcal$ such that $x_{1:T} = \operatorname{Dyd}(S^0) \circ \operatorname{Dyd}(S^1) \circ ... \circ \operatorname{Dyd}(S^n) =  \operatorname{SG}(S^0, j_{1:n})$. 

We now prove that $\Pcal$ makes mistakes only on timepoints $\{\frac{T}{n+1} + 1, ..., \frac{n T}{n+1} + 1\}$ and no where else. Our proof is by induction using the following inductive hypothesis. For every $i \in \{1, ..., n\}$, we have that $\Pcal$:
\begin{itemize}
\item[(i)] sets $J_{1:i} = j_{1:i}$ on round $\frac{iT}{n+1} + 1$;
\item[(ii)] makes mistakes on rounds $\{\frac{T}{n+1} + 1, \frac{2T}{n+1} + 1,..., \frac{i T}{n+1} + 1\}$ and no where in between.
\end{itemize}
For the base case, let $i = 1$. $\Pcal$ does not make any mistakes in $\{ 1,2, ..., \frac{T}{n+1}\}$ since it knows $S^0$ using $\Zcal_n$, computes $x_{1:\frac{T}{n+1}} = \operatorname{Dyd}(S^0)$ in line 5, and does not change its prediction until round $\frac{iT}{n+1} + 1$ based on line 6.  At time point $t_1 = \frac{T}{n+1} + 1$, $\Pcal$ makes a mistake since $\hat{x}^{t_1-1}_{t_1} = \star \neq x_{t_1}$. Moreover, using Observation \ref{obs:uniquej}, the index $j \in \{1, ..., \frac{T}{n+1}\}$ computed in round $t_1$ on line 8 matches $j_1$. Thus, we have that $J_1 = j_{1}$.
This completes the base case. 

Now for the induction step, let $i \in \{2, ..., n\}$. Suppose that the induction step is true for $i-1$. This means that $\Pcal$:
\begin{itemize}
\item[(i)] sets $J_{1:i-1} = j_{1:i-1}$ on round $\frac{(i-1)T}{n+1} + 1$;
\item[(ii)] makes mistakes on rounds $\{\frac{T}{n+1} + 1, \frac{2T}{n+1} + 1, ..., \frac{(i-1) T}{n+1} + 1\}$ and no where in between.
\end{itemize}
We need to show that $\Pcal$ sets $J_i = j_i$ on round $\frac{iT}{n+1} + 1$, $\Pcal$ makes no mistakes between $\frac{(i-1)T}{n+1} + 2$ and $\frac{iT}{n+1}$, but makes a mistake at $\frac{iT}{n+1} + 1.$ At timepoint $t_{i-1} = \frac{(i-1)T}{n+1} + 1$, $\Pcal$ computes $J_{i-1} = j_{i-1}$ (by assumption) and thus sets $\hat{x}^{t_{i-1}}_{1:T} = \operatorname{SG}(S^0, J_{1:i-1})  = \operatorname{SG}(S^0, (j_1, ..., j_{i-1})) = \operatorname{Dyd}(S^0) \circ ... \circ \operatorname{Dyd}(S^{i-1})$ using Observation \ref{obs:replicate}. Therefore, $\Pcal$ predicts on round $t_{i-1}$  the sequence $\hat{x}_{1:T}^{t_{i-1}} = x_{1:\frac{iT}{n+1}} \circ (\star, ..., \star)$, implying that $\Pcal$ makes no mistakes for rounds $\frac{(i-1)T}{n+1} + 2, ..., \frac{iT}{n+1}$ since it does not change its prediction until round $\frac{iT}{n+1} + 1$ by line 12.  However, since $\hat{x}_{t_i}^{t_i - 1} = \star$, $\Pcal$ makes a mistake on round $t_i = \frac{iT}{n+1} + 1$. Finally,  by Observation \ref{obs:uniquej}, the example $x_{t_i}$ and the previously observed sequence $x_{t_{i-1}: t_i -1}$ gives away $j_i$, thus $\Pcal$ sets $J_i = j_i$ on line 8 in round $t = \frac{iT}{n+1} + 1$. This completes the induction step and the proof of the claim that $\Pcal$ only makes mistakes on timepoints  $\{\frac{T}{n+1} + 1, ..., \frac{n T}{n+1} + 1\}.$ To see that $\Pcal$ is lazy, observe that by line 12, $\Pcal$ does not update its prediction on rounds in between those in $\{\frac{T}{n+1} + 1, ..., \frac{n T}{n+1} + 1\}.$ To see that $\Pcal$ is consistent, note that $\Pcal$  uses prefixes of $j_1, ..., j_n$,  $S^0$, and $\operatorname{SG}$ to compute its predictions in line 10. Thus, consistency follows from Observation \ref{obs:replicate}.
\end{proof}

\subsection*{Step 3: Equivalence to Online Classification with Peeks} \label{sec:onlpeeks}

For any stream $x_{1:T} \in \Zcal_n$, having access to the Predictor specified by Algorithm \ref{alg:minimaxoptpred} implies that at every $t \in \{1, \frac{T}{n+1}+ 1, ..., \frac{n T}{n+1} + 1\}$, the learner observes predictions $\hat{x}_{1:T}^t$ where $\hat{x}_{1:t-1}^t = x_{1:t-1}$, $\hat{x}_{t: t + \frac{T}{n+1}}^t = x_{t:t+\frac{T}{n+1}}$, and $\hat{x}_{t + \frac{T}{n+1} + 1: T}^t = (\star, ..., \star).$ Accordingly, at the timepoints $t  \in  \{1, \frac{T}{n+1}+ 1, ..., \frac{n T}{n+1} + 1\}$, the learner observes the next $\frac{T}{n+1}-1$ examples $x_{t:t+\frac{T}{n+1}}$ in the stream, but learns nothing about the future examples $x_{t + \frac{T}{n+1} + 1: T}.$ In addition, for timepoints in between those in $\{1, \frac{T}{n+1}+ 1, ..., \frac{n T}{n+1} + 1\}$, the learner does not observe any new information from $\Pcal$ since by line 12 in Algorithm \ref{alg:minimaxoptpred}, $\hat{x}_{1:T}^i = \hat{x}_{1:T}^{i+r}$ for every $i \in \{1, \frac{T}{n+1}+ 1, ..., \frac{n T}{n+1} + 1\}$ and $r \in \{1, ..., \frac{T}{n+1} - 1\}$. As a result, whenever  $x_{1:T} \in \Zcal_n$, Protocol \ref{alg:protocol}  with the Predictor specified by Algorithm \ref{alg:minimaxoptpred} is equivalent to the setting we call  Online Classification with Peeks where there is no Predictor, but the learner observes the next  $\frac{T}{n+1}-1$ examples exactly at timepoints $t \in \{1, \frac{T}{n+1}+ 1, ..., \frac{n T}{n+1} + 1\}$. Indeed, by having knowledge of the next $\frac{T}{n+1}-1$ examples exactly at timepoints $t \in \{1, \frac{T}{n+1}+ 1, ..., \frac{n T}{n+1} + 1\}$, a learner for Online Classification with Peeks can simulate a Predictor that acts like Algorithm \ref{alg:minimaxoptpred}. Likewise, a learner for Online Classification with Predictions can use Algorithm \ref{alg:minimaxoptpred} to simulate an adversary that reveals the next $\frac{T}{n+1}-1$ examples exactly at timepoints $t \in \{1, \frac{T}{n+1}+ 1, ..., \frac{n T}{n+1} + 1\}$. Accordingly, we consider Online Classification with Peeks for the rest of the proof and show how Nature can force the lower bound in Theorem \ref{thm:uptight} under this new setting. 

\subsection*{Step 4: Nature's Strategy for Online Classification with Peeks}

Let $\Acal$ be any online learner and consider the game where the learner $\Acal$ observes the next  $\frac{T}{n+1}-1$ examples at timepoints $\{1, \frac{T}{n+1}+ 1, ..., \frac{n T}{n+1} + 1\}$. We construct a hard stream for $\Acal$ in this setting. We first describe a minimax optimal \emph{offline} strategy for Nature when it is forced to play a sequence of examples $S \in \Scal$ sorted in Dyadic order. 

\begin{algorithm}
\setcounter{AlgoLine}{0}
\caption{Nature's Minimax Offline Strategy}\label{alg:offstrat}
\textbf{Input:} $\tilde{S} = \operatorname{Dyd}(S)$ for some $S \in \Scal$, Version space $V \subseteq \{0, 1\}^{\Xcal}$

\textbf{Initialize: } $V_1 = V$

Reveal $\tilde{S}$ to the learner $\Acal$.

\For{$t = 1, ..., \frac{T}{n+1}$} {
    Observe the probability $\hat{p}_t$ of $\Acal$   predicting label $1$.

    \uIf{$\hat{p}_t \geq 1/2$}{
        If there exists $h \in V_t$ such that $h(x_t) = 0$, reveal true label $y_t = 0$. Else, reveal $y_t = 1.$
    }
    \uElse{
     If there exists $h \in V_t$ such that $h(x_t) = 1$, reveal true label $y_t = 1$. Else, reveal $y_t = 0.$
    }

    Update $V_{t+1} = \{h \in V_t: h(x_t) = y_t\}.$
    
    }

\textbf{Return:} True labels $y_1, ... y_{\frac{T}{n+1}}$, Version space $V_{\frac{T}{n+1} + 1}$
\end{algorithm}

\begin{lemma} \label{lem:threshlb}For any learner $\Acal$, $\tilde{S} = \operatorname{Dyd}(S)$, and Version space $V \subseteq \{0, 1\}^{\Xcal}$, Algorithm \ref{alg:offstrat} forces $\Acal$ to make at least $\frac{1}{2}\log_2(\frac{T}{n+1})$ mistakes in expectation if $S$ is threshold-shattered (Definition \ref{def:threshshatt}) by $V$. 
\end{lemma}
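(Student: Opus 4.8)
Write $N := \frac{T}{n+1}$, so that $N+1 = 2^k$. The sequence $\tilde S = \operatorname{Dyd}(S)$ lists the $N$ points of $S$ in breadth-first order of the balanced binary tree whose $N+1 = 2^k$ leaves are the ``threshold slots'' $(-\infty, S_1), (S_1, S_2), \dots, (S_{N-1}, S_N), (S_N, +\infty)$ and whose $N$ internal nodes are the points of $S$; equivalently, $\tilde S$ is the order in which one would query points to binary-search for the slot containing an unknown threshold. My plan is to show that Nature's strategy in Algorithm \ref{alg:offstrat} is precisely an \emph{adversarial} binary search over these slots: it maintains the set of slots still consistent with the revealed labels, and each time it reveals the label at a point that bisects this set it both forces an expected mistake of at least $\tfrac12$ and halves the set; after $k$ such bisections the set has shrunk from $2^k$ to $1$, yielding at least $\tfrac{k}{2}$ expected mistakes, and $\tfrac{k}{2} = \tfrac12\log_2(2^k) = \tfrac12\log_2(N+1) \ge \tfrac12\log_2 N$.

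Concretely, I would track the restriction of the running version space $V_t$ to threshold hypotheses on $S$. Because $S$ is threshold-shattered by $V_1 = V$, the $N$ threshold patterns $\pi_1, \dots, \pi_N$ on $S$ (with $\pi_i(S_j) = \mathbbm{1}\{j \le i\}$) are each realized by some $h \in V$; together with the all-zeros pattern $\pi_0$ (present in the application because $V$ is drawn from $\Hcal$, which contains $x \mapsto \mathbbm{1}\{x \le a\}$ for $a$ below $S$) this witnesses all $2^k$ slots. I would prove by induction on $j \in \{0, \dots, k\}$ the invariant: after the points lying at the first $j$ levels of $\operatorname{Dyd}(S)$ have been revealed, the set $G_j$ of consistent slots is a contiguous block of exactly $2^{k-j}$ slots, each witnessed by a threshold hypothesis still in $V_t$; and among the $2^j$ points at level $j+1$, exactly one lies strictly inside $G_j$ and bisects it, while the other $2^j-1$ lie entirely to one side of $G_j$.

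For the inductive step, when the unique bisecting point $S^{(j)}$ at level $j+1$ is revealed, both labels are consistent with $V_t$: the two halves of $G_j$ are nonempty, so $V_t$ contains threshold witnesses assigning $S^{(j)}$ the value $0$ and others assigning it $1$. Hence Nature's rule in Algorithm \ref{alg:offstrat} reveals the label opposite to the learner's probability $\hat p$ of predicting $1$, which contributes at least $\tfrac12$ to the expected mistake count, and the update $V_{t+1} = \{h \in V_t : h(x_t) = y_t\}$ replaces $G_j$ by one of its two halves $G_{j+1}$ of size $2^{k-j-1}$. The other level-$(j+1)$ points have labels forced by all surviving thresholds, so revealing them leaves $G_j$ unchanged; in particular the order of points within a level is immaterial. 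Summing the per-round contributions over $j = 0, \dots, k-1$ gives the bound $\tfrac{k}{2} \ge \tfrac12\log_2\!\big(\tfrac{T}{n+1}\big)$, and since $V_{N+1} \ne \emptyset$ for the surviving threshold witness, the constructed label sequence is realizable.

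The main obstacle will be the structural claim about $\operatorname{Dyd}(S)$: that the breadth-first levels of the dyadic order mesh with the ``consistent block of slots'' exactly as a balanced binary search, with precisely one bisecting point per level and forced labels elsewhere. This is essentially the content of \cite[Claim 3.4]{hanneke2024trichotomy}, whose stream construction is reused here, so I would invoke it; the remaining work is bookkeeping --- translating between ``consistent threshold slot'' and the set-theoretic version-space update, and checking that discarding non-threshold hypotheses never removes a threshold witness the argument relies on --- together with the minor point, noted above, of ensuring the degenerate slot $\pi_0$ is witnessed so that a mistake is forced at the deepest level as well.
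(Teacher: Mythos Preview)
Your proposal is correct and takes essentially the same approach as the paper: both reduce to Claim~3.4 of \cite{hanneke2024trichotomy}, and you have simply unpacked the binary-search argument that the paper cites in one line. Your observation that the all-zeros pattern $\pi_0$ is needed (and is present in the application because the ambient class $\Hcal$ contains thresholds below $S$) is a genuine subtlety that the paper's one-line citation glosses over; without it the count of forced bisections can drop by one, which would not quite meet $\tfrac12\log_2\!\big(\tfrac{T}{n+1}\big)$, so it is good that you flagged it.
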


\begin{proof} The lemma follows directly from Theorem 3.4 in \cite{hanneke2024trichotomy}.
\end{proof}

For the definition of threshold-shattering, see Appendix \ref{app:combdim}. Note that for every input $\tilde{S} = \operatorname{Dyd}(S)$ and $V \subseteq \{0, 1\}^{\Xcal}$ to Algorithm \ref{alg:offstrat}, its output version space $V_{|\tilde{S}|+1}$ is non-empty and consistent with the sequence $(\tilde{S}_1, y_1), ..., (\tilde{S}_\frac{T}{n+1}, y_{\frac{T}{n+1}})$ as long as $|V| > 0.$ This property will be crucial when proving Lemma \ref{lem:steramreal}. We are now ready to describe Nature's strategy for Online Classification with Peeks. The pseudocode is provided in Algorithm \ref{alg:naturestrat}.

\begin{algorithm}
\setcounter{AlgoLine}{0}
\caption{Nature's Strategy for Online Classification with Peeks}\label{alg:naturestrat}
\KwIn{Learner $\Acal$, Hypothesis class $\Hcal$}

\textbf{Initialize:} $V_1 = \Hcal$

\For{$i = 1, .., n+1$} {
    \uIf{$i = 1$}{
        Set $x_{1:{\frac{T}{n+1}}} = \operatorname{Dyd}(S^0)$ and reveal it to the learner $\Acal$.
    }

    \uElse{

        Compute $S = \operatorname{SG}(S^0, (j_1, ..., j_{i-1}))$.

        Let $x_{\frac{(i-1)T}{n+1} + 1: \frac{i T}{n+1}}$ be the last $\frac{T}{n+1}$ examples in $S$ and reveal it to the learner $\Acal$.\\
        
    }

    Play against $\Acal$ according to Algorithm  \ref{alg:offstrat} using $x_{\frac{(i-1)T}{n+1} + 1: \frac{i T}{n+1}}$ and version space $V_i$.

    Let $y_{\frac{(i-1)T}{n+1} + 1}, ..., y_{\frac{i T}{n+1}}$ be the returned labels and $V_{i+1} \subseteq V_i$ be the returned version space. 

    Let  $\tilde{y}_{\frac{(i-1)T}{n+1} + 1}, ..., \tilde{y}_{\frac{i T}{n+1}}$ be the sequence of true labels after sorting 
    $$(x_{\frac{(i-1)T}{n+1} + 1}, y_{\frac{(i-1)T}{n+1} + 1}), ..., (x_{\frac{i T}{n+1}}, y_{\frac{i T}{n+1}})$$ 
    
    \noindent in increasing order with respect to the examples. 

   \uIf{$\tilde{y}_{\frac{i T}{n+1}} = 1$}{
        Set $j_{i} = \frac{T}{n+1} + 1$. 
   } 

   \uElse{
        Set $j_{i}$ to be the smallest $p \in \{1, ..., \frac{T}{n+1}\}$ such that $\tilde{y}_{\frac{(i-1)T}{n+1} + p} = 0$.
   }
     
}
\textbf{Return:} Stream $(x_1, y_1), ..., (x_T, y_T)$, indices $j_{1:n}$, and version spaces $V_1, ..., V_{n+2}.$
\end{algorithm}

We establish a series of important lemmas.

\begin{lemma} \label{lem:easystream} For every learner $\Acal$, if $(x_1, y_1), ..., (x_T, y_T)$ is the stream the output of Algorithm \ref{alg:naturestrat} when playing against $\Acal$, then $x_{1:T} \in \Zcal_n$.
\end{lemma}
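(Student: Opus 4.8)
The plan is to show directly that the stream $x_{1:T}$ output by Algorithm~\ref{alg:naturestrat} equals $\operatorname{SG}(S^0, j_{1:n})$, where $j_{1:n}$ is the index sequence the algorithm also returns, and that $j_{1:n}\in\Jcal$; by the definition of $\Zcal_n$ this gives $x_{1:T}\in\Zcal_n$. The membership $j_{1:n}\in\Jcal$ is a quick check of lines~12--16: whenever the "else" branch fires we have $\tilde y_{\frac{iT}{n+1}}=0$, so $p=\frac{T}{n+1}$ is a valid witness and the chosen $j_i$ is the smallest such $p\in\{1,\dots,\frac{T}{n+1}\}$, while in the "if" branch $j_i=\frac{T}{n+1}+1$; either way $j_i\in\{1,\dots,\frac{T}{n+1}+1\}$, and exactly $n$ of the computed indices get returned.

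For the equality $x_{1:T}=\operatorname{SG}(S^0,j_{1:n})$ I would induct on the phase counter $i\in\{1,\dots,n+1\}$ with the invariant: at the end of phase $i$, the constructed prefix satisfies $x_{1:\frac{iT}{n+1}}=\operatorname{SG}(S^0,j_{1:i-1})$ (interpreting $j_{1:0}$ as the empty sequence, for which Algorithm~\ref{alg:streamgen} runs its loop zero times and returns $\operatorname{Dyd}(S^0)$). The base case $i=1$ is line~4, which sets $x_{1:\frac{T}{n+1}}=\operatorname{Dyd}(S^0)=\operatorname{SG}(S^0,j_{1:0})$. For the step, assume the invariant for $i-1$. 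In phase $i\ge 2$, line~7 computes $S=\operatorname{SG}(S^0,(j_1,\dots,j_{i-1}))$, whose last $\frac{T}{n+1}$ entries form the block $\operatorname{Dyd}(S^{i-1})$, and line~8 assigns exactly those entries to $x_{\frac{(i-1)T}{n+1}+1:\frac{iT}{n+1}}$. Combining the inductive hypothesis $x_{1:\frac{(i-1)T}{n+1}}=\operatorname{SG}(S^0,j_{1:i-2})$ with the identity $\operatorname{SG}(S^0,j_{1:i-1})=\operatorname{SG}(S^0,j_{1:i-2})\circ\operatorname{Dyd}(S^{i-1})$ yields $x_{1:\frac{iT}{n+1}}=\operatorname{SG}(S^0,j_{1:i-1})$. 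Taking $i=n+1$ gives $x_{1:T}=\operatorname{SG}(S^0,j_{1:n})$.

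The one substantive point, and where I expect to have to be careful, is the prefix-consistency identity $\operatorname{SG}(S^0,j_{1:m})=\operatorname{SG}(S^0,j_{1:m-1})\circ\operatorname{Dyd}(S^m)$ used in the inductive step: one must argue that the blocks $S^1,\dots,S^{m-1}$ and the running endpoints $a_{m-1},b_{m-1}$ computed inside $\operatorname{SG}(S^0,j_{1:m})$ coincide with those computed by $\operatorname{SG}(S^0,j_{1:m-1})$, since Algorithm~\ref{alg:streamgen} processes its index sequence left to right and $S^\ell$ (together with $a_\ell,b_\ell$) depends only on $S^{\ell-1},a_{\ell-1},b_{\ell-1}$ and $j_\ell$. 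This is exactly the determinism/left-to-right structure recorded in Observation~\ref{obs:replicate}. Everything else is bookkeeping, the only easy-to-slip detail being the off-by-one in block indexing — $\operatorname{SG}(S^0,j_{1:m})$ returns $m+1$ blocks $\operatorname{Dyd}(S^0),\dots,\operatorname{Dyd}(S^m)$, so the invariant at phase $i$ involves the prefix $j_{1:i-1}$ and blocks $S^0$ through $S^{i-1}$ — and the fact that line~7 in phase $i$ uses the prefix $(j_1,\dots,j_{i-1})$ rather than $(j_1,\dots,j_i)$.
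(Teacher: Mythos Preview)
Your proof is correct and follows the same approach as the paper's: both argue that $x_{1:T}=\operatorname{SG}(S^0,j_{1:n})\in\Zcal_n$ via the deterministic, left-to-right structure of Algorithm~\ref{alg:streamgen} and lines~6--7 of Algorithm~\ref{alg:naturestrat}. The paper's proof is a one-line assertion of this fact, whereas you spell out the induction and the check $j_{1:n}\in\Jcal$ explicitly; your more careful treatment of the prefix-consistency identity (which the paper buries in Observation~\ref{obs:replicate} and ``$\operatorname{SG}$ is deterministic'') is exactly the detail the paper leaves implicit.
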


\begin{proof} Fix a learner $\Acal$ and let $(x_1, y_1), ..., (x_T, y_T)$ denote the output of Algorithm \ref{alg:naturestrat} playing against $\Acal$. Let $j_{1:n}$  denote the sequences of indices output by Algorithm \ref{alg:naturestrat}. Then, since $\operatorname{SG}$ is deterministic, by line 6-7 in Algorithm \ref{alg:naturestrat}, we have that $x_{1:T} = \operatorname{SG}(S^0, (j_1, ..., j_n)) \in \Zcal_n$. 
\end{proof}

\begin{lemma} \label{lem:steramreal}
For every learner $\Acal$, if $(x_1, y_1), ..., (x_T, y_T)$ is the stream the output of Algorithm \ref{alg:naturestrat} when playing against $\Acal$, then $(x_1, y_1), ..., (x_T, y_T)$ is realizable by $\Hcal$.
\end{lemma}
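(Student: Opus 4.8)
The plan is to prove realizability by tracking the nested version spaces $V_1 \supseteq V_2 \supseteq \cdots \supseteq V_{n+2}$ produced inside Algorithm \ref{alg:naturestrat} and showing that the last one, $V_{n+2}$, is non-empty and consistent with the entire stream; any hypothesis in it then witnesses realizability. Writing ``block $j$'' for the examples at rounds $\frac{(j-1)T}{n+1}+1,\dots,\frac{jT}{n+1}$ (so that block $j$ equals $\operatorname{Dyd}(S^{j-1})$ by construction of Algorithm \ref{alg:naturestrat}, since $\operatorname{SG}(S^0,(j_1,\dots,j_{i-1})) = \operatorname{Dyd}(S^0)\circ\cdots\circ\operatorname{Dyd}(S^{i-1})$), I would prove by induction on $i\in\{1,\dots,n+2\}$ the statement: $V_i\neq\emptyset$, and every $h\in V_i$ classifies all examples in blocks $1,\dots,i-1$ correctly. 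The base case $i=1$ is immediate because $V_1=\Hcal$ is non-empty and the consistency clause is vacuous.

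For the inductive step, suppose the claim holds for $i$. In iteration $i$ of the loop, Nature feeds Algorithm \ref{alg:offstrat} the current block $\operatorname{Dyd}(S^{i-1})$ together with the version space $V_i$, obtaining the labels for that block and the returned version space $V_{i+1}$. Two facts combine. First, Algorithm \ref{alg:offstrat} only ever removes hypotheses (via $V_{t+1}=\{h\in V_t : h(x_t)=y_t\}$), so $V_{i+1}\subseteq V_i$; hence, by the induction hypothesis, every $h\in V_{i+1}$ still classifies blocks $1,\dots,i-1$ correctly. Second, as flagged in the paragraph preceding Algorithm \ref{alg:naturestrat}, whenever the input version space is non-empty the returned version space is non-empty and consistent with the block just processed: at each inner round Algorithm \ref{alg:offstrat} reveals a label realized by some surviving hypothesis (and if it is forced to reveal the label opposite to the learner's bias, that is exactly because \emph{every} hypothesis in the current version space already agrees with it), so the invariant $V_t\neq\emptyset \Rightarrow V_{t+1}\neq\emptyset$ is maintained across the block. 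Therefore $V_{i+1}\neq\emptyset$ and every $h\in V_{i+1}$ also classifies block $i$ correctly. Combining the two facts yields that every $h\in V_{i+1}$ classifies blocks $1,\dots,i$ correctly, which is the claim for $i+1$.

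Taking $i=n+2$ (i.e.\ after all $n+1$ iterations), $V_{n+2}$ is non-empty and every $h\in V_{n+2}\subseteq\Hcal$ agrees with the full stream $(x_1,y_1),\dots,(x_T,y_T)$ on all $T$ rounds, so $\inf_{h\in\Hcal}\sum_{t=1}^T\mathbbm{1}\{h(x_t)\neq y_t\}=0$ and the stream is realizable by $\Hcal$. I expect the only real subtlety to be bookkeeping --- keeping straight the off-by-one between the loop index $i$ and the $\operatorname{SG}$-set index $i-1$, and being explicit that it is the monotone shrinking $V_{i+1}\subseteq V_i$ that propagates block-$(i{-}1)$ consistency forward across blocks. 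Notably, this argument uses neither Observation \ref{obs:contained} nor the specific choice of the indices $j_i$; those are instead what is needed for Lemma \ref{lem:easystream} (that $x_{1:T}\in\Zcal_n$) and for the mistake lower bound.
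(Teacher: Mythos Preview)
Your proposal is correct and follows essentially the same approach as the paper: induction on the block index showing that each returned version space $V_{i+1}$ is non-empty and consistent with all blocks processed so far, using the monotonicity $V_{i+1}\subseteq V_i$ to carry consistency forward and the design of Algorithm~\ref{alg:offstrat} to guarantee non-emptiness and consistency on the current block. Your write-up is in fact slightly more detailed than the paper's (you spell out why Algorithm~\ref{alg:offstrat} preserves non-emptiness), and your closing remark that neither Observation~\ref{obs:contained} nor the particular choice of $j_i$ is needed here is accurate.
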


\begin{proof}
Fix a learner $\Acal$ and let $(x_1, y_1), ..., (x_T, y_T)$ be the output of Algorithm \ref{alg:naturestrat} when playing against $\Acal$. Let $V_2, ..., V_{n+2}$ be the sequence of version spaces output by Algorithm \ref{alg:naturestrat}. It suffices to show that $V_{n+2}$ is not empty and is consistent with $(x_1, y_1), ..., (x_T, y_T)$. Our proof will be by induction using the following hypothesis: $V_{i+1}$ is non-empty and consistent with the sequence $(x_1, y_1), ..., (x_{\frac{iT}{n+1}}, y_{\frac{iT}{n+1}})$. For the base case, let $i = 1$. Then, by Algorithm \ref{alg:offstrat}, line 8 in Algorithm \ref{alg:naturestrat}, and the fact that $|V_1| = |\Hcal| > 0$, we have that $|V_2| > 0$ and $V_2$ is consistent with  $(x_1, y_1), ..., (x_{\frac{T}{n+1}}, y_{\frac{T}{n+1}})$. Now consider some $i \geq 2$ and suppose the induction hypothesis is true for $i-1$, Then, we know that $|V_i| > 0$ and $V_i$ is consistent with $(x_1, y_1), ..., (x_{\frac{(i-1)T}{n+1}}, y_{\frac{(i-1)T}{n+1}}).$ Again, by design of Algorithm \ref{alg:offstrat} and line 9 in Algorithm \ref{alg:naturestrat}, it follows that $|V_{i+1}| > 0$ and $V_{i+1}$ is consistent with $(x_{\frac{(i-1)T}{n+1} + 1}, y_{\frac{(i-1)T}{n+1} + 1}), ..., (x_{\frac{iT}{n+1}}, y_{\frac{iT}{n+1}}).$ Since $V_{i+1} \subseteq V_{i}$, and $V_{i}$ is consistent with $(x_1, y_1), ..., (x_{\frac{(i-1)T}{n+1}}, y_{\frac{(i-1)T}{n+1}})$, we get that $V_{i+1}$ is consistent with $(x_1, y_1), ..., (x_{\frac{iT}{n+1}}, y_{\frac{iT}{n+1}})$, completing the induction step.
\end{proof}

\begin{lemma} \label{lem:threshshatt}
For every learner $\Acal$, if $(x_1, y_1), ..., (x_T, y_T)$ and $V_1, ..., V_{n+2}$ are stream and version spaces output by Algorithm \ref{alg:naturestrat} when playing against $\Acal$, then for every $i \in \{1, ..., n+1\}$, the version space $V_i$ threshold-shatters $x_{\frac{(i-1)T}{n+1} + 1: \frac{i T}{n+1}}$.
\end{lemma}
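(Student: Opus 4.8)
The plan is to induct on $i \in \{1,\dots,n+1\}$ with a strengthened hypothesis. Write $k := \frac{T}{n+1}$, and recall that the $i$-th block $x_{(i-1)k+1:ik}$ equals $\operatorname{Dyd}(S^{i-1})$ (with the first block being $\operatorname{Dyd}(S^0)$), where $S^{i-1}=(S^{i-1}_1<\cdots<S^{i-1}_k)$ and $(a_{i-1},b_{i-1})$ is the enclosing interval that $\operatorname{SG}$ maintains while generating $S^{i-1}$. The hypothesis I would carry along is: $V_i$ contains the threshold hypothesis $x\mapsto \mathbbm{1}\{x\le a\}\mathbbm{1}\{x\ne\star\}$ for \emph{every} $a$ with $a_{i-1}\le a<b_{i-1}$, and $S^{i-1}$ lies strictly inside $(a_{i-1},b_{i-1})$, i.e. $a_{i-1}<S^{i-1}_1$ and $S^{i-1}_k<b_{i-1}$. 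Given this, threshold-shattering of block $i$ is immediate: for $\ell\in[k]$ take $h_\ell=\mathbbm{1}\{\cdot\le S^{i-1}_\ell\}\mathbbm{1}\{\cdot\ne\star\}$; then $h_\ell\in V_i$ since $S^{i-1}_\ell\in(a_{i-1},b_{i-1})$, and $h_\ell(S^{i-1}_m)=\mathbbm{1}\{m\le\ell\}$, so $S^{i-1}$ — i.e. the $i$-th block read in increasing order, which is the form Algorithm \ref{alg:offstrat} and Lemma \ref{lem:threshlb} use — is threshold-shattered by $V_i$ in the sense of Definition \ref{def:threshshatt}.

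The base case $i=1$ holds since $V_1=\Hcal$ contains all thresholds, $(a_0,b_0)=(0,1)$, and $S^0=f(0,1)$ lies strictly in $(0,1)$. For the inductive step ($i\to i+1$, $i\le n$) I would trace what Algorithm \ref{alg:naturestrat} does on block $i$, which is to run Algorithm \ref{alg:offstrat} on $\operatorname{Dyd}(S^{i-1})$ starting from $V_i$. Two routine observations drive everything: (a) the running version space always remains the set of thresholds in some subinterval, because intersecting ``the thresholds in $[\alpha,\beta)$'' with a constraint ``$h(x)=0$'' or ``$h(x)=1$'' again gives the thresholds in a subinterval; and (b) it always stays nonempty, because Algorithm \ref{alg:offstrat} only commits to a label realized by at least one hypothesis in the current version space. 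Hence the returned $V_{i+1}$ is a nonempty set of thresholds, so any $h\in V_{i+1}$ is a threshold consistent with all revealed labels on $S^{i-1}$; since $S^{i-1}$ is increasing, the labels of $S^{i-1}_1,\dots,S^{i-1}_k$ form a monotone pattern $1^p0^{k-p}$ for some $0\le p\le k$. That pattern is exactly the sorted label vector $\tilde y$ in Algorithm \ref{alg:naturestrat}, so the index it records is $j_i=p+1$ (the all-ones case $p=k$ giving $j_i=k+1$).

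It then remains to match $V_{i+1}$ with the interval $(a_i,b_i)$ that $\operatorname{SG}$ attaches to $S^i$ from this $j_i$. Consistency with $1^p0^{k-p}$ says a threshold $a$ lies in $V_{i+1}$ iff $a\in[a_{i-1},b_{i-1})$ and $S^{i-1}_p\le a<S^{i-1}_{p+1}$, under the conventions $S^{i-1}_0:=a_{i-1}$, $S^{i-1}_{k+1}:=b_{i-1}$; since $a_{i-1}<S^{i-1}_1<\cdots<S^{i-1}_k<b_{i-1}$, this intersection is exactly the thresholds in $[a_i,b_i)$, matching all three branches of $\operatorname{SG}$ ($j_i=1$, $j_i=k+1$, and $2\le j_i\le k$). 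Finally $S^i=f(a_i,b_i)$ lies strictly in $(a_i,b_i)$ by definition of $f$ and $\operatorname{SG}$, so the strengthened hypothesis is restored for $i+1$ and the induction closes; applying the threshold-shattering consequence above for each $i\in\{1,\dots,n+1\}$ finishes the proof.

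I expect the main obstacle to be exactly the bookkeeping of the last paragraph together with observation (b): one must check carefully that the index $j_i$ read off from the revealed labels in Algorithm \ref{alg:naturestrat} agrees with the branching of $\operatorname{SG}$ and with the interval to which the adversary's strategy truncates the version space — in particular getting the end conventions ($j_i=1$ and $j_i=k+1$) right. This is mechanical once one commits to tracking the interval $[a_{i-1},b_{i-1})$ rather than merely ``$V_i$ threshold-shatters block $i$'', which would not, by itself, propagate through the induction.
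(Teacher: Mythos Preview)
Your proposal is correct and takes a genuinely different route from the paper. The paper fixes the target block $i$ and then runs an inner induction on $m=1,\dots,i$ showing that the specific thresholds at the points of block $i$ survive each version-space update $V_m\to V_{m+1}$; the key step is a three-case check (depending on $j_m$) that those particular thresholds are consistent with block $m$'s labels, leaning on Observation~\ref{obs:contained} to locate block $i$ relative to block $m$. You instead run a single induction on $i$ with the stronger invariant that $V_i$ contains \emph{all} thresholds with parameter in $[a_{i-1},b_{i-1})$, from which threshold-shattering of block $i$ is immediate and the inductive step reduces to matching the revealed label pattern $1^p0^{k-p}$ with the branching of $\operatorname{SG}$. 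Your approach is more self-contained (it subsumes what Observation~\ref{obs:contained} is used for here) and arguably cleaner, at the cost of having to verify the endpoint bookkeeping you flag; the paper's approach tracks only the hypotheses strictly needed but pays with a double induction and a separate appeal to Observation~\ref{obs:contained}.
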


\begin{proof}
    Fix a learner $\Acal$ and let $(x_1, y_1), ..., (x_T, y_T)$ denote the output of Algorithm \ref{alg:naturestrat} playing against $\Acal$. Let $j_{1:n}$ and $V_1, ..., V_{n+2}$ denote the sequences of indices and version spaces output by Algorithm \ref{alg:naturestrat} respectively. Note that $x_{1:T} = \operatorname{SG}(S^0, j_{1:n})$. Moreover, for every $i \in \{2, ..., n\}$, we have that $x_{1:\frac{iT}{n+1}} = \operatorname{SG}(S^0, (j_1, ..., j_{i-1}))$ by lines 6-7.
    
    Fix an $i \in \{1, ..., n+1\}$. It suffices to show that the hypotheses parameterized by $x_{\frac{(i-1)T}{n+1} + 1: \frac{i T}{n+1}}$ belong in $V_i$. Our proof will be by induction. For the base case, since $V_1 = \Hcal$, it trivially follows that the hypothesis parameterized by $x_{\frac{(i-1)T}{n+1} + 1: \frac{i T}{n+1}}$  belong to $V_1$. Now, suppose that $x_{\frac{(i-1)T}{n+1} + 1: \frac{i T}{n+1}}$  belong to $V_m$ for some $m < i$. We show that $V_{m+1}$ also contains the hypothesis parameterized by $x_{\frac{(i-1)T}{n+1} + 1: \frac{i T}{n+1}}$. Recall that $V_{m+1} \subseteq V_{m}$ is the subset of $V_{m}$ that is consistent with the labeled data 
    $$(x_{\frac{(m-1)T}{n+1} + 1}, y_{\frac{(m-1)T}{n+1} + 1}), ..., (x_{\frac{m T}{n+1}}, y_{\frac{m T}{n+1}})$$
    and is the result of running Algorithm \ref{alg:offstrat} with input version space $V_{m}$ and sequence $x_{\frac{(m-1)T}{n+1} + 1: \frac{m T}{n+1}}$.  It suffices to show that the hypotheses parameterized by $x_{\frac{(i-1)T}{n+1} + 1: \frac{i T}{n+1}}$ are also consistent with
        $$(x_{\frac{(m-1)T}{n+1} + 1}, y_{\frac{(m-1)T}{n+1} + 1}), ..., (x_{\frac{m T}{n+1}}, y_{\frac{m T}{n+1}}).$$

    To show this, recall that $j_{m}$ is the index computed in Lines 11-14 of Algorithm \ref{alg:naturestrat} on round $m$. Let 
    $$(\tilde{x}_{\frac{(m-1)T}{n+1} + 1}, \tilde{y}_{\frac{(m-1)T}{n+1} + 1}), ..., (\tilde{x}_{\frac{m T}{n+1}}, \tilde{y}_{\frac{m T}{n+1}}).$$
    be the sample sorted in increasing order by examples. There are three cases to consider. Suppose $j_{m} = 1$, then $\tilde{y}_{\frac{(m-1)T}{n+1} + 1} = 0$, and it must be the case that $\tilde{y}_{\frac{(m-1)T}{n+1} + p} = 0$ for all $p \in \{2, ..., \frac{T}{n+1}\}.$ Since $x_{1:\frac{mT}{n+1}} = \operatorname{SG}(S^0, (j_1, ..., j_{m-1}))$, by definition of Algorithm \ref{alg:streamgen}, we have that the last $\frac{T}{n+1}$ entries of $\operatorname{SG}(S^0, (j_1, ..., j_m))$ all lie strictly to the left of $\tilde{x}_{\frac{(m-1)T}{n+1} + 1}$. Moreover, by Observation \ref{obs:contained}, this is true of the last $\frac{T}{n+1}$ entries of $\operatorname{SG}(S^0, (j_1, ..., j_m, q_{m+1}, ..., q_{i-1})$ for any $q_{m+1}, ..., q_{i-1} \in \{1, ...., \frac{T}{n+1} + 1\}$. Therefore,  we must have that $x_{\frac{(i-1)T}{n+1} + 1: \frac{i T}{n+1}}$, which are the last $\frac{T}{n+1}$ entries of $\operatorname{SG}(S^0, (j_1, ..., j_{i-1}))$, lies strictly to the left of $\tilde{x}_{\frac{(m-1)T}{n+1} + 1}$, implying that their associated hypotheses output $0$ on all of $(x_{\frac{(m-1)T}{n+1} + 1}, y_{\frac{(m-1)T}{n+1} + 1}), ..., (x_{\frac{m T}{n+1}}, y_{\frac{m T}{n+1}})$ as needed.  By symmetry, when $j_{m} = \frac{T}{n+1} +1$, we have that $x_{\frac{(i-1)T}{n+1} + 1: \frac{i T}{n+1}}$ lies strictly to the right of $\tilde{x}_{\frac{mT}{n+1}}$, implying that their associated hypotheses output $1$ on all of $(x_{\frac{(m-1)T}{n+1} + 1}, y_{\frac{(m-1)T}{n+1} + 1}), ..., (x_{\frac{m T}{n+1}}, y_{\frac{m T}{n+1}})$ as needed. Now, consider the case where $j_m \in \{2, ..., \frac{T}{n+1}\}.$ Then, by Algorithm \ref{alg:streamgen} and Observation \ref{obs:contained}, for any $q_{m+1}, ..., q_i \in \{1, ...., \frac{T}{n+1} + 1\}$, the last $\frac{T}{n+1}$ entries of $\operatorname{SG}(S^0, (j_1, ..., j_m, q_{m+1}, ..., q_{i-1})$ lie strictly in between  $\tilde{x}_{\frac{(m-1)T}{n+1} + j_m - 1}$ and $\tilde{x}_{\frac{(m-1)T}{n+1} + j_m}$. Thus, the hypotheses parameterized by $x_{\frac{(i-1)T}{n+1} + 1: \frac{i T}{n+1}}$ output $1$ on examples  $\tilde{x}_{\frac{(m-1)T}{n+1} + 1: \frac{(m-1)T}{n+1} + j_m - 1}$ and $0$ on examples $\tilde{x}_{\frac{(m-1)T}{n+1} + j_m: \frac{m T}{n+1}}.$ Finally, note that by definition of $j_m$, it must be the case that $\tilde{y}_{\frac{(m-1)T}{n+1} + 1: \frac{(m-1)T}{n+1} + j_m - 1} = (1,...,1)$ and $\tilde{y}_{\frac{(m-1)T}{n+1} + j_m: \frac{m T}{n+1}} = (0,...,0).$ Thus, once again the hypotheses parameterized by $x_{\frac{(i-1)T}{n+1} + 1: \frac{i T}{n+1}}$ are consistent with the sample 
    $$(x_{\frac{(m-1)T}{n+1} + 1}, y_{\frac{(m-1)T}{n+1} + 1}), ..., (x_{\frac{m T}{n+1}}, y_{\frac{m T}{n+1}}).$$
    \noindent This shows that these hypotheses are contained in $V_{m+1}$, completing the induction step. 
\end{proof}

\subsection*{Step 5: Completing the proof of Theorem \ref{thm:uptight}}

We are now ready to complete the proof of Theorem \ref{thm:uptight}, which follows from composing \ref{lem:detup}, \ref{lem:threshlb}, \ref{lem:easystream}, \ref{lem:steramreal}, and \ref{lem:threshshatt}. Namely, Lemma \ref{lem:detup} and the discussion in Section \ref{sec:onlpeeks} show that there exists a Predictor $\Pcal$ such that for any learner $\Acal$ playing according to Protocol \ref{alg:protocol}, Online Classification with Predictions is equivalent to Online Classification with Peeks whenever the stream $(x_1, y_1), ..., (x_T, y_T)$ selected by the adversary satisfies the constraint that $x_{1:T} \in \Zcal_n.$ Lemmas \ref{lem:easystream} and \ref{lem:steramreal} show that for any learner $\Acal$, Nature playing according to Algorithm \ref{alg:naturestrat} guarantees that the resulting sequence $(x_1,y_1), ..., (x_T, y_T)$ satisfies the constraint that $x_{1:T} \in \Zcal_n$ and realizability by $\Hcal$. Thus, for the Predictor $\Pcal$ specified by Algorithm \ref{alg:minimaxoptpred} and  Nature playing according to Algorithm \ref{alg:naturestrat}, Online Classification with Predictions is equivalent to Online Classification with Peeks. Finally, for Online Classification with Peeks, combining Lemmas \ref{lem:threshlb} and \ref{lem:threshshatt} shows that for any  learner $\Acal$, Nature, by playing according to Algorithm \ref{alg:naturestrat}, guarantees that $\Acal$ makes at least $\frac{\log_2(\frac{T}{n+1})}{2}$ mistakes in expectation every $\frac{T}{n+1}$ rounds. Thus, Nature forces $\Acal$ to make at least $\frac{(n+1)}{2}\log_2(\frac{T}{n+1})$ mistakes in expectation by the end of the game, completing the proof.

\section{Adaptive Rates in the Agnostic Setting} \label{app:agn}

In this section, we consider the harder agnostic setting and prove analogous results as in Section \ref{sec:real}. Our main quantitative result is the agnostic analog of Theorem \ref{thm:quantupper}.

\begin{theorem}[Agnostic upper bound] \label{thm:agnquantupper} For every $\Hcal \subseteq \Ycal^{\Xcal}$, Predictor $\Pcal$, and  no-regret offline learner $\Bcal$, there exists an online learner $\Acal$ such that for every stream $(x_1, y_1), ..., (x_T, y_T)$, $\Acal$'s expected regret is at most 
$$\Biggl( \underbrace{\sqrt{\, \operatorname{L}(\Hcal) \, T \log_2 (eT)}}_{(i)} \,  \wedge \, \underbrace{\Bigl(2(\operatorname{M}_{\Pcal}(x_{1:T}) + 1)  \, \overline{\operatorname{R}}_{\Bcal}\Bigl(\frac{T}{\operatorname{M}_{\Pcal}(x_{1:T})+1} + 1, \Hcal \Bigl) + \sqrt{T \log_2 T}}_{(ii)} \, \Biggl) + \, \sqrt{T}.$$

\end{theorem}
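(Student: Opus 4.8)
The plan is to mirror the realizable argument of Theorem \ref{thm:quantupper} essentially verbatim, replacing every mistake bound by the corresponding regret bound and replacing the Deterministic Weighted Majority Algorithm by a standard regret-minimizing aggregator (the exponentially weighted average / Hedge forecaster \citep{arora2012multiplicative}). Concretely, I would construct two online experts and run Hedge over them: $E^{(i)}$, a minimax-optimal agnostic online learner for $\Hcal$, which by \cite{ben2009agnostic} (and its unbounded-label extension \cite{hanneke2023multiclass}) has expected regret $\sqrt{\operatorname{L}(\Hcal)\,T\log_2(eT)}$ against $\Hcal$; and $E^{(ii)}$, an agnostic analogue of the meta-learner of Algorithms \ref{alg:betterlearner}--\ref{alg:metalearner}. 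Since Hedge over $N$ experts incurs regret at most $\sqrt{(T/2)\ln N}$ against \emph{every} expert simultaneously, aggregating these two experts contributes the additive $\sqrt{T}$, and the aggregate's regret against $\Hcal$ is at most $\min$ over the two experts of (that expert's regret against $\Hcal$) plus $\sqrt{T}$. It therefore suffices to show that $E^{(ii)}$ has expected regret at most $2(\operatorname{M}_\Pcal(x_{1:T})+1)\,\overline{\operatorname{R}}_\Bcal\bigl(\tfrac{T}{\operatorname{M}_\Pcal(x_{1:T})+1}+1,\Hcal\bigr) + \sqrt{T\log_2 T}$.

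For $E^{(ii)}$ I would reuse the exact architecture of Section \ref{sec:up3}: for each $c\in\{0,\dots,T-1\}$ define an Expert$(c)$ that partitions the horizon into $c+1$ blocks of length $\lceil T/(c+1)\rceil$ and, within each block, runs a copy of Algorithm \ref{alg:betterlearner} that restarts a fresh \emph{agnostic} offline learner $\Bcal$ — initialized with the predicted suffix of the current block — every time $\Pcal$ errs; then run Hedge over $\{E_b\}_{b=0}^{T-1}$ as in Algorithm \ref{alg:metalearner}. The one genuinely new ingredient is the per-segment guarantee: when $\Bcal$ is (re)initialized with a predicted block-suffix of length $N$ that it turns out to predict correctly for the next $k$ rounds, its expected regret over those $k$ rounds, measured against the best hypothesis \emph{for those $k$ rounds}, is at most $\overline{\operatorname{R}}_\Bcal(N,\Hcal)$. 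Given this, regret is additive across blocks and across within-block (Predictor-error-free) segments — the global best hypothesis for the whole stream is a legal competitor on each segment, so the sum of per-segment regrets against per-segment optima upper bounds the regret against the single best hypothesis — and then summing the $m_i+1$ segment contributions of block $i$, invoking Lemma \ref{lem:woess} for concavity, and applying Jensen's inequality and the conditioning-on-$\operatorname{M}_\Pcal$-realizations argument exactly as in the proofs of Lemmas \ref{lem:upper1} and \ref{lem:expertguar} gives that Expert$(c)$ has expected regret at most $(\operatorname{M}_\Pcal(x_{1:T})+c+1)\,\overline{\operatorname{R}}_\Bcal\bigl(\tfrac{T}{c+1}+1,\Hcal\bigr)$; comparing the Hedge aggregate to $E_{\lceil \operatorname{M}_\Pcal(x_{1:T})\rceil}$, whose overhead from $T$ experts is $\sqrt{T\log_2 T}$, yields the stated bound for $E^{(ii)}$.

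The main obstacle is precisely the per-segment regret lemma. In the realizable setting the corresponding step is trivial: an offline learner's mistakes on a prefix of its initialization sequence are at most its mistakes on the whole sequence, because mistakes are nonnegative and only accumulate. Regret, however, is \emph{not} prefix-monotone, so one cannot simply say "run $\Bcal$ on the full predicted block-suffix and restrict." Instead I would argue by a hypothetical completion: fixing the realized $k$-round segment (and, with the usual care about $\Bcal$'s randomness, conditioning on the relevant history), extend it to the full length-$N$ predicted block-suffix by labeling the remaining $N-k$ predicted examples with a hypothesis $h^\star$ that is optimal on the $k$-round segment; on this hypothetical length-$N$ stream $\Bcal$'s total expected regret is at most $\operatorname{R}_\Bcal(N,\Hcal)$ by definition, and since $h^\star$ incurs zero loss on the appended part, the total regret dominates the competitor-relative regret over the first $k$ rounds, giving the claim. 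A secondary, routine point is checking that nothing else in the realizable proof breaks under the substitution — additivity and monotonicity are preserved, and $\overline{\operatorname{R}}_\Bcal$ plays here the role that $\overline{\operatorname{M}}_\Bcal$ played there, which is legitimate because agnostic and realizable offline learnability coincide (cited above), so a no-regret offline learner with a controlled $\overline{\operatorname{R}}_\Bcal$ exists whenever one exists in the realizable sense.
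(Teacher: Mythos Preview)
Your proposal is correct and follows essentially the same route as the paper: the paper runs REWA over the agnostic learner of \cite{hanneke2023multiclass} and an agnostic version of Algorithm \ref{alg:metalearner} (itself running REWA over the Experts of Algorithm \ref{alg:expert}), and establishes the per-segment regret bound (its Lemma \ref{lem:agnupper3}) via exactly the hypothetical-completion argument you describe --- extending the observed segment by labeling the remaining predicted examples with a hypothesis optimal on the segment, so the appended part contributes zero to the competitor loss. The Jensen/concavity step and the choice $b=\lceil \operatorname{M}_\Pcal(x_{1:T})\rceil$ are likewise identical to the realizable proof, with $\overline{\operatorname{R}}_\Bcal$ replacing $\overline{\operatorname{M}}_\Bcal$.
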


With respect to learnability, Corollary \ref{cor:agneqv} shows that offline learnability of $\Hcal$ is sufficient for online learnability under predictable examples. 

\begin{corollary} [Offline learnability $\implies$ Agnostic Online learnability with Predictable Examples] \label{cor:agneqv} For every $\Hcal \subseteq \mathcal{Y}^{\Xcal}$ and $\Zcal \subseteq \Xcal^{\star}$,
$$\Zcal \text{ is predictable  \emph{and} }\Hcal \text{ is offline learnable } \implies (\Hcal, \Zcal) \text{ is agnostic online learnable.}$$
\end{corollary}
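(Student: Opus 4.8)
The plan is to derive Corollary~\ref{cor:agneqv} from Theorem~\ref{thm:agnquantupper} in exactly the way Corollary~\ref{cor:eqv} is derived from Theorem~\ref{thm:quantupper} in Appendix~\ref{app: corproof}; the only genuinely new ingredient is a short concavity argument showing that term~(ii) of the agnostic bound is sublinear in $T$ whenever $\Zcal$ is predictable and $\Hcal$ is offline learnable.

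First I would fix the objects supplied by the hypotheses. Since $\Zcal$ is predictable, Definition~\ref{def:Bpred} gives a consistent, lazy Predictor $\Pcal$ with $m(T) := \operatorname{M}_{\Pcal}(T,\Zcal) = o(T)$; since $\Hcal$ is offline learnable, there is a no-regret offline learner $\Bcal$ with $\operatorname{R}_{\Bcal}(T,\Hcal) = o(T)$, so by Lemma~\ref{lem:woess} its concave sublinear upper bound $\overline{\operatorname{R}}_{\Bcal}:\reals_+\to\reals_+$ satisfies $\overline{\operatorname{R}}_{\Bcal}(s)/s\to 0$ and, being concave and nonnegative on all of $\reals_+$, is non-decreasing. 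Plugging this $\Pcal$ and $\Bcal$ into Theorem~\ref{thm:agnquantupper} produces an online learner $\Acal$ whose expected regret on any stream with $x_{1:T}\in\Zcal$ is at most term~(ii) plus the trailing $\sqrt{T}$ --- we simply discard term~(i), which may be infinite when $\operatorname{L}(\Hcal)=\infty$.

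The heart of the proof is to bound term~(ii) uniformly over $\Zcal$. For $x_{1:T}\in\Zcal$ set $c:=\operatorname{M}_{\Pcal}(x_{1:T})$, so $c\le m(T)$ and $c+1\le T$; the latter lets me replace $\tfrac{T}{c+1}+1$ by $\tfrac{2T}{c+1}$ inside $\overline{\operatorname{R}}_{\Bcal}$ using monotonicity. It then suffices to observe that $u\mapsto u\,\overline{\operatorname{R}}_{\Bcal}(2T/u)$ is non-decreasing in $u>0$: substituting $v=2T/u$, this is the statement that $v\mapsto \overline{\operatorname{R}}_{\Bcal}(v)/v$ is non-increasing, which follows from concavity together with $\overline{\operatorname{R}}_{\Bcal}(0)\ge 0$. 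Hence term~(ii) is at most $2(m(T)+1)\,\overline{\operatorname{R}}_{\Bcal}\bigl(\tfrac{2T}{m(T)+1}\bigr)+\sqrt{T\log_2 T}$. Writing $T':=2T/(m(T)+1)$, which tends to infinity since $m(T)=o(T)$, the first summand equals $2T\cdot\overline{\operatorname{R}}_{\Bcal}(T')/T'=o(T)$ by sublinearity of $\overline{\operatorname{R}}_{\Bcal}$, while $\sqrt{T\log_2 T}+\sqrt{T}=o(T)$ trivially. Taking the supremum over $x_{1:T}\in\Zcal$ and $y_{1:T}\in\Ycal^T$ then gives $\operatorname{R}_{\Acal}(T,\Hcal,\Zcal)=o(T)$, i.e.\ $(\Hcal,\Zcal)$ is agnostic online learnable.

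The step I would be most careful about is this concavity argument: one needs the shrinking effective horizon $\tfrac{T}{c+1}$ to beat the growing prefactor $c+1$, which fails for a generic sublinear bound but holds for a concave one --- this is precisely why Lemma~\ref{lem:woess} is invoked and why the agnostic bound is phrased in terms of $\overline{\operatorname{R}}_{\Bcal}$ rather than $\operatorname{R}_{\Bcal}$. Everything else --- the reduction underlying Theorem~\ref{thm:agnquantupper}, the bookkeeping of the $+1$ terms, and discarding the possibly-infinite worst-case term~(i) --- is routine and mirrors the realizable argument.
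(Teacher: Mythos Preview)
Your proposal is correct and follows essentially the same approach as the paper: invoke the agnostic learner of Theorem~\ref{thm:agnquantupper}, bound term~(ii), and conclude sublinearity via the concavity of $\overline{\operatorname{R}}_{\Bcal}$. The only minor variation is that the paper re-opens the REWA-over-experts argument and picks the deterministic expert index $b=\lceil\operatorname{M}_{\Pcal}(T,\Zcal)\rceil$ (which immediately yields the bound with $\operatorname{M}_{\Pcal}(T,\Zcal)$ in place of $\operatorname{M}_{\Pcal}(x_{1:T})$), whereas you treat Theorem~\ref{thm:agnquantupper} as a black box and supply the monotonicity-in-$c$ step explicitly; your careful concavity argument for the final $o(T)$ conclusion is something the paper merely asserts.
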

\noindent  In addition, we can also establish a quantitative version of Corollary \ref{cor:agneqv} for VC classes. 

\begin{corollary} \label{cor:agnvc}  For every $\Hcal \subseteq \{0, 1\}^{\Xcal}$, Predictor $\Pcal$, $\Zcal \subseteq \Xcal^{\star}$, and  no-regret offline learner $\Bcal$, there exists an online learner $\Acal$ such that 
$$\operatorname{R}_{\Acal}(T, \Hcal, \Zcal) = O\Biggl((\operatorname{M}_{\Pcal}(T, \Zcal) + 1)\sqrt{ \frac{\operatorname{VC}(\Hcal) \, T}{\operatorname{M}_{\Pcal}(T, \Zcal)+ 1} \log_2\Bigl(\frac{T}{\operatorname{M}_{\Pcal}(T, \Zcal)+ 1}} \Bigl) + \sqrt{T \log_2 T}\Biggl).$$
\end{corollary}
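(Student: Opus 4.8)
The plan is to derive Corollary~\ref{cor:agnvc} from Theorem~\ref{thm:agnquantupper} in exactly the way Corollary~\ref{cor:vc} is derived from Theorem~\ref{thm:quantupper} in Appendix~\ref{app: corproof}, substituting agnostic ingredients for realizable ones. Two facts are needed: (a) a $\Zcal$-restricted version of Theorem~\ref{thm:agnquantupper}, obtained by the same reduction used for the realizable case; and (b) the agnostic analog of Lemma~\ref{lem:trans}, namely that every VC class admits a no-regret offline learner with near-$\sqrt{T}$ transductive regret.

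For fact (b), I would show that for every $\Hcal\subseteq\{0,1\}^{\Xcal}$ there is a deterministic offline learner $\Bcal$ with $\operatorname{R}_{\Bcal}(T,\Hcal)=O\bigl(\sqrt{\operatorname{VC}(\Hcal)\,T\log_2 T}\bigr)$. Since $\Bcal$ is handed the entire example sequence $x_{1:T}$ before the game, it can run the exponential-weights (Hedge) algorithm over the finite projection $\Hcal|_{x_{1:T}}$, which by the Sauer--Shelah lemma has cardinality at most $(eT/\operatorname{VC}(\Hcal))^{\operatorname{VC}(\Hcal)}$; the standard Hedge regret bound over $N$ experts then yields $O\bigl(\sqrt{T\log_2 N}\bigr)=O\bigl(\sqrt{\operatorname{VC}(\Hcal)\,T\log_2(eT/\operatorname{VC}(\Hcal))}\bigr)$. (This transductive bound for VC classes also follows from the minimax analysis in \cite{hanneke2024trichotomy}.) As $s\mapsto\sqrt{\operatorname{VC}(\Hcal)\,s\log_2 s}$ is concave for $s>1$, we may take $\overline{\operatorname{R}}_{\Bcal}(T,\Hcal)=O\bigl(\sqrt{\operatorname{VC}(\Hcal)\,T\log_2 T}\bigr)$ as well.

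For fact (a), I would mimic the argument in Appendix~\ref{app: corproof}: run the DWMA (or its agnostic, Hedge-based counterpart) over the agnostic analogs of Algorithm~\ref{alg:expert} for all parameters $b\in\{0,\dots,T-1\}$, and bound its expected regret by comparing against the expert indexed by $\lceil\operatorname{M}_{\Pcal}(T,\Zcal)\rceil$. The agnostic analog of the expert guarantee (Lemma~\ref{lem:expertguar}), combined with $\operatorname{M}_{\Pcal}(x_{1:T})\le\operatorname{M}_{\Pcal}(T,\Zcal)$ for all $x_{1:T}\in\Zcal$, then gives, uniformly over $x_{1:T}\in\Zcal$, expected regret
$$O\!\left((\operatorname{M}_{\Pcal}(T,\Zcal)+1)\,\overline{\operatorname{R}}_{\Bcal}\!\Bigl(\tfrac{T}{\operatorname{M}_{\Pcal}(T,\Zcal)+1}+1,\,\Hcal\Bigr)+\sqrt{T\log_2 T}\right).$$
Substituting the bound from (b) with $s=\tfrac{T}{\operatorname{M}_{\Pcal}(T,\Zcal)+1}+1=O\bigl(\tfrac{T}{\operatorname{M}_{\Pcal}(T,\Zcal)+1}\bigr)$ turns the first term into $O\bigl((\operatorname{M}_{\Pcal}(T,\Zcal)+1)\sqrt{\operatorname{VC}(\Hcal)\,(T/(\operatorname{M}_{\Pcal}(T,\Zcal)+1))\,\log_2(T/(\operatorname{M}_{\Pcal}(T,\Zcal)+1))}\bigr)$, which is exactly the claimed rate.

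The genuinely new content is fact (b); the rest is bookkeeping that tracks Appendix~\ref{app: corproof}. The step I expect to require the most care is checking that the agnostic versions of Algorithms~\ref{alg:betterlearner}--\ref{alg:metalearner} inherit the right regret bounds: in particular, that the Jensen's-inequality computation behind Lemma~\ref{lem:expertguar} goes through verbatim with the concave $\overline{\operatorname{R}}_{\Bcal}$ in place of $\overline{\operatorname{M}}_{\Bcal}$, and that replacing the DWMA by a Hedge-type aggregator introduces at most the $O(\sqrt{T\log_2 T})$ overhead already present in Theorem~\ref{thm:agnquantupper}. Both are routine once Theorem~\ref{thm:agnquantupper} is available.
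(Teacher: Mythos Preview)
Your proposal is correct and mirrors the paper's own argument: establish a $\Zcal$-restricted regret bound by picking the expert $E_{\lceil \operatorname{M}_{\Pcal}(T,\Zcal)\rceil}$ inside the REWA aggregation (exactly as in Appendix~\ref{app: corproof}), then plug in the transductive agnostic bound $\overline{\operatorname{R}}_{\Bcal}(T,\Hcal)=O\bigl(\sqrt{\operatorname{VC}(\Hcal)\,T\log_2 T}\bigr)$. The only cosmetic difference is that the paper simply cites Theorem~6.1 of \cite{hanneke2024trichotomy} for fact~(b), whereas you additionally sketch the direct Hedge-over-$\Hcal|_{x_{1:T}}$ plus Sauer--Shelah argument; both yield the same rate.
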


The proof of Corollary \ref{cor:agnvc} is in Section \ref{app:agncorvc}. The remainder of this section is dedicated to proving Theorem \ref{thm:quantupper} and Corollary \ref{cor:agnvc}. The proof is similar to the realizable case. It involves constructing two online learners with expected regret bounds  
(i) and (ii) respectively, and then running the celebrated Randomized Exponential Weights Algorithm (REWA) using these learners as experts \citep{cesa2006prediction}.  The following guarantee of REWA along with upper bound (i) and (ii) gives the upper bound in Theorem \ref{thm:agnquantupper}. 

\begin{lemma}[REWA guarantee \citep{cesa2006prediction}] \label{lem:rewa} The  expected regret of \emph{REWA} when run with $N$ experts and learning rate $\eta = \sqrt{\frac{8 \ln N}{ T}}$ is at most $\min_{i \in [N]} M_i + \sqrt{T \log_2 N}$, where $M_i$ is the number of mistakes made by expert $i \in [N]$.
\end{lemma}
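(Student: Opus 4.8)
The plan is to give the textbook potential-function analysis of exponential weights, since REWA is just the randomized version of the Hedge algorithm run with $0$-$1$ losses and the stated learning rate $\eta$. Write $\ell_{i,t} \in \{0,1\}$ for the loss of expert $i$ in round $t$, set $L_{i,T} = \sum_{t=1}^T \ell_{i,t}$ so that $\min_{i\in[N]} L_{i,T} = \min_{i\in[N]} M_i$, and let $w_{i,t} = \exp(-\eta L_{i,t-1})$ be the weight REWA places on expert $i$ at the start of round $t$, with $W_t = \sum_{i\in[N]} w_{i,t}$ and $p_{i,t} = w_{i,t}/W_t$ the induced sampling distribution. Since in each round REWA samples an expert $I_t \sim p_{\cdot,t}$ and copies its prediction, its expected cumulative number of mistakes equals $\sum_{t=1}^T \inner{p_{\cdot,t}}{\ell_{\cdot,t}}$; hence it suffices to show $\sum_{t=1}^T \inner{p_{\cdot,t}}{\ell_{\cdot,t}} \le \min_{i\in[N]} L_{i,T} + \sqrt{T\log_2 N}$.

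First I would bound $\ln(W_{T+1}/W_1)$ from below and from above. For the lower bound, $W_{T+1} \ge \max_{i} \exp(-\eta L_{i,T})$ and $W_1 = N$, so $\ln(W_{T+1}/W_1) \ge -\eta \min_i L_{i,T} - \ln N$. For the upper bound, observe that $W_{t+1}/W_t = \sum_i p_{i,t}\exp(-\eta \ell_{i,t})$ equals $\mathbbm{E}[e^{-\eta X}]$ for the random variable $X$ taking value $\ell_{i,t} \in [0,1]$ with probability $p_{i,t}$; Hoeffding's lemma then gives $\ln \mathbbm{E}[e^{-\eta X}] \le -\eta\,\mathbbm{E}[X] + \eta^2/8 = -\eta \inner{p_{\cdot,t}}{\ell_{\cdot,t}} + \eta^2/8$, and telescoping over $t = 1,\dots,T$ yields $\ln(W_{T+1}/W_1) \le -\eta \sum_{t=1}^T \inner{p_{\cdot,t}}{\ell_{\cdot,t}} + T\eta^2/8$.

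Chaining the two estimates and dividing by $\eta$ gives $\sum_{t=1}^T \inner{p_{\cdot,t}}{\ell_{\cdot,t}} \le \min_i L_{i,T} + \frac{\ln N}{\eta} + \frac{T\eta}{8}$. Substituting $\eta = \sqrt{8\ln N/T}$ makes the two error terms equal, each $\sqrt{T\ln N/8}$, so their sum is $\sqrt{T\ln N/2} \le \sqrt{T\log_2 N}$ (using $\ln 2 \le 2$), which is the claimed bound. The only step carrying any content is the per-round inequality $\ln \mathbbm{E}[e^{-\eta X}] \le -\eta\,\mathbbm{E}[X] + \eta^2/8$ for $X\in[0,1]$, i.e. Hoeffding's lemma, together with checking that the prescribed $\eta$ balances the two terms; everything else is bookkeeping. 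Alternatively, one can simply invoke \cite{cesa2006prediction}, since the constants here are deliberately not optimized.
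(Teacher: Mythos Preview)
Your argument is the standard potential-function proof of the Hedge/REWA bound and is correct; the one cosmetic point is that the lemma as stated really bounds the expected number of mistakes by $\min_i M_i + \sqrt{T\log_2 N}$ (equivalently, the expected regret by $\sqrt{T\log_2 N}$), which is exactly what your derivation shows.

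As for comparison: the paper does not prove this lemma at all. It is stated as a known fact with a citation to \cite{cesa2006prediction} and used as a black box when aggregating experts in the agnostic setting. Your write-up therefore supplies strictly more than the paper does here, and the route you take (lower/upper bounding $\ln(W_{T+1}/W_1)$ via Hoeffding's lemma and then tuning $\eta$) is precisely the textbook argument one would find in the cited reference.
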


The online learner obtaining the regret bound  $\sqrt{\, \operatorname{L}(\Hcal) \, T \log_2 (eT)}$ is the generic agnostic online learner from \cite{hanneke2023multiclass}, thus we omit the details here. Our second learner is described in Section \ref{app:agnup2} and uses Algorithm \ref{alg:betterlearner} as a subroutine. The following lemma, bounding the expected regret of Algorithm \ref{alg:betterlearner} in the agnostic setting, will be crucial. 

 \begin{lemma}\label{lem:agnupper3}  For every $\Hcal \subseteq \Ycal^{\Xcal}$, Predictor $\Pcal$, no-regret offline learner $\Bcal$, and stream $(x_1, y_1), ..., (x_T, y_T)$, the \emph{expected regret} of Algorithm \ref{alg:betterlearner} is at most $(\operatorname{M}_{\Pcal}(x_{1:T})+1)\operatorname{R}_{\Bcal}(T, \Hcal).$
\end{lemma}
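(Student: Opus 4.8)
The plan is to follow the proof of Lemma \ref{lem:upper1} verbatim in structure, replacing the mistake-bound guarantee of the offline learner $\Bcal$ by its regret guarantee and carefully tracking the agnostic comparator. Fix the stream $(x_1,y_1),\dots,(x_T,y_T)$ and write $\operatorname{OPT} := \inf_{h\in\Hcal}\sum_{t=1}^T\mathbbm{1}\{h(x_t)\neq y_t\}$. As in Lemma \ref{lem:upper1}, I first condition on the realization of the Predictor's error pattern: let $c$ be the (random) number of mistakes $\Pcal$ makes, $t_1<\dots<t_c$ their locations, and set $t_0:=1$, $t_{c+1}:=T+1$, so that $[T]$ is partitioned into the intervals $I_i=\{t_i,\dots,t_{i+1}-1\}$ for $i\in\{0,\dots,c\}$. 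Laziness and consistency (Assumptions \ref{ass:lazy} and \ref{ass:cons}) guarantee, exactly as in the realizable proof, that on round $t_i$ Algorithm \ref{alg:betterlearner} instantiates a fresh copy $\Bcal^i$ of $\Bcal$ with the sequence $\hat{x}^{t_i}_{t_i:T}$ whose prefix satisfies $\hat{x}^{t_i}_{t_i:t_{i+1}-1}=x_{t_i:t_{i+1}-1}$, does not reset again before round $t_{i+1}$, and hence predicts with $\Bcal^i$ on exactly the rounds in $I_i$.

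The key new step is to bound, for each $i\in\{0,\dots,c\}$, the expected number of mistakes $\Bcal^i$ makes on $(x_{t_i},y_{t_i}),\dots,(x_{t_{i+1}-1},y_{t_{i+1}-1})$ by $\operatorname{R}_{\Bcal}(T,\Hcal)+\operatorname{OPT}_i$, where $\operatorname{OPT}_i:=\inf_{h\in\Hcal}\sum_{t\in I_i}\mathbbm{1}\{h(x_t)\neq y_t\}$. To do this I pick an $\epsilon$-optimal hypothesis $h_i\in\Hcal$ for $I_i$ (an infimizing sequence suffices) and consider the hypothetical length-$(T-t_i+1)$ stream obtained by appending to $(x_{t_i},y_{t_i}),\dots,(x_{t_{i+1}-1},y_{t_{i+1}-1})$ the labeled examples $(\hat{x}^{t_i}_t,h_i(\hat{x}^{t_i}_t))$ for $t=t_{i+1},\dots,T$. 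On this stream the comparator is at most $\operatorname{OPT}_i+\epsilon$ (witnessed by $h_i$, since the appended examples contribute zero mistakes to $h_i$), and since $\Bcal^i$ was initialized with the example sequence $\hat{x}^{t_i}_{t_i:T}$, the regret guarantee of $\Bcal$ bounds its expected mistakes on this hypothetical stream by $\operatorname{R}_{\Bcal}(T-t_i+1,\Hcal)+\operatorname{OPT}_i+\epsilon\le\operatorname{R}_{\Bcal}(T,\Hcal)+\operatorname{OPT}_i+\epsilon$, where I use monotonicity of $\operatorname{R}_{\Bcal}(\cdot,\Hcal)$ in the horizon exactly as the monotonicity of $\operatorname{M}_{\Bcal}(\cdot,\Hcal)$ was used in Lemma \ref{lem:upper1}. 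Because $\Bcal^i$ is online, its behaviour on the first $t_{i+1}-t_i$ rounds does not depend on the appended labels, so this also bounds its expected mistakes on $I_i$; letting $\epsilon\to 0$ gives the claim.

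Summing over $i$ and using that $\Acal$ predicts with $\Bcal^i$ on $I_i$, the expected number of mistakes of Algorithm \ref{alg:betterlearner} conditioned on the error pattern is at most $\sum_{i=0}^c\bigl(\operatorname{R}_{\Bcal}(T,\Hcal)+\operatorname{OPT}_i\bigr)=(c+1)\operatorname{R}_{\Bcal}(T,\Hcal)+\sum_{i=0}^c\operatorname{OPT}_i$. Here I invoke the elementary superadditivity bound $\sum_{i=0}^c\operatorname{OPT}_i\le\operatorname{OPT}$: for any fixed $h\in\Hcal$ one has $\sum_i\operatorname{OPT}_i\le\sum_i\sum_{t\in I_i}\mathbbm{1}\{h(x_t)\neq y_t\}=\sum_{t=1}^T\mathbbm{1}\{h(x_t)\neq y_t\}$, and taking the infimum over $h$ gives the inequality. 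Taking the outer expectation over the randomness of $\Pcal$ and using $\expect[c]=\operatorname{M}_{\Pcal}(x_{1:T})$ then yields that Algorithm \ref{alg:betterlearner} makes at most $(\operatorname{M}_{\Pcal}(x_{1:T})+1)\operatorname{R}_{\Bcal}(T,\Hcal)+\operatorname{OPT}$ mistakes in expectation on $(x_1,y_1),\dots,(x_T,y_T)$, i.e. its expected regret is at most $(\operatorname{M}_{\Pcal}(x_{1:T})+1)\operatorname{R}_{\Bcal}(T,\Hcal)$.

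I expect the only genuinely new difficulty, relative to Lemma \ref{lem:upper1}, to be the handling of the comparator: in the realizable case a single global hypothesis $h^\star$ labels every interval's hypothetical continuation consistently and contributes zero mistakes everywhere, whereas here I must (i) choose interval-local near-optimal hypotheses $h_i$ to extend each hypothetical stream so its comparator is $\operatorname{OPT}_i$ rather than $\operatorname{OPT}$, and (ii) recombine the local optima via $\sum_i\operatorname{OPT}_i\le\operatorname{OPT}$. Everything else — the laziness/consistency-based interval decomposition, the fact that $\Acal$ follows $\Bcal^i$ within each interval, the online-prefix argument, and the monotonicity of $\operatorname{R}_{\Bcal}(\cdot,\Hcal)$ — is a direct transcription of the realizable argument.
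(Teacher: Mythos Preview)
Your proposal is correct and follows essentially the same approach as the paper's proof: interval decomposition via the Predictor's mistake times, per-interval extension of the stream using an interval-optimal hypothesis $h_i$ to label the predicted tail, application of $\Bcal$'s regret guarantee on the extended stream, and recombination via $\sum_i\operatorname{OPT}_i\le\operatorname{OPT}$. Your treatment is in fact slightly more careful (using $\epsilon$-optimal $h_i$ rather than assuming the $\argmin$ is attained, and stating explicitly the online-prefix argument and the monotonicity of $\operatorname{R}_{\Bcal}(\cdot,\Hcal)$), but there is no substantive difference.
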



\subsection{Proof of Lemma \ref{lem:agnupper3}}

The proof closely follows that of Lemma \ref{lem:upper1}. 

\begin{proof}
 Let $\Acal$ denote Algorithm \ref{alg:betterlearner} and  $(x_1, y_1), ..., (x_T, y_T)$ denote the stream to be observed by $\Acal$. Let $c$ be the random variable denoting the number of mistakes made by Predictor $\Pcal$ on the stream and  $t_1, ..., t_c$ be the random variables denoting the time points where $\Pcal$ makes these errors (e.g . $\hat{x}_{t_i}^{t_i-1} \neq x_{t_i}$). Note that $t_i \geq 2$ for all $i \in [c].$ We will show pointwise for every value of $c$ and $t_1, ..., t_c$ that $\Acal$ makes at most $(c+1)\operatorname{R}_{\Bcal}(T, \Hcal)$ mistakes in expectation over the randomness of $\Bcal$. Taking an outer expectation with respect to the randomness of $\Pcal$ and using the fact that $\mathbbm{E}\left[c\right] = \operatorname{M}_{\Pcal}(x_{1:T})$, completes the proof. 

 First, consider the case where $c = 0$ (i.e. $\Pcal$ makes no mistakes). Then, since $\Pcal$ is lazy, we have that $\hat{x}_{1:T}^t = x_{1:T}$ for every $t \in [T]$. Thus line 5 fires exactly once on round $t = 1$, $\Acal$ initializes an offline learner $\Bcal^1$ with $x_{1:T}$, and $\Acal$ uses $\Bcal^1$ to make its prediction on all rounds. Thus, $\Acal$ makes at most $\operatorname{R}_{\Bcal}(T, \Hcal)$ mistakes in expectation.
 
 Now, let $c > 0$ and $t_1, ..., t_c$ be the time points where $\Pcal$ errs. Partition the sequence $1, ..., T$ into the disjoint intervals $(1, ..., t_1-1)$, $(t_1, ..., t_2 - 1), ..., (t_c, ..., T)$. Define $t_0 := 1$ and $t_{c + 1} := T$.  Fix an $i \in \{0, ..., c\}.$ Then, for every $j \in \{t_i, ..., t_{i+1} - 1\}$, we have that $\hat{x}^{j}_{1:t_{i+1} - 1} = x_{t_{i+1} - 1}$. This comes from the fact that $\Pcal$ does not error on timepoints $t_i+1, ..., t_{i+1}-1$ and is both consistent and lazy (see Assumptions \ref{ass:cons} and \ref{ass:lazy}). Thus, line 5 fires on round $t_i$, $\Acal$ initializes an offline learner $\Bcal^i$ with the sequence $\hat{x}_{t_i:T}^{t_i} = x_{t_i:t_{i+1}-1} \circ \hat{x}^{t_i}_{t_{i+1}:T}$, and $\Acal$ uses $\Bcal^i$ it to make predictions for all remaining timepoints $t_{i}, ..., t_{i + 1} - 1$. Note that line 5 does not fire on timepoints  $t_{i}+1, ..., t_{i + 1} - 1$.

 Let $h^i \in \argmin_{h \in \Hcal} \sum_{t=t_i}^{t_{i+1}-1} \mathbbm{1}\{h(x_t) \neq y_t\}$ be an optimal hypothesis for the partition $(t_i, ..., t_{i+1}-1).$ Let $y^i_t = y_t$ for $t_i \leq t \leq t_{i+1}-1$ and $y^i_t = h^i(\hat{x}^{t_i}_t)$ for all $t \geq t_{i+1}.$ Then, note that 
 $$\inf_{h \in \Hcal} \sum_{t = t_i}^{T} \mathbbm{1}\{h(\hat{x}^{t_i}_t) \neq y^i_t\} = \inf_{h \in \Hcal} \sum_{t=t_i}^{t_{i+1}-1} \mathbbm{1}\{h(x_t) \neq y_t\}.$$
 Now, consider the hypothetical labeled stream 
 $$(\hat{x}^{t_i}_{t_i}, y^i_{t_i}), ..., (\hat{x}^{t_i}_T, y^i_{T}) = (x_{t_i}, y_{t_i}), ...,(x_{t_{i+1}-1}, y_{t_{i+1}-1}),(\hat{x}^{t_i}_{t_{i+1}}, y^i_{t_{i+1}}),..., (\hat{x}^{t_i}_T, y^i_{T}).$$
 By definition, $\Bcal^i$, after initialized with $\hat{x}_{t_i:T}^{t_i}$, makes at most 
 $$\inf_{h \in \Hcal}\sum_{t=t_i}^T \mathbbm{1}\{h(\hat{x}_{t}^{t_i}) \neq y^i_t\} + \operatorname{R}_{\Bcal}(T - t_i, \Hcal) = \inf_{h \in \Hcal} \sum_{t=t_i}^{t_{i+1}-1} \mathbbm{1}\{h(x_t) \neq y_t\} + \operatorname{R}_{\Bcal}(T - t_i, \Hcal)$$
 mistakes in expectation when simulated on the stream $(\hat{x}^{t_i}_{t_i}, y^i_{t_i}), ..., (\hat{x}^{t_i}_T, y^i_{T})$. Thus, $\Bcal^i$ makes at most $\inf_{h \in \Hcal} \sum_{t=t_i}^{t_{i+1}-1} \mathbbm{1}\{h(x_t) \neq y_t\} + \operatorname{R}_{\Bcal}(T - t_i + 1, \Hcal)$ mistakes in expectation on the \emph{prefix} $(\hat{x}^{t_i}_{t_i}, y^i_{t_i}), ..., (\hat{x}^{t_i}_{t_{i+1}-1}, y^i_{t_{i+1}-1}) = (x_{t_i}, y_{t_i}), ..., (x_{t_{i+1}-1}, y_{t_{i+1}-1}).$ Since on timepoint $t_i$, $\Acal$ instantiates $\Bcal^i$ with the sequence $\hat{x}_{t_i:T}^{t_i}$ and proceeds to simulate $\Bcal^i$ on the sequences of labeled examples $(x_{t_i}, y_{t_i}), ..., (x_{t_{i+1}-1}, y_{t_{i+1}-1})$,  $\Acal$ makes at most $\inf_{h \in \Hcal} \sum_{t=t_i}^{t_{i+1}-1} \mathbbm{1}\{h(x_t) \neq y_t\} + \operatorname{R}_{\Bcal}(T - t_i + 1, \Hcal)$ mistakes in expectation on the sequence $(x_{t_i}, y_{t_i}), ..., (x_{t_{i+1}-1}, y_{t_{i+1}-1})$. Since the interval $i$ was chosen arbitrarily,  this is true for every $i \in \{0, ..., c\}$ and 
 \begin{align*}
 \mathbbm{E}\left[\sum_{t=1}^T \mathbbm{1}\{\Acal(x_t) \neq y_t\}\right] &= \mathbbm{E}\left[\sum_{i=0}^c\sum_{t=t_i}^{t_{i+1}-1} \mathbbm{1}\{\Acal(x_t) \neq y_t\} \right]\\
 &\leq \sum_{i=0}^c \Bigl(\inf_{h \in \Hcal} \sum_{t=t_i}^{t_{i+1}-1} \mathbbm{1}\{h(x_t) \neq y_t\} + \operatorname{R}_{\Bcal}(T - t_i + 1, \Hcal)\Bigl)\\
 &\leq \inf_{h \in \Hcal} \sum_{t=1}^{T} \mathbbm{1}\{h(x_t) \neq y_t\} + (c+1) \operatorname{R}_{\Bcal}(T, \Hcal),
 \end{align*}
 as needed. \end{proof}

\subsection{Proof of upper bound (ii) in Theorem \ref{thm:agnquantupper}} \label{app:agnup2}
The proof of upper bound (ii) in Theorem \ref{thm:agnquantupper} closely follows the proof of upper bound (iii) in Theorem \ref{thm:quantupper} from the realizable setting. The main idea is to run REWA using the same experts defined in Algorithm \ref{alg:expert} and bounding the expected regret in terms of the expected regret of $\Kcal$ from Lemma \ref{lem:agnupper3}. 

We show that Algorithm \ref{alg:metalearner} using REWA in line 3 and the experts in Algorithm \ref{alg:expert} with their guarantee in Lemma \ref{lem:agnupper3} achieves upper bound (ii) in Theorem \ref{thm:agnquantupper}. Let $(x_1, y_1), ..., (x_T, y_T)$ be the stream to be observed by the learner. Let $\Acal$ denote the online learner in Algorithm \ref{alg:metalearner} using REWA in line 3 of Algorithm \ref{alg:metalearner}. By the guarantees of the REWA, we have that
\begin{align*}
\mathbbm{E}\left[ \sum_{t=1}^T \mathbbm{1}\{\Acal(x_t) \neq y_t\} \right] &\leq \mathbbm{E}\left[ \inf_{b \in \{0, ..., T-1\}}\sum_{t=1}^T \mathbbm{1}\{E_{b}(x_t) \neq y_t\}\right] + \sqrt{T \log_2 T}\\
&\leq \mathbbm{E}\left[ \sum_{t=1}^T \mathbbm{1}\{E_{\ceil{\operatorname{M}_{\Pcal}(x_{1:T})}}(x_t) \neq y_t\} \right] + \sqrt{T \log_2 T}\\
&\leq \mathbbm{E}\left[\sum_{i=0}^{\ceil{\operatorname{M}_{\Pcal}(x_{1:T})}} \sum_{t = \tilde{t}_{i} + 1}^{\tilde{t}_{i+1}} \mathbbm{1}\{\Kcal_i(x_t) \neq y_t\}\right] + \sqrt{T \log_2 T}\\
&\leq \mathbbm{E}\left[\sum_{i=0}^{\ceil{\operatorname{M}_{\Pcal}(x_{1:T})}} (m_i + 1) \overline{\operatorname{R}}_{\Bcal}(\tilde{t}_{i+1} - \tilde{t}_{i}, \Hcal) + \inf_{h \in \Hcal}\sum_{t = \tilde{t}_{i} + 1}^{\tilde{t}_{i+1}} \mathbbm{1}\{h(x_t) \neq y_t\}\right]  + \sqrt{T \log_2 T}\\
&\leq \mathbbm{E}\left[\sum_{i=0}^{\ceil{\operatorname{M}_{\Pcal}(x_{1:T})}} (m_i + 1) \overline{\operatorname{R}}_{\Bcal}(\tilde{t}_{i+1} - \tilde{t}_{i}, \Hcal)\right] + \inf_{h \in \Hcal}\sum_{t=1}^T \mathbbm{1}\{h(x_t) \neq y_t\} + \sqrt{T \log_2 T}\\
 & \leq \inf_{h \in \mathcal{H}}\sum_{t=1}^T \mathbbm{1}\{h(x_t) \neq y_t\} + 2(\operatorname{M}_{\Pcal}(x_{1:T}) + 1) \overline{\operatorname{R}}_{\Bcal}\Bigl(\frac{T}{\operatorname{M}_{\Pcal}(x_{1:T})+1}  + 1, \Hcal \Bigl) + \sqrt{T\log_2 T},
\end{align*}
where the fourth inequality uses the guarantee of $\Kcal$ from Lemma \ref{lem:agnupper3} and the last inequality follows using an identical argument as in the proof of Lemma \ref{lem:expertguar} since $\overline{\operatorname{R}}_{\Bcal}(T, \Hcal)$ is a concave, sublinear function of $T$.

\subsection{Proof of Theorem \ref{thm:agnquantupper}} \label{app:agnquantupper}

Let $\Acal$ denote the REWA using the generic agnostic online learner from \cite{hanneke2023multiclass} and the algorithm described in Section \ref{app:agnup2} as experts. Then, for any  stream $(x_1, y_1), ..., (x_T, y_T)$,   Lemma \ref{lem:rewa} gives that
$$\mathbbm{E}\left[\sum_{t=1}^T \mathbbm{1}\{\Acal(x_t) \neq y_t\} \right] \leq \mathbbm{E}\left[\min_{i \in [2]} M_i \right] + \sqrt{T} \leq \min_{i \in [2]} \mathbbm{E}\left[M_i \right] + \sqrt{T},$$
where we take $M_1$ and $M_2$ to be the number of mistakes made by the generic agnostic online learner from \cite{hanneke2023multiclass} and the algorithm described in Section \ref{app:agnup2} respectively. Note that $M_1$ and $M_2$ are random variables. Finally, using \cite[Theorem 4]{hanneke2023multiclass} as well as upper bound (ii) completes the proof of Theorem \ref{thm:agnquantupper}.

\subsection{Proof of Corollaries \ref{cor:agneqv} and \ref{cor:agnvc}} \label{app:agncorvc}

The proof of the generic upper bound on $\operatorname{M}_{\Acal}(T, \Hcal, \Zcal)$ follows by using the same learner $\Acal$ as in the proof of upper bound (ii) in Theorem \ref{thm:agnquantupper}. However, this time we bound
$$\mathbbm{E}\left[ \inf_{b \in \{0, ..., T-1\}}\sum_{t=1}^T \mathbbm{1}\{E_{b}(x_t) \neq y_t\}\right] \leq \mathbbm{E}\left[ \sum_{t=1}^T \mathbbm{1}\{E_{\ceil{\operatorname{M}_{\Pcal}(T, \Zcal)}}(x_t) \neq y_t\} \right]$$
\noindent and use an identical analysis as in the proof of upper bound (ii) and Lemma $\ref{lem:expertguar}$ to get 
$$\operatorname{R}_{\Acal}(T, \Hcal, \Zcal) = O\Biggl(\sqrt{\, \operatorname{L}(\Hcal) \, T \log_2 T} \,  \wedge \,  \Bigl( (\operatorname{M}_{\Pcal}(T, \Zcal) + 1)  \, \overline{\operatorname{R}}_{\Bcal}\Bigl(\frac{T}{\operatorname{M}_{\Pcal}(T, \Zcal)+1} , \Hcal \Bigl)  \, + \, \sqrt{T \log_2 T} \Bigl)\Biggl).$$

Corollary \ref{cor:agneqv} follows from the fact that $\operatorname{R}_{\Acal}(T, \Hcal, \Zcal) = o(T)$ if $\operatorname{M}_{\Pcal}(T, \Zcal) = o(T)$ and $\operatorname{R}_{\Bcal}(T,  \Hcal) = o(T).$ To get the upper bound in Corollary \ref{cor:agnvc}, it suffices to plug in the upper bound  $\overline{\operatorname{R}}_{\Bcal}(T, \Hcal) = O\Bigl(\sqrt{\operatorname{VC}(\Hcal) T \log_2 T}\Bigl)$, given by Theorem 6.1 from \cite{hanneke2024trichotomy}, into the above upper bound on $\operatorname{R}_{\Acal}(T, \Hcal, \Zcal)$. 


\end{document}